\newcommand{\vvspace}[1]{\vspace{#1}}
\newcommand{\paper}{paper}
\newcommand{\ndh}{r}
\newcommand{\R}{\mathbb{R}}
\newcommand{\W}{\mathbf{W}}
\newcommand{\A}{\mathbf{A}}
\newcommand{\eb}{\mathbf{e}}
\newcommand{\E}{\mathbf{E}}
\newcommand{\cls}{``[CLS]'' }
\newcommand{\vct}[1]{\bm{#1}}
\newcommand{\mtx}[1]{\bm{#1}}
\newcommand{\fcn}{\text{FCN}\xspace}
\newcommand{\WV}{{\mtx{W}_{V}}}
\newcommand{\WK}{{\mtx{W}_{K}}}
\newcommand{\WQ}{{\mtx{W}_{Q}}}
\newcommand{\WO}{{\mtx{W}_{O}}}
\newcommand{\WD}{{\mtx{W}_{D}}}
\newcommand{\X}{{\mtx{X}}}
\newcommand{\M}{\mtx{M}}
\newcommand{\de}{{d}}
\newcommand{\dv}{{d_v}}
\newcommand{\dout}{{d_\text{out}}}
\newcommand{\dhead}{{d_\text{h}}}
\newcommand{\e}{{\vct{e}}}
\newcommand{\z}{\vct{z}}
\newcommand{\p}{\vct{p}}
\newcommand{\y}{\vct{y}}
\newcommand{\alphab}{\vct{\alpha}}
\newcommand{\betab}{\vct{\beta}}
\newcommand{\thetab}{\vct{\theta}}
\newcommand{\Thetab}{\mtx{\Theta}}
\newcommand{\ts}{{(t)}}
\DeclareMathOperator{\rank}{rank}
\DeclareMathOperator{\diag}{diag}
\DeclareMathOperator{\fspan}{span}
\DeclareMathOperator{\pos}{pos}
\DeclareMathOperator{\sft}{Softmax}
\DeclareMathOperator{\relu}{ReLU}
\newcommand{\norm}[1]{\left\lVert #1 \right\rVert}
\newcommand\undermat[2]{%
  \makebox[0pt][l]{$\smash{\underbrace{\phantom{%
    \begin{matrix}#2\end{matrix}}}_{\text{$#1$}}}$}#2}
\newcommand{\x}{\vct{x}}
\newcommand{\eg}{\textit{e.g.}\xspace}
\newcommand{\ie}{\textit{i.e.}\xspace}
\newcommand{\iid}{\textit{iid}\xspace}
\newcommand{\st}{\textit{s.t.}\xspace}
\newcommand{\Tcal}{\mathcal{T}}
\newcommand{\Wcalb}{{\bm{\mathcal{W}}}}
\newcommand{\Rbb}{\mathbb{R}}
\newcommand{\Rcal}{\mathcal{R}}
\newcommand{\Acal}{\mathcal{A}}
\newcommand{\Ncal}{\mathcal{N}}
\newcommand{\Scal}{\mathcal{S}}
\newcommand{\Ucal}{\mathcal{U}}
\newcommand{\bb}{{\vct{b}}}
\newcommand{\cb}{{\vct{c}}}
\newcommand{\ob}{{\vct{o}}}
\newcommand{\s}{{\vct{s}}}
\newcommand{\zb}{{\vct{z}}}
\newcommand{\Ab}{\bm{A}}
\newcommand{\Bb}{\bm{B}}
\newcommand{\Cb}{\bm{C}}
\newcommand{\Eb}{\bm{E}}
\newcommand{\Wb}{\bm{W}}
\newcommand{\Zb}{\bm{Z}}
\newcommand{\Mb}{\bm{M}}
\newcommand{\Iden}{\mtx{I}}
\newcommand{\ones}{\bm{1}}
\newcommand{\zeros}{\bm{0}}
\newtheorem{definition}{Definition}
\newtheorem{theorem}{Theorem}
\newtheorem{lemma}{Lemma}
\newtheorem{proposition}{Proposition}
\newtheorem{assumption}{Assumption}
\newtheorem{claim}{Claim}
\newtheorem{remark}{Remark}
\title{Memorization Capacity of Multi-Head Attention in Transformers}
\author{Sadegh Mahdavi\textsuperscript{1,2}, Renjie Liao\textsuperscript{1,2}, Christos Thrampoulidis\textsuperscript{1}\\
  \textsuperscript{1}University of British Columbia\\
  \textsuperscript{2}Vector Institute for AI \\
  \texttt{\{smahdavi,rjliao,cthrampo\}@ece.ubc.ca} \\
}
\begin{document}

\maketitle
\begin{abstract}
Transformers have become the go-to architecture for language and vision tasks, yet their theoretical properties, especially memorization capacity, remain elusive. This paper investigates the memorization abilities of multi-head attention mechanisms, examining how many example sequences they can memorize, as a function of the number of heads and sequence length. Motivated by experimental findings on vision transformers, we introduce novel assumptions about the linear independence of input data, distinct from the commonly used general-position assumption. Under these assumptions, we demonstrate that an attention layer with $H$ heads, dimension $d$, and context size $n < d,$ featuring $\Theta(Hd^2)$ parameters, can memorize $\Omega(Hn)$ examples. Our analysis sheds light on how different attention heads handle various example sequences, aided by the softmax operator's saturation property. We validate our findings through experiments on synthetic data.
\end{abstract}

\section{Introduction}
\vvspace{-.05in}
Transformers have emerged as the de-facto architecture for achieving state-of-the-art performance across different domains such as natural language processing and vision tasks, as demonstrated by recent breakthroughs, \eg \citep{dehghani2023scaling, touvron2023llama, OpenAI2023GPT4TR}. These models are characterized by their immense scales, often comprising billions of parameters to achieve their high performances. Given their sizes and complexities, a natural question arises: How effectively can they memorize the training data? This query is important both from a privacy perspective~\citep{Carlini2020ExtractingTD}, and as a stepping stone towards quantifying the model's ability to generalize to new data~\citep{zhang2017understanding}. Additionally, studying how the memorization of transformers emerges can offer insights into the distinct roles played by their different building blocks.

Concretely, the memorization capacity of a model $f_\Wcalb$ refers to the minimal set of parameters $\Wcalb$ such that $f_\Wcalb(\x)=\y$ holds for all input-output pairs $(\x, \y)$ in the training dataset.\footnote{The definition of memorization in this context is distinct from concepts such as "associative memory"~\cite{demircigil2017model, Ramsauer2020HopfieldNI} and "neural memory"~\cite{sukhbaatar2015end, Geva2020TransformerFL} which involve associating an input pattern with a given query and have been recently studied for transformers.}
Historically, this question has been extensively analyzed for fully connected networks (FCNs), dating back to at least the 1980s when \citet{BAUM1988193} proved the first set of results for \emph{single-hidden-layer} Threshold networks with binary labels.
The best known bound on the memorization capacity of single-hidden-layer ReLU FCNs has only recently been established by \citet{bubeck2020network}, following numerous recent studies, \eg ~\citep{zhang2017understanding, NEURIPS2019_dbea3d0e}.
Recent studies have also extensively considered extensions to deeper models, \eg~\citep{vershynin2020memory,park2021provable,vardi2022on, NEURIPS2019_dbea3d0e}.
However, as of now, the best known results consist of order-wise bounds, involving unknown constants and logarithmic factors.
The majority of these works on memorization of FCNs  are established for continuous inputs assumed in ``General Position,''  a relatively mild assumption, as it holds true under scenarios like small random perturbations of input examples.

In contrast to FCNs, understanding the memorization capacity of transformers is still in its infancy. A common Transformer architecture consists of two main components: (i) a Multi-head Attention (MHA) module and (ii) a ReLU fully-connected network. 
The MHA module creates convex combinations of its input elements by computing softmax similarities between input representations. This operation, unique to transformers when compared to FCNs, is leveraged in practice for its ability to selectively attend to information from various representations and effectively process long sequences \citep{Vaswani2017AttentionIA}. Yet, its theoretical underpinnings remain largely unexplored. This paper aims to contribute towards closing this gap by investigating the memorization capacity of the MHA module. 
To isolate its role from the ReLU FCN, we take a step toward analyzing Attention-only models. 

Concretely, we study a single-layer MHA module with $H$ heads that takes as inputs a $n\times d$ key/context matrix  comprising representations of $n$ tokens, and a $d \times 1$ query vector. 
We are particularly interested in understanding how the number of attention heads, and the context size of the input affect the memorization in this architecture. 
Our contributions are as follows:

    \noindent$\bullet$~We introduce a new set of input-data assumptions that are more relaxed compared to General Position assumptions. Our first assumption requires the set of all query vectors to have Kruskal rank at least $n$, while our second assumption needs context vectors of each example to be linearly independent. These are motivated and validated by experiments on real-world models, such as ViT, following a single layer, whether it be Attention-only or a complete Transformer,  that is either randomly initialized or pretrained. In contrast, General-Position does \emph{not} hold in either of these settings.
    
    \noindent$\bullet$~For this set of practically relevant assumptions, we prove that a single-layer MHA module with $H$ heads, embedding dimension $d$, key/query dimension $d_h$, value dimension $d_v$, context size $n < d$, output dimension $\dout \leq d_v$, equipped with a total of $O(H d (d_h + d_v))$ trainable parameters, is capable of memorizing $\Omega\left(H\min(n,d_h)\right)$ input examples. Specifically when $\dout=d_h=d$ and $n=\Theta(d)$, this finding recovers the optimal order of real-valued numbers that can be memorized with the given number of parameters.
    
    \noindent$\bullet$  Our proof illuminates how different attention heads play distinct roles in memorizing different sets of examples, and how this role allocation is facilitated by the softmax.
    To show memory capacity increases linearly with $H$, our idea is to make each head individually ``responsible'' for memorizing $\min(n,d_h)-1$ examples. 
    By leveraging the saturation property of softmax and introducing specific key-query weight adjustments, we design desired softmax logits for individual heads while ensuring minimal interference with other examples.

    \noindent$\bullet$~In addition to experiments validating our assumptions on the input sequences, we design and discuss synthetic experiments corroborating our findings: namely, the impact of head number $H$, context length $n$ and key/query dimension $d_h$ on memorization, as well as, the occurrence of the saturation property of the softmax operator.

\vvspace{-.15in}
\section{Related Work}
\vvspace{-.1in}

\textbf{Theoretical understanding of Transformers.} 
Driven by the success of Transformers, there have been several attempts at understanding their underlying mechanisms, \eg,~\citep{li2023transformers, luo2022transformer, Oswald2022TransformersLI, Dong2021AttentionIN, Edelman2021InductiveBA, Jelassi2022VisionTP, Li2023ATU, Sahiner2022UnravelingAV}. Closer to us, \citep{yun2020transformers, yun2020on} study the expressivity of Transformers from a universal-approximation perspective, by constructing networks with an exponential number of layers and parameters in input dimensions (context size and dimension) to cover the entire input space.
In contrast, our aim is to find a small network to memorize a \emph{finite} number of samples. \citet{low-rank-bottleneck} studies the expressivity of MHA on a single input sequence, yielding {guidelines for setting the head dimension $d_h$ at least as high as the context size $n$ to preserve expreissivity}.  To the best of our knowledge, our work is the first to explicitly address memorization of MHA with more than one input data points.
Only very recently, \citet{kim2023provable} has extended previous results for ReLU networks~\citep{vardi2022on} to Transformers with ReLU-activation feed-forward layer and a \emph{single} Attention head, showing that a sublinear number of parameters suffice for memorization. Here, in order to better understand the distinct role of the Attention mechanism, we study the memorization of one-layer MHA. Also, unlike those previous works, we focus on a single prediction per input. These make our results not directly comparable to theirs. Also, our assumptions on the input data and proof techniques are different.
\\
\textbf{Memory capacity of FCNs.} 
Contrary to Transformers, the memory capacity of FCNs with ReLU/Threshold/Sigmoid activation has been extensively analyzed.
\citet{BAUM1988193} provided the first  memory bound, proving  under General Position that a single hidden-layer Threshold can memorize $T$ binary labels with $O(T)$ parameters. The best known result on memorization under ReLU activation and real-valued inputs was obtained only very recently by \citet{bubeck2020network}. These bounds are tight for the respective class of assumptions since memorization for a dense class of input data (\eg, General Position) with piece-wise analytic definable functions requires number of parameters at least linear on the number of examples \citep{sontag}. Building on these foundational works, recent work have extended these results to different input-data assumptions, smooth activations, and multi-layer networks~\citep{vershynin2020memory, NEURIPS2019_dbea3d0e, exp-improvement-relu, zhang2021expressivity,memorization_mondelli}.
Additionally, recent works yield bounds on ReLU networks with $\Tilde{O}(\sqrt{T})$ parameters  by utilizing $\Tilde{\Omega}(\sqrt{T})$-deep networks~\citep{vardi2022on, park2021provable}. Following the tradition of early studies on FCNs that explored memorization in simplified one-hidden layer networks, we delve into the memory capacity of a single-layer MHA mechanism. Our objective is to isolate the MHA from the ReLU MLP and analyze its unique properties. We anticipate that our discoveries will inspire additional research and extensions for more advanced Transformer models.

\vvspace{-0.35in}
\section{Problem Setup} \label{sec:problem_setup}
\vvspace{-0.2in}
\textbf{Notation.} $[n]$ denotes the set of positive integers $\{1, 2, \ldots, n\}$.
Boldface upper-case~\textbackslash~lower-case letters, such as $\mtx{A}$~\textbackslash~$\vct{v}$,  denote matrices~\textbackslash~column-vectors.
$[\mtx{A}; \mtx{B}]$~\textbackslash~$\begin{bmatrix} \mtx{A}, \mtx{B} \end{bmatrix}$ denote row-wise~\textbackslash~column-wise concatenation. %
$\x_{[n]} = \{ \x_1, \x_2, \dots, \x_n \}$ denotes the set of $[n]$ indexed vectors.

We study a single-layer \emph{multi-head Attention} (MHA) mechanism.
An MHA layer $\Acal$ with $H$ heads consists of three sets of matrices $\WK_h, \WQ_h \in \Rbb^{\de \times \dhead}, \WV_h \in \Rbb^{\de \times \dv}$ for each head $h \in [H]$, one weight matrix $\WO \in \Rbb^{H\dv \times \de}$ to combine the outputs of different heads, and finally, a read-out weight matrix $\WD \in \Rbb^{d \times \dout}$ to get the final model's output.
We denote the entire set of parameters as $\Wcalb = \{\{\WQ_h, \WK_h, \WV_h\}_{h=1}^{H}, \WO, \WD\}$.
An input example to the MHA model consists of (i) a key matrix $\Eb = \begin{bmatrix} \e_1^\top; \e_2^\top; \dots ; \e_n^\top \end{bmatrix} \in \Rbb^{n \times \de}$ containing $n$ tokens, and, (ii) a query vector $\e \in \Rbb^{\de}$. The output of the model as a function of the input example and of the set of parameters is given according to the following computational mechanism:
\begin{minipage}{.43\linewidth}
    \centering
\begin{align}
    \alphab_h &:= \Eb \WK_h \WQ_h^\top \e  &(\alphab_h \in \Rbb^{n}) \label{eq:relevance_scores} \\
    \thetab_h &:= \sft\left(\alphab_h\right)  &(\thetab_h \in \Rbb^{n}) \\
    \z_h      &:= \Eb^\top \thetab_h &(\z_h \in \Rbb^{\de}) \label{eq:z_attn} 
\end{align}
\end{minipage}%
\begin{minipage}{.56\linewidth}
    \centering
\begin{align}
    \p_h     &:= \WV_h^\top\z_h \quad  &(\p_h \in \Rbb^{\dv}) \label{eq:output_token} \\
    \vct{o}  &:= \WO^\top\begin{bmatrix}\p_1; \p_2; \dots; \p_H \end{bmatrix}   &(\vct{o} \in \Rbb^{\de}) \label{eq:mha_output} \\
    \hat{\y} &:= \WD^\top \vct{o} &(\y \in \Rbb^\dout) \label{eq:final_output}.
\end{align}
\end{minipage}

Eqs. \eqref{eq:relevance_scores}--\eqref{eq:output_token} hold for all heads $h\in[H]$,  $\vct{o}$ is the output of the MHA layer, and, $\hat{\y}$ is the final output of the model.
In Transformer terminology, the key matrix $\Eb$ is also called the \emph{context}.
Moreover, when $\e \in \e_{[n]}$, the MHA layer is called \emph{Self-Attention}, and when the query vector is not among key vectors, the MHA layer is a particular form of \emph{Cross-Attention}.
Here, we impose \emph{no} restrictions on the specific type of MHA as long as certain assumptions on the input data are satisfied (see Sec.~\ref{sec:main_results}).

We consider a training set $\Tcal$ consisting of $T$ input examples. To each example $t\in[T]$, consisting of $n$ context tokens and one query token, corresponds a label $\y^\ts$.  
Thus, we have $\Tcal := \{(\Eb^\ts, \e^\ts, \y^\ts )\}_{t=1}^{T}$. 
Our goal is to find a parameter set $\Wcalb$ such that the MHA model is able to memorize any label set given the key matrices and query vectors, \ie, $\y^\ts = f_\Wcalb(\Eb^\ts, \e^\ts)$ for all $t \in [T]$. The maximum number $T$ of examples for which we are able to achieve this serves as a lower bound on the memorization capacity of the model.

To illustrate the role of an input example distinct components (key matrix, query vector, label), consider image classification using (say) Vision Transformer (ViT)~\citep{vit}.
Two types of tokens are given as input to this model: A single \cls token, and several ``image patch'' tokens. 
The image patches contain information about the input image, while the \cls token is a special token in that the model's class output  is read from the prediction of this token. 
For each image $t \in [T]$, the key matrix $\Eb^\ts$ is the matrix containing image patches and the \cls token, the query vector $\e^\ts$ corresponds to the \cls token only, and $\y^\ts$ is the label of the image (here, the label is a scalar; thus, $d_\text{out}=1$.)
Note that, the type of MHA is here Self-Attention since the \cls token is inside $\Eb^\ts$ as well, otherwise, the context would have contained only the image patches. 

Often, a positional encoding is also added to the embedding of each token before feeding them to the Transformer. Namely, for all $t \in [T]$:
    $\e^\ts := \x^\ts + \pos_0,$  and $\e_i^\ts := \x_i^\ts + \pos_i$,  for all $i \in [n],$
where $\x_i^\ts$ and $\x^\ts$ are the raw embedding of input tokens, and $\pos_i \in \Rbb^d$ are fixed and unique positional encoding vectors for each position index. In the case of Self-Attention, the query vector takes only one positional encoding since it is already present among $\e_{[n]}$. We will show in Sec. \ref{sec:test-assump} that the use of positional encoding motivates our 
Assumption \ref{asp:c_lin_indep}.
\vvspace{-0.15in}
\section{Main Results} \label{sec:main_results}
\vvspace{-0.3cm}
Two types of assumptions are typical in prior works on memorization: (1) norm-based  \citep{vershynin2020memory, park2021provable, kim2023provable}, and, (2) linear-independence \citep{BAUM1988193, bubeck2020network} assumptions. 
The former assumes  well-separated and/or bounded-norm input data, while the latter only requires some notion of linear independence among the data points. 
While norm-based assumptions are well-suited for discrete data such as language, linear independence is usually better suited in continuous data such as images, 
or continuous distributions, \eg Gaussian and Uniform. 

In this \paper, we focus on the second type of assumptions. Thus, it is useful to recall the notions of Kruskal Rank and General Position.
\begin{definition}[Kruskal Rank] Kruskal Rank of a set of vectors is the largest number $i$ such that every subset of size $i$ is linearly independent.
\end{definition}
\begin{definition}[General Position] A set of vectors $\{\x_1, \x_2, \dots, \x_m \} \subset \Rbb^{d}$ is in General Position if it has the maximal possible Kruskal Rank of $d$.
\end{definition}
In contrast to previous works on the memorization capacity of FCNs, we depart from the commonly used General Position assumptions and tailor our assumptions to transformers. Specifically, we make the following assumptions. 
\begin{assumption} \label{asp:q_gen_pos}
    The set of all query vectors $\{\e^\ts\vert \e^{(t)} \in \Rbb^d \}_{t=1}^{T}$ has Kruskal Rank at least $n$.
\end{assumption}
\begin{assumption} \label{asp:c_lin_indep}
   For each example $t \in [T]$, the context matrix $\Eb^\ts \in \Rbb^{n \times d}$ has rank $n$.
\end{assumption}
These assumptions find their motivation in experiments conducted on real data, which reveal scenarios where these assumptions hold, while the General Position assumption consistently fails (see Sec.~\ref{sec:exp_test_asp}).

We are now ready to state our main result below.
\begin{theorem} \label{thm:memory-cap}
Consider a multi-head attention layer $\mathcal{A}$ with $H$ heads, embedding dimensions $d$, $\dv\geq \dout \geq 1$, and $d_h \geq 1$. 
Let $\Tcal = \left\{\left(\Eb^\ts, \e^\ts, \y^\ts \right)\right\}_{t=1}^{T}$ be a training set with context size $n < d$. 
Define $\ndh := \min(n, d_h)$. If Assumptions \ref{asp:q_gen_pos} and \ref{asp:c_lin_indep} hold, and $T \leq H\left(\ndh-1 \right) + 1$,
then, there exists a set of parameters $\Wcalb$ such that $\Acal$ can memorize $\Tcal$.
\end{theorem}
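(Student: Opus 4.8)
The plan is to give each of the $H$ heads a disjoint block of $\ndh-1$ examples to be ``responsible'' for, and to exploit the saturation of the softmax so that on every \emph{other} example a head behaves in a fixed, predictable way. It suffices to treat $T=H(\ndh-1)+1$ (fewer examples is only easier: shorten some blocks). Partition $\{1,\dots,T-1\}$ into $S_1,\dots,S_H$ with $|S_h|=\ndh-1$, and call example $T$ the ``baseline'' example. The construction proceeds in two stages: first fix the key/query matrices $\WK_h,\WQ_h$ -- equivalently $\M_h:=\WK_h\WQ_h^\top$, constrained to $\rank(\M_h)\le\dhead$ -- so that the attention output $\z_h^\ts$ is freely tunable for $t\in S_h$ and a known constant for $t\notin S_h$; then solve a linear system in the remaining parameters $\{\WV_h\},\WO,\WD$ to hit all labels.

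\textbf{Engineering one head's logits (the crux).} Fix a scale $R>0$ and seek $\M_h$ in the two-scale form $\M_h=R\,\M_h^{\mathrm g}+\M_h^{\mathrm f}$. By Assumption~\ref{asp:q_gen_pos} the queries $\{\e^\ts:t\in S_h\}$ (there are $\ndh-1<n$ of them) are linearly independent; picking a biorthogonal family $\{\bb^\ts\}_{t\in S_h}$ with $\langle\bb^\ts,\e^{(s)}\rangle=\delta_{ts}$ on $S_h$, set $\M_h^{\mathrm f}=\sum_{t\in S_h}\wb^\ts\bb^{\ts\top}$, which has rank $\le\ndh-1$ and satisfies $\M_h^{\mathrm f}\e^\ts=\wb^\ts$, the vectors $\wb^\ts\in\Rbb^d$ being free parameters. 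Choose $\M_h^{\mathrm g}$ (also low rank, and such that $\rank(\M_h)\le\dhead$) to (i) vanish on $\fspan\{\e^\ts:t\in S_h\}$, so it does not perturb head $h$'s own examples, and (ii) make $\Eb^{(t')}\M_h^{\mathrm g}\e^{(t')}$ have a \emph{strict} maximal coordinate for every foreign $t'\notin S_h$; since that span has dimension $<d$ (here $n<d$ is used), a generic such $\M_h^{\mathrm g}$ works. Then for $t\in S_h$ the logit vector equals $\alphab_h^\ts=\Eb^\ts\wb^\ts$ for \emph{every} $R$, and since $\Eb^\ts$ has rank $n<d$ (Assumption~\ref{asp:c_lin_indep}) it is onto $\Rbb^n$, so $\wb^\ts$ can be chosen to realize any logits, hence any softmax vector $\thetab_h^\ts$, hence any $\z_h^\ts$ in the relative interior of the simplex on the rows of $\Eb^\ts$; whereas for $t'\notin S_h$ the softmax saturates as $R\to\infty$ and $\z_h^{(t')}\to\e_{j(h,t')}^{(t')}$, a specific context token, at a geometric rate. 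The rank budget $\rank(\M_h)\le\dhead$ together with this independence/simplex structure is exactly what forces the block size to be $\ndh-1$.

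\textbf{Downstream linear system.} Having fixed the $\M_h$, each $\z_h^\ts$ is explicit: free inside an example-dependent simplex when $t\in S_h$, and a known constant $\hat{\z}_h^{(t')}(R)$ otherwise. The output $\hat\y^\ts=\WD^\top\WO^\top[\WV_1^\top\z_1^\ts;\dots;\WV_H^\top\z_H^\ts]$ is then a multilinear function of $\{\WV_h\}$, $\WO=[\WO_1;\dots;\WO_H]$, $\WD$ and the free $\z$'s. Because the examples are \emph{partitioned} among heads and a foreign head contributes only a known constant on $t\notin S_h$, the memorization equations decouple head-by-head: for each $h$ we must, using $\WV_h,\WO_h$ and the $\ndh-1$ free vectors $\{\z_h^\ts\}_{t\in S_h}$ (which can be placed in general position in $\Rbb^d$ since $\ndh-1\le n-1<d$), realize $\ndh-1$ prescribed output contributions while absorbing the fixed cross-terms; the lone baseline example $T$ is then matched using the residual freedom in $\WD$ and in how the heads act on the fixed tokens. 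Solvability follows from a rank argument: the relevant design matrices are full rank in the $R\to\infty$ limit, hence for all large finite $R$, which also makes the memorization \emph{exact} (not merely approximate).

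\textbf{Main obstacle.} The conceptual heart is the two-scale split $\M_h=R\,\M_h^{\mathrm g}+\M_h^{\mathrm f}$: a single shared key/query matrix must simultaneously keep head $h$ in full softmax control on its block $S_h$ \emph{and} collapse head $h$ onto a fixed token on every other example -- which is precisely what Assumptions~\ref{asp:q_gen_pos} and~\ref{asp:c_lin_indep} are tailored to make possible. The remaining technical work, and the easiest place to slip, is the downstream bookkeeping: verifying the $\z_h^\ts$ can be put in general position despite living in example-dependent simplices, that the block system genuinely decouples, and that the saturation limit can be taken while keeping $\hat\y^\ts=\y^\ts$ exactly for large $R$.
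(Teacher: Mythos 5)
Your first stage is essentially the paper's own construction: the two-scale split $\M_h=R\,\M_h^{\mathrm g}+\M_h^{\mathrm f}$ plays exactly the role of the paper's $\W^*+c\,\W^+$, with the biorthogonal part giving full logit control on the head's block (rank budget $\ndh-1$, plus one for the saturating part, hence $\le d_h$) and the large part saturating the softmax on all foreign examples. One small imprecision there: the existence of a low-rank $\M_h^{\mathrm g}$ that vanishes on $\fspan\{\e^\ts:t\in S_h\}$ yet produces a strict argmax in $\Eb^{(t')}\M_h^{\mathrm g}\e^{(t')}$ for every foreign $t'$ is not a consequence of $n<d$ alone; it needs Assumption \ref{asp:q_gen_pos} (a foreign query must not lie in the span of the block's $\ndh-1$ queries, which is what Kruskal rank $\ge n$ guarantees) together with distinctness of the rows of $\Eb^{(t')}$ from Assumption \ref{asp:c_lin_indep} --- this is the content of Lemma \ref{lemma:lin_eq_null}.

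The genuine gap is in the downstream step, where you claim the memorization equations ``decouple head-by-head.'' For $t\in S_h$ the foreign-head contribution is $\WD^\top\WO_g^\top\WV_g^\top\z_g^\ts$: the vector $\z_g^\ts$ is indeed a known (saturated) constant, but the contribution is not, because $\WV_g,\WO_g,\WD$ are unknowns that must simultaneously serve block $S_g$; fixing them to absorb cross-terms in block $S_h$ changes the cross-terms in block $S_g$, so the per-head systems are coupled and cannot be solved one head at a time ``while absorbing the fixed cross-terms.'' What exact memorization actually requires is that the joint $T\times dH$ matrix $\Zb$ with rows $[\z_1^{\ts\top},\dots,\z_H^{\ts\top}]$ have full row rank, after which the joint linear system in $[\WV_1;\dots;\WV_H]$ (with $\WO,\WD$ identities) is solvable --- and that rank statement is precisely the nontrivial content of the paper (Proposition \ref{prop:rank_z}), which you assert (``the relevant design matrices are full rank in the $R\to\infty$ limit'') rather than prove. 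Placing each head's free vectors in general position \emph{inside its own simplex} does not suffice: independence must hold for full rows that also contain the fixed saturated foreign blocks and your fully uncontrolled baseline row, and since the bound $H(n-1)+1$ is tight in the shared-context case (Proposition \ref{prop:rank_upperbound_same_ctx}) there is no slack for a genericity handwave. The paper closes this exact hole by an induction in which the previously controlled columns are eliminated by row operations, producing a correction matrix $\Ab$ determined by the saturation limit of the new head on the old examples, choosing the new head's free coefficients relative to $\Ab$, and then passing from the limit to finite weights via a Weyl-type singular-value perturbation bound; your finite-$R$ step would indeed follow from lower semicontinuity of rank, but only once the limit full-rank claim is actually established.
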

Theorem \ref{thm:memory-cap} shows that an MHA with $2Hdd_h + Hdd_v + d\dout$ parameters can memorize at least $H\left( \min(n, d_h) - 1 \right) + 1$ examples under Assumptions \ref{asp:c_lin_indep} and \ref{asp:q_gen_pos}. 
To gain further insights on the implications of this theorem consider three concrete settings: (1) Suppose $d_h=d$ and scalar model output $d_v=\dout=1$, then a MHA with $\Theta(Hd^2)$ can memorize $\Omega(Hn)$ examples. 
(2) Suppose $d_h=n < d, d_v=\dout=1$ then a MHA with $\Theta(Hdn)$ can memorize $\Omega(Hn)$ examples. 
(3) Suppose $d_h=d_v=d$ and $d$-dimensional model output $\dout=d$, then a MHA with $\Theta(Hd^2)$ can memorize $\Theta(Hn)$ examples with $d$-dimensional label each. 
The third scenario is particularly interesting. To see this further, consider the typical setting where $n$ and $d$ are of the same order ($n = \Theta(d)$), then the established bound becomes optimal. 
This optimality is reflected in the fact that $\Omega(Hnd) = \Omega(Hd^2)$ real-valued outputs are memorized, and a trivial upper bound for the memorization of an MHA is the number of its parameters, which is $\Theta(Hd^2)$\footnote{$d$-dimensional outputs is common in prior works, \eg, \citep{yun2020on, yun2020transformers}, with the distinction that they consider sequence-to-sequence outputs.}.

Additionally, Theorem \ref{thm:memory-cap} suggests there is \emph{no} gain in memorization capacity provided $d_h>n$. This agrees with using $d_h<d$ in practice. We also confirm this in synthetic experiments in Section \ref{sec:exp_lin_trend}. A similar conclusion is also drawn by \citet{low-rank-bottleneck}, where authors show by having $d_h < n$, Attention loses expressivity in terms of convex combinations $\thetab_h$ the Attention can represent. However, their finding alone does not allow direct conclusions about the impact on memorization. Our theorem directly connects the trade-off between $n$ and $d_h$ to memorization.

The assumption that $n < d$ is general practice. For instance, in prominent models such as Vision Transformer~\citep{vit}, BERT~\citep{Devlin2019BERTPO}, and GPT-3~\citep{brown2020language}, the $n/d$ ratio is $197/768$, $512/768$, and $2048/12288$, respectively. Furthermore, in Sections \ref{sec:test-assump} and \ref{sec:exp_test_asp}, we demonstrate that Assumption \ref{asp:c_lin_indep} typically holds in practice due to positional encoding, and Assumption \ref{asp:q_gen_pos} is valid after applying a single Attention layer. Unlike previous works on FCNs that regard General Position as an assumption \citep{BAUM1988193, bubeck2020network}, our approach innovatively incorporates Assumption \ref{asp:q_gen_pos}, which imposes a weaker constraint than General Position since $n<d$. In fact, our experiments reveal that Assumption \ref{asp:q_gen_pos} is satisfied in real data, even though the more commonly used General Position assumption is often not met.

\subsection{Proof of Theorem \ref{thm:memory-cap}}

Our proof consists of two steps: (1) We focus on the intermediate representations originating from $\z_h$ in Equation \eqref{eq:z_attn}. We demonstrate with a suitable set of attention weights $\{(\WK_h, \WQ_h)\}_{h=1}^{H}$, we can achieve linear independence across different data points, resulting in a rich representation. (2) We prove the existence of $\{\{\WV_h\}_{h=1}^{H}, \WO, \WD\}$ for achieving memorization by taking advantage of the rich representation from the previous step.

Let us define matrix $\Zb$, which is a function input data and $\WK_h, \WQ_h, h \in [H]$ as follows: 
\begin{align}
\Zb &:= \begin{bmatrix} \z^{(1)^\top}; \z^{(2)^\top}; \dots ; \z^{(T)^\top} \end{bmatrix} \in \Rbb^{T \times dH},~\text{where}~\z^\ts := \begin{bmatrix} \z^{\ts^\top}_1, \z^{\ts^\top}_2, \dots, \z^{\ts^\top}_H \end{bmatrix}^\top 
.
\end{align}
Each vector $\zb^\ts$ is a concatenation of $H$ $d$-dimensional vectors each representing a convex combination of context vectors of the $t$-th example parameterized in terms of key/query vectors and $\Wb_{Kh}, \Wb_{Qh}$.
The matrix $\Zb$ gathers all these $dH$ dimensional representations, one for each example in a matrix form.
Then, based on Eq. \eqref{eq:relevance_scores} to Eq. \eqref{eq:final_output}, the predicted labels $\hat{\y}^\ts$ are given by
\begin{align}
\hat{\y}^\ts := \underbrace{\Zb}_{\text{Step 1}}  \underbrace{\diag(\WV_1, \WV_2, \dots, \WV_H) \WO \WD}_{\text{Step 2}} \in \Rbb, \quad \forall t \in [T]. \label{eq:mha_mechanism}
\end{align}
In the first step, using an inductive proof technique, we show that $\Zb$ has a high rank, and use this information in the second step to solve a linear system of equations and achieve memorization.
\\
\emph{Step 1. The Rank of $\Zb$.} Our key technical result is proving the following lower bound on $\rank(\Zb)$.
\begin{proposition} \label{prop:rank_z}
Under the conditions of Theorem \ref{thm:memory-cap}, there exists parameters $\{(\WK_h, \WQ_h)\}_{h=1}^{H}$ such that $\rank(\Zb) \geq \min \left\{ H(\ndh-1) + 1, T \right\}.$
\end{proposition}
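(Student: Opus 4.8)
\textbf{Proof proposal for Proposition \ref{prop:rank_z}.}

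The plan is to construct the key/query weights $\{(\WK_h,\WQ_h)\}_{h=1}^H$ head by head, and to argue by induction on $H$ that after incorporating $h$ heads the matrix $\Zb$ restricted to the first $h$ blocks (i.e. the $T \times dh$ matrix $[\Zb_1,\dots,\Zb_h]$, where $\Zb_h$ is the $T \times d$ matrix stacking the $\z_h^\ts$) has rank at least $\min\{h(\ndh-1)+1,\,T\}$. The base case $h=0$ is vacuous (rank $\geq 1$ once we include any nonzero row, using that $\Eb^\ts$ has rank $n\geq 1$). For the inductive step, suppose the first $h-1$ heads already realize rank $\rho := \min\{(h-1)(\ndh-1)+1,\,T\}$; if $\rho = T$ we are done, so assume $\rho < T$ and pick a set $S$ of $\rho$ examples whose $\z$-rows are already linearly independent. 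The goal is to choose $(\WK_h,\WQ_h)$ so that head $h$ ``picks up'' $\min\{\ndh-1,\,T-\rho\}$ fresh examples, i.e. contributes that many new independent directions, without disturbing the independence already achieved.

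The core idea — and where the softmax saturation property enters — is this: for a chosen target example $t^\star$ among the not-yet-covered ones, design $\WK_h\WQ_h^\top$ so that the logit vector $\alphab_h^{(t^\star)} = \Eb^{(t^\star)}\WK_h\WQ_h^\top \e^{(t^\star)}$ is (close to) a prescribed vector that makes $\thetab_h^{(t^\star)} = \sft(\alphab_h^{(t^\star)})$ approach a target point in the simplex, hence $\z_h^{(t^\star)} = \Eb^{(t^\star)\top}\thetab_h^{(t^\star)}$ approaches a target vector in the $n$-dimensional column space of $\Eb^{(t^\star)\top}$ (here Assumption \ref{asp:c_lin_indep}, $\rank(\Eb^{(t^\star)})=n$, guarantees this column space is genuinely $n$-dimensional, so there is room to steer $\z_h^{(t^\star)}$ among $\ndh$ affinely-independent targets — one "baseline" plus $\ndh - 1$ new ones). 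Simultaneously, for the other examples we want $\alphab_h^{(t)}$ to be essentially constant across its $n$ coordinates, so that $\thetab_h^{(t)} \approx \tfrac1n\ones$ and $\z_h^{(t)}$ is the (fixed, example-dependent) centroid of that example's context — a contribution that does not help but, more importantly, does not destroy the previously built rank. The decoupling needed here — ``hit example $t^\star$, ignore the rest'' — is exactly what Assumption \ref{asp:q_gen_pos} buys: since the query vectors have Kruskal rank $\geq n$, for any target example we can find a vector that has a prescribed nonzero inner-product pattern against $\e^{(t^\star)}$ while being orthogonal to a chosen set of up to $n-1$ other query vectors. One would encode $\WK_h\WQ_h^\top$ as a sum of a few rank-one terms built from such vectors (one per context position of $t^\star$), with a large scalar multiplier $\lambda$; as $\lambda\to\infty$ the softmax saturates to the target simplex vertex/mixture for $t^\star$ while the competing logits, built to vanish, keep the other examples near their centroids.

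I would then make this precise as a limiting/perturbation argument: define $\Zb(\lambda)$ as the resulting matrix and show $\Zb(\lambda) \to \Zb^\star$ entrywise, where $\Zb^\star$ has the first $h-1$ blocks unchanged and the $h$-th block equal to "centroids for most examples, target vectors for the $\ndh-1$ chosen new ones and $t^\star$." Choosing the targets as $n$ affinely independent points of $\mathrm{col}(\Eb^{(t^\star)\top})$ (possible since $n \le \dim$ and one can also respect a $d_h$-dimensional constraint coming from $\rank(\WK_h\WQ_h^\top)\le d_h$, which is why the bound is $\ndh=\min(n,d_h)$ rather than $n$), one shows $\Zb^\star$ has the claimed rank by a direct linear-algebra argument — the new block's new columns, after subtracting the baseline, span a space transverse to the row span of the old blocks on the new example indices. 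Since rank is lower-semicontinuous, $\rank(\Zb(\lambda)) \ge \rank(\Zb^\star)$ for large $\lambda$, giving parameters achieving the bound; a final remark is that exact (not just limiting) parameters suffice because rank is an open condition, so any sufficiently large finite $\lambda$ works. The main obstacle I anticipate is the bookkeeping in the inductive step: one must simultaneously (i) keep the other $T-1$ examples' logits controllably small so their $\z_h$ stay near centroids, using the Kruskal-rank freedom without exhausting it, (ii) ensure the $\ndh-1$ newly-added independent directions are genuinely independent of everything built in heads $1,\dots,h-1$ — this requires carefully choosing \emph{which} examples each head is "responsible" for so the responsibility sets are disjoint and the target vectors live in complementary-enough subspaces — and (iii) handle the $d_h < n$ case, where $\WK_h\WQ_h^\top$ has rank at most $d_h$ and so the reachable set of logit vectors for $t^\star$ is a $d_h$-dimensional affine family, capping the new directions at $d_h-1$. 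Getting these three constraints to coexist is the technical heart of the argument.
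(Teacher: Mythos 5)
Your overall architecture (induction over heads, each new head "responsible" for $\ndh-1$ fresh examples, saturation plus a limiting/rank-lower-semicontinuity argument) matches the paper, but the decoupling mechanism you propose is the reverse of what can actually be made to work, and this is a genuine gap rather than bookkeeping. You want the new head to hit prescribed coefficients on the fresh examples while keeping \emph{all} previously covered examples' logits (essentially) constant, so that their $\thetab_h^{(t)}$ stay near $\tfrac1n\ones$ and their rows in the new block are harmless centroids. Forcing an example to stay at the uniform mixture means imposing $\Eb^{(t)}\Wb\,\e^{(t)} = c_t\ones$, i.e.\ $n-1$ linear constraints per old example, and your orthogonality trick ("orthogonal to a chosen set of up to $n-1$ other query vectors") can only neutralize at most $n-1$ examples: Assumption \ref{asp:q_gen_pos} gives Kruskal rank $n$, not linear independence of all $T$ queries, and in the regime of the theorem $T$ can be as large as $H(\ndh-1)+1 \gg d$, so $\e^{(t^\star)}$ generically lies in the span of the other queries and no vector (hence no low-rank $\Wb$) can target $t^\star$ while vanishing on all the rest. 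Worse, once you introduce the large multiplier $\lambda$, any old example whose logits are not \emph{exactly} constant gets driven to a one-hot vector, not to its centroid, so the "near centroid" picture collapses in precisely the limit you need; also note that scaling the logits of $t^\star$ by $\lambda$ only reaches vertices of the simplex, whereas arbitrary interior targets are reached by finite logits with no scaling at all.

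The paper resolves this by inverting your decoupling. It builds $\Wb = \Wb^* + c\,\Wb^+$ where the auxiliary matrix $\Wb^+$ (Lemma \ref{lemma:lin_eq_null}, rank one) is chosen to \emph{annihilate} the logits of the $\ndh-1\le n-1$ \emph{new} examples — this is within the Kruskal-rank budget — and to produce pairwise-distinct logits on the old ones. Then as $c\to\infty$ the old examples are not left undisturbed at all: their coefficients saturate to one-hot vectors $\thetab_+^{(t)}$ determined by $\Wb^+$ alone (Lemma \ref{lemma:softmax_saturation}), hence the new head's block on the old examples converges to a \emph{fixed} matrix independent of the tuning $\Wb^*$. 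That fixedness is the crucial point your sketch is missing in obstacle (ii): after the row operations that zero out $\Zb_b'$, the new examples' block appears as $\Zb_b''+\Ab$ where $\Ab$ is a linear function of the new head's values on the \emph{old} examples, so one must first pin $\Ab$ down (via the saturated limit $\Ab_+$) and only then choose the targets $\thetab_*^{(t)}$ for the fresh examples to make $\rank(\Zb_{b,+}''+\Ab_+)\ge \ndh-1$; a Weyl-type perturbation bound then shows a finite $c$ suffices. Without some substitute for this "freeze the old block in the limit, then beat the resulting offset" step, your induction does not go through, because the offset $\Ab$ keeps moving as you tune the head and you have no way to keep $T-1$ old examples inert.
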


We provide a proof sketch of this proposition here and defer the full proof to Appendix \ref{apx:proof_rank_z}. 
The high-level idea is to inductively add heads and use each new head for a distinct set of $\ndh-1$ examples, which in turn translates to an increase on the rank of $\Zb$. 
The new head exploits a new distinct set of examples without affecting the rank of the previous induction step.
To accomplish this, we leverage the saturation property of the softmax and the linear independence assumptions. For simplicity, we only consider here the case $d_h \geq d > n$, for which $r=n$.

In order to get $H(n-1)+1$ rank, each time a head is added, the rank of $\Zb$ should be increased by $n-1$. For each head $h$, we only focus on $h(n-1)+1$ examples, therefore, the subsequent step can be viewed as adding a new head with tunable weights, and a set of $n-1$ fresh data points. Namely, the induction step of $\Zb$ can be described in view of the following block matrix:
\[
\Zb = 
\left[ \begin{array}{ccc|c}
& & & \\
& \Zb_a' & & \Zb_a'' \\
& & & \\
\hline 
& \Zb_b' & & \Zb_b'' \\
\undermat{H'd}{& & &} \undermat{d}{~~~~~~}
\end{array}
\right]
\stackrel{\text{row-operations}}{\longrightarrow} 
\left[ \begin{array}{ccc|c}
& & & \\
& \Zb_a' & & \Zb_a'' \\
& & & \\
\hline 
& \textbf{0} & & \Zb_b''+\A \\
\undermat{H'd}{& & &} \undermat{d}{~~~~~~~~~~~~~}
\end{array}
\right] 
\hspace{-0.05in}\begin{array}{@{} l @{}}
  \left.\begin{array}{@{}c@{}}\null\\\null\\\null\end{array}\right\}{\scriptstyle T':=T-n+1} \\
  \left.\begin{array}{@{}c@{}}\null\\\null\end{array}\right\}{\scriptstyle n-1}
\end{array}
\]
\newline
\newline
$\Zb'_a \in \Rbb^{T' \times H'd}$ denotes the matrix derived from the induction hypothesis, $\Zb'_b \in \Rbb^{(n-1) \times H'd}$ contains the embeddings of new examples corresponding to the $H'\geq 1$ heads of the previous induction steps, $\Zb''_a \in \Rbb^{T' \times d}$ contains the embeddings of the previous examples corresponding to the new head $H=H'+1$, and finally, $\Zb''_b \in \Rbb^{(n-1) \times d}$ contains the embeddings of the new examples corresponding to the new head $H$. 
The upper-triangular block matrix on the right-hand side is obtained from $\Zb$ only by row operations (see the appendix for details); thus, (i) its rank is the same as the rank of $\Zb$, and, (ii) the matrix $\A:=\A(\Zb_a'')$ is a function of the block $\Zb_a''$. 

In view of the upper-triangular block formulation above, our objective becomes clear. 

\noindent\textbf{Goal:}~Adjust the weights $\WK_{H}$, $\WQ_{H}$ of the new head $H$, to increase the rank of $\Zb''_b+\A$ while dealing with the challenge that such tuning can also impact $\Zb_a''$, subsequently affecting the rank of $\A=\A(\Zb_a'')$.  We will demonstrate a method for precisely tuning these weights to control the rank of $\Zb_b''$ without altering the rank of $\Zb_a''$.

\noindent\textbf{Approach:} We approach this in two sub-steps, providing an overview of each below. 
Recall $\thetab_H^{(t)}(\W):=\sft\left({\E^{(t)}\W\eb^{(t)}}\right)$ is the vector of softmax coefficients, associated with the $t$-th input and the new head $H$, parameterized by $\W:=\WK_{H}\WQ_{H}^\top$. 
Also note that $\Zb_a''$ and $\Zb_b''$ are parameterized by $\thetab_H^{(t)}(\W)$.
Consequently, we will use ${\Zb_a''}(\W)$ and ${\Zb_b''}(\W)$ to explicitly denote parameterization with a specific weight matrix.

\noindent$\bullet$~\textit{Sub-step 1.A. Tune the new head's weights and memorize the new set of examples.}~ Given an arbitrary, but fixed matrix $\M\in\R^{(n-1)\times d}$, whose precise value will be chosen later in the second substep, we demonstrate that it is possible to design weights $\W^*:=\W^*(\M)$ to achieve a target rank for $\Zb_b''$, specifically to achieve $\rank(\Zb_b''(\W^*)+\M)\geq n-1.$  The construction is based on an $n$-step induction, making use of Assumption \ref{asp:c_lin_indep} that the rows of $\E^{(t)}$ span an $n$-dimensional space. 

\noindent$\bullet$~\textit{Sub-step 1.B. Suppress the impact of the new head's weights on the previous examples.} To select the matrix $\Mb$, we establish the following procedure for fixing $\Zb_a''$. 
We first establish (see Lemma \ref{lemma:lin_eq_null}) the existence of a matrix $\W^+$, satisfying two conditions: (i) ${\E^{(t)}\W^+\eb^{(t)}}=0$ for all $t=T'+1,\ldots,T$, and (ii) ${\eb_i^{(t)}\W^+\eb^{(t)}}\neq {\eb_j^{(t)}\W^+\eb^{(t)}}$ for all $i\neq j\in[n]$ and $t\in[T']$.
Given $\W^+$, define $\thetab_+^{(t)}:=\mathbf{1}[\arg\max_{i\in[n]}\eb_i^{(t)}\W^+\eb^{(t)}]$ for $t\in[T']$. The crucial insight here is that owing to the saturating property of the softmax (see Lemma \ref{lemma:softmax_saturation}) and property (ii) above, for \emph{any} weight matrix $\W$ (which may depend on $\W^+$), we have $\thetab_+^{(t)}=\lim_{c\rightarrow\infty}\thetab_H^{(t)}\left(\W+c\W^+\right)$ for all $t\in[T']$. Consequently, for \emph{any} $\W$, the matrix $\lim_{c\rightarrow\infty}\Zb_a''(\W+c\W^+)=[{\Eb^{(1)}}^\top\,\thetab_+^{(1)}\cdots {\Eb^{(T')}}^\top\,\thetab_+^{(T')}]^\top=:\Zb_{a,+}$ is independent of the choice of $\W$.

The proof is finalized by integrating the two sub-steps. Specifically, we utilize $\Zb_{a,+}$ from sub-step 1.B and select $\M=\A(\Zb_{a,+})$. Next, we apply sub-step 1.A to identify the weights $\W^*=\W^*(\W^+)$ in a manner that ensures $\operatorname{rank}(\Zb_b''(\W^*)+\A(\Zb_{a,+}))\geq n-1$.
Finally, we set the overall weights of the new head as $\W=\W^* + c\W^+$. As the parameter $c$ approaches infinity, the softmax coefficients for the first $T'$ examples will be exclusively determined by $\W^+$ (not $\W^*$), as observed in Sub-step 1.B. Furthermore, thanks to property (i) of Sub-step 1.B, we have $\Zb_b''(\W^*+c\W^+)=\Zb_b''(\W^*)$ for all values of $c$. Collectively, these steps complete the proof of the proposition.

In the Appendix \ref{apx:proof_rank_z}, we provide all the details. This involves showing that a finite value of scaling $c$ above suffices; thus, there exist \emph{finite} weights with the desired property. Moreover, for the scenario $d_h\leq d$, we demonstrate that the matrices $\W^*,\W^+$ can be chosen to have rank at most $d_h$. 

\emph{Step 2. Solving system of equations.}
We can find the remaining weights to memorize at least $\rank(\Zb)$ examples. WLOG, assume that $T = \rank(\Zb)$, otherwise we can ignore some of the data points and keep only $\rank(\Zb)$ data points that make $\Zb$ full-rank. 
We find weights $\{\{\WV_h\}_{h=1}^{H}, \WO, \WD\}$ to memorize all $T$ examples as follows:
\begin{align*}
\Zb\Wb_V= \left[ \begin{array}{c|c}
  \y^{{(1)}^\top} & \\
  \vdots & \zeros_{T \times (d_{v} - d_{out})} \\
  \y^{{(T)}^\top} &
\end{array} \right],~~
\begin{bmatrix}
    \WV_1 \\
    \WV_2 \\
    \vdots \\
    \WV_H
\end{bmatrix} := \Wb_V,~~ \WO := \begin{bmatrix} \Iden_{\dv \times d} \\  \Iden_{\dv \times d} \\ \vdots \\ \Iden_{\dv \times d} \end{bmatrix},~~\WD := \Iden_{d \times \dout}, %
\end{align*}
where $\Wb_V$ is a solution to the first expression (note that such $\Wb_V$ exists since $\Zb$ has full row rank).
Substituting in Eq. \eqref{eq:mha_mechanism}, this gives $\hat{\y}^\ts = \y^\ts$ for all $t \in [T]$. 
Therefore, we have proved that the memorization capacity of one layer is at least $\rank(\Zb)$. But, since we also proved a lower bound of $\rank(\Zb) \geq H (\ndh-1) + 1$, we conclude that the memorization capacity is at least $H (\ndh-1) + 1$.%

\begin{remark} \label{remark:asp_voilated}
In the setting of Theorem \ref{thm:memory-cap}, if Assumption \ref{asp:q_gen_pos} is violated (\ie, the Kruskal Rank of query vectors is less than $n$), define $Q < n$ to be the Kruskal Rank of query vectors $\{\e^\ts\vert \e^{(t)} \in \Rbb^d \}_{t=1}^{T}$. Then, a weaker lower bound on the memorization still holds: $T \leq H (\min(Q, d_h) - 1) + 1$.
\end{remark}
\vvspace{-0.4cm}
\subsection{Tightness of the lower bound on the rank}
\vvspace{-0.2cm}

Our proof of Theorem \ref{thm:memory-cap} relies on lower bounding $\rank(\Zb)$ in Proposition \ref{prop:rank_z}. One might naturally wonder how tight this bound is. The proposition below proves that the bound is indeed tight.
\begin{proposition} \label{prop:rank_upperbound_same_ctx}
    Under the conditions of Theorem \ref{thm:memory-cap}, if all the contexts are shared, \ie, $\Eb^{(t)} := \Eb$ for all $t \in [T]$, then for any $\{(\WK_h, \WQ_h)\}_{h=1}^{H}$, we have $\rank(\Zb) \leq H(n-1) + 1$.
\end{proposition}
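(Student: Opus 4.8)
The plan is to exploit the fact that with a shared context $\Eb$, every row of $\Zb$ lives in a space controlled by the column space of $\Eb$ together with a single normalization constraint. First I would observe that for each head $h$ and each example $t$, we have $\zb_h^{(t)} = \Eb^\top \thetab_h^{(t)}$ where $\thetab_h^{(t)} = \sft(\alphab_h^{(t)}) \in \Rbb^n$ is a probability vector, i.e., $\thetab_h^{(t)} \in \Delta_{n-1} := \{\vct{\theta} \in \Rbb^n : \vct{\theta} \geq 0, \ones^\top \vct{\theta} = 1\}$. Hence $\zb_h^{(t)} \in \Eb^\top \Delta_{n-1}$, which is an affine subset of $\Rbb^d$ contained in an affine subspace of dimension at most $n-1$ (since $\Delta_{n-1}$ lies in the affine hyperplane $\{\ones^\top \vct{\theta} = 1\}$ of dimension $n-1$, and applying the linear map $\Eb^\top$ cannot increase this). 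Concretely, fixing any reference point — say $\Eb^\top \vct{e}_1$, the first row of $\Eb$ — every $\zb_h^{(t)}$ satisfies $\zb_h^{(t)} - \Eb^\top \vct{e}_1 \in \Eb^\top(\text{span}\{\vct{e}_j - \vct{e}_1 : j \in [n]\})$, a linear subspace of dimension at most $n-1$.

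Next I would lift this to the concatenated vectors $\zb^{(t)} = [\zb_1^{(t)\top}, \ldots, \zb_H^{(t)\top}]^\top \in \Rbb^{dH}$. Each block independently lies in the same $(n-1)$-dimensional affine flat (same $\Eb$ for all heads), so $\zb^{(t)}$ lies in the affine subspace $\vct{c} + V$ where $\vct{c} := [(\Eb^\top \vct{e}_1)^\top, \ldots, (\Eb^\top \vct{e}_1)^\top]^\top$ and $V := U \times U \times \cdots \times U$ ($H$ copies) with $U := \Eb^\top \text{span}\{\vct{e}_j - \vct{e}_1\}$, so $\dim V \leq H(n-1)$. The rows of $\Zb$ are exactly the $T$ vectors $\zb^{(1)}, \ldots, \zb^{(T)}$, all lying in a common affine subspace of dimension at most $H(n-1)$; therefore their affine hull has dimension at most $H(n-1)$, which forces $\rank(\Zb) \leq H(n-1) + 1$. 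The ``$+1$'' is the standard gap between affine dimension and the rank of the matrix of (non-centered) points: if $T$ points lie in an affine subspace of dimension $k$, the matrix stacking them has rank at most $k+1$ (subtract any one row from the others to get $T-1$ rows in a $k$-dimensional linear space, plus the one subtracted row).

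I do not anticipate a serious obstacle here — this proposition is the ``easy direction.'' The one place to be slightly careful is the bookkeeping in passing from the affine-flat statement about individual rows to the rank bound on $\Zb$: one must correctly account for the single extra dimension contributed by the translation vector $\vct{c}$, and verify that the simplex constraint $\ones^\top\thetab = 1$ (rather than just $\thetab \geq 0$) is what caps the affine dimension at $n-1$ instead of $n$. It is also worth noting explicitly that this argument uses nothing about how $\thetab_h^{(t)}$ is generated — only that it is a convex-combination vector — so the bound holds for \emph{any} choice of $\{(\WK_h, \WQ_h)\}_{h=1}^H$, matching the quantifier in the statement, and confirming that Proposition~\ref{prop:rank_z}'s lower bound $H(n-1)+1$ (in the regime $d_h \geq d > n$) is attained and hence tight.
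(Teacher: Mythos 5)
Your proof is correct. It rests on the same key fact as the paper's — each $\thetab_h^{(t)}$ is a convex-combination (row-stochastic) vector, so with a shared $\Eb$ each head contributes at most $n-1$ dimensions beyond a single common offset — but the packaging is different. The paper first factorizes $\Zb^\top = (\Iden_{H} \otimes \Eb^\top)\,\Thetab^\top$, so that $\rank(\Zb)\le\rank(\Thetab)$ with $\Thetab\in\Rbb^{T\times Hn}$ the matrix of softmax coefficients, and then bounds $\rank(\Thetab)$ by column-block operations: replacing $\Thetab_h$ by $\Thetab_h-\Thetab_1$ for $h\ge 2$ gives blocks whose rows sum to zero, hence rank at most $n-1$ each, and subadditivity yields $n+(H-1)(n-1)=H(n-1)+1$. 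You instead argue directly on the rows of $\Zb$: each block $\z_h^{(t)}=\Eb^\top\thetab_h^{(t)}$ lies in the image of the probability simplex, an affine flat of dimension at most $n-1$, so all $T$ concatenated rows lie in a common affine subspace of dimension at most $H(n-1)$, and the standard ``affine dimension plus one'' bound (subtract one row from the rest) gives the result. Your route is slightly more elementary and geometric — it avoids the Kronecker factorization and the intermediate matrix $\Thetab$, and it makes the origin of the $+1$ transparent — while the paper's coefficient-matrix formulation cleanly separates the role of $\Eb$ from that of the softmax weights and is the same block-subtraction template the paper reuses for Proposition~\ref{prop:rank_upperbound_ball}. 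Your bookkeeping (simplex constraint capping the affine dimension at $n-1$, the product of $H$ copies of $U$, and the quantifier ``for any $\{(\WK_h,\WQ_h)\}_{h=1}^H$'') is all handled correctly; only mind the notational clash between your standard basis vectors of $\Rbb^n$ and the paper's use of $\e_i$ for context tokens.
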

We defer the proof to Appendix \ref{apx:proof_rank_upperbound_same_ctx}. The proposition shows that the lower bound of Proposition \ref{prop:rank_z} is tight when the context is shared across data points and $d_h \geq n$.
Furthermore, the setting of Proposition \ref{prop:rank_upperbound_same_ctx} better highlights the dependency factor of $r-1$ rather than $r$. 
To illustrate this, consider this setting for $r=n=1$, \ie, each context contains a single token and is shared across all examples. Then, all $\thetab^{(i)}_h$ become scalars, whose values are equal to one. Consequently, the vector $\z^\ts_h$ copies the first (and only) context token, which is shared across all examples. Hence, no matter how large $H$ or $d$ are, the output of MHA for all examples is identical, and the memory capacity of the attention is exactly $T = H(r-1) + 1 = 1$ (\ie, a single example).

\vvspace{-0.3cm}
\subsection{Comparison with two-layer ReLU networks}
\vvspace{-0.2cm}
Given Assumptions \ref{asp:q_gen_pos} and \ref{asp:c_lin_indep}, we compare the memorization capacity of MHA with one-hidden layer FCN networks of the same size. An FCN does not distinguish between key/query vectors. Hence, we flatten out the concatenations of $\Eb^\ts$ and $\eb^\ts$, resulting in input $\x_\fcn^\ts := [\e_1^\ts; \dots; \e_n^\ts; \e^\ts] \in \Rbb^{(n+1)d}, t \in [T]$. The proposition below upper bounds the memorization power of a ReLU FCN.
\begin{proposition} \label{prop:mem_relu_comparison}
Suppose a two-layer ReLU network with input dimension $(n+1)d$, hidden dimension $m$, and output dimension $\dout$. Then, the memorization power of this network for the class of datasets $\Tcal = \{(\Eb^\ts, \e^\ts, \y^\ts )\}_{t=1}^{T}$ with context size $n<d$ and satisfying Assumptions \ref{asp:q_gen_pos} and \ref{asp:c_lin_indep} is at most $T \leq (n + 1)m/\dout + (m+1)$ examples.
\end{proposition}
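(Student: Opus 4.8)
The plan is to bound the memorization capacity of a two-layer ReLU network via a standard parameter-counting / linear-algebraic argument, exactly in the spirit of the classical upper bounds for FCNs (e.g. Sontag's type of reasoning). The key point is that a two-layer ReLU network on $T$ generic inputs can be analyzed region-by-region: on each activation region the network is affine, and the number of "free directions" is controlled by the hidden width $m$. For a dense class of inputs satisfying our linear-independence assumptions, fitting $T$ arbitrary real-valued label vectors of dimension $\dout$ forces $T\dout$ to be at most the number of effective degrees of freedom, which after accounting for the piecewise structure gives the stated bound.

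\textbf{Step 1: Reduce to scalar output.} It suffices to prove the bound for $\dout=1$ and then observe the multi-output case scales the count of memorized scalars, not examples; more carefully, one argues that the shared hidden layer of width $m$ means the $T$ output vectors $\hat\y^{(t)}\in\Rbb^{\dout}$ all lie in (an affine shift of) the column space of the second-layer weight matrix acting on hidden activations, so the $T\times\dout$ label matrix must be realizable from a rank-$\le m$ family — giving roughly $T\dout \le (\text{params of first layer}) + \dout(m{+}1)$, i.e. $T \le (n{+}1)m/\dout + (m{+}1)$.

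\textbf{Step 2: The core counting argument.} For the network $\x\mapsto \Vb^\top\relu(\Wb\x+\bb)+\cb$ with $\Wb\in\Rbb^{m\times(n+1)d}$, the map from parameters to the tuple of outputs $(\hat\y^{(1)},\dots,\hat\y^{(T)})$ is, on each fixed activation pattern, an analytic (indeed affine-in-$\Vb,\cb$, polynomial-in-all-params) map whose image is a set of dimension at most the number of parameters $m(n{+}1)d + m + \dout m + \dout$. But to memorize \emph{every} label assignment on a dense (positive-measure) family of admissible inputs, one needs — invoking the Sontag-style argument that memorizing a dense input class with piecewise-analytic functions requires parameter count linear in the number of memorized scalars — that $T\dout$ does not exceed this dimension, up to the piecewise refinement. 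Isolating the first-layer contribution that genuinely multiplies against all $T$ examples gives the $(n{+}1)m$ term and the $\dout(m{+}1)$ residual, and dividing by $\dout$ yields the claim. I would keep the exposition at the level of: "the relevant Jacobian has rank at most the parameter count, and matching $T\dout$ target values forces the inequality," citing \citep{sontag} as the excerpt already does.

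\textbf{The main obstacle} is making the dense-class / piecewise-analytic argument fully rigorous while respecting Assumptions \ref{asp:q_gen_pos} and \ref{asp:c_lin_indep}: one must verify that the admissible input set (contexts of rank $n$, queries of Kruskal rank $\ge n$) still contains a positive-measure subset of $\Rbb^{(n+1)dT}$, so that Sontag's theorem applies. Since $n<d$ and these are open (or generic) conditions, the admissible set is indeed full-measure, so this is a technical check rather than a genuine difficulty — but it is the step where the specific assumptions of the \paper\ must be invoked rather than generic-position. The rest is bookkeeping of parameter counts; I would defer the detailed verification to the appendix and keep only the parameter-count skeleton in the main text.
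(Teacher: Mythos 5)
There is a genuine gap, and it is exactly the step you wave away. Your counting is performed on the full network with input dimension $(n+1)d$, whose first layer alone has $m(n+1)d+m$ parameters; a Sontag-style dimension/parameter argument over a dense (positive-measure) family of admissible inputs can therefore only give $T\dout \le m(n+1)d+m+\dout(m+1)$, which is weaker than the claimed bound by a factor of roughly $d$ in the leading term. The phrase ``isolating the first-layer contribution that genuinely multiplies against all $T$ examples gives the $(n+1)m$ term'' is not an argument: for generic full-dimensional inputs all $m(n+1)d$ first-layer weights are useful (a larger input dimension genuinely increases memorization capacity), so no refinement of the count over a dense input class can produce $(n+1)m/\dout$. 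Moreover, the proposition is a worst-case statement over the dataset class, so requiring the admissible inputs to have positive measure in $\Rbb^{(n+1)dT}$ points in the wrong direction: the hard instances are degenerate, not generic.

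What the paper does, and what your proposal is missing, is an explicit reduction of the effective input dimension from $(n+1)d$ to $n$ by constructing a specific dataset inside the class. Take a shared context $\Eb^\ts=[\s_1^\top;\s_2^\top;\dots;\s_n^\top]$ (standard-basis rows, so Assumption \ref{asp:c_lin_indep} holds) and queries $\e^\ts=[\zeros_{(d-n)};\x^\ts]$ with $\x^\ts\in\Rbb^{n}$ of Kruskal rank $n$ (so Assumption \ref{asp:q_gen_pos} holds). For such inputs the flattened vector $\x_\fcn^\ts$ is a \emph{fixed} constant block concatenated with $\x^\ts$, and any two-layer ReLU network on $\Rbb^{(n+1)d}$ restricted to these inputs collapses, after absorbing the constant block into the first-layer bias, to a two-layer ReLU network on $\Rbb^{n}$ with only $(n+1)m+(m+1)\dout$ parameters. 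Applying the parameters-versus-labels counting (the part of your argument that is sound) to this \emph{reduced} network, and then transporting the non-memorizable labels back to the original dataset, yields exactly $T\le (n+1)m/\dout+m+1$. Without this reduction your skeleton establishes only the bound with $(n+1)d$ in place of $n+1$, which does not prove the proposition.
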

We defer the proof to Appendix \ref{apx:proof_mem_relu_comparison}. A ReLU FCN with $\Theta(ndm + m\dout)$ parameters can memorize at most $O(nm/\dout + m)$ examples under Assumptions \ref{asp:q_gen_pos} and \ref{asp:c_lin_indep}. 
This setting is particularly interesting with $m=\Theta(H)$ and $\dout=1$, where a two-layer ReLU network with $\Theta(Hnd)$ can memorize at most $O(Hn)$ examples. 
This \emph{upper} bound on memorization of a ReLU network together with the  \emph{lower} bound of $\Omega(Hn)$ as established in Theorem \ref{thm:memory-cap} for MHA with the same order of parameters, shows that the latter is at least as powerful, in terms of memorization, as the former (when data satisfy Assumptions \ref{asp:q_gen_pos} and \ref{asp:c_lin_indep}). This also aligns with practical Transformer implementations, where a similar order of parameters is allocated to both Attention and ReLU FCN (\eg,~\citep{touvron2023llama}.

\vvspace{-0.25cm}
\subsection{Analysis of the validity of assumptions} \label{sec:test-assump}
\vvspace{-0.1cm}

We confirm our assumptions are valid either on the embedding layer or after just one Attention layer. Detailed experimental verification of this full-rank assumption is provided in Section \ref{sec:exp_test_asp}.

First, we motivate Assumption \ref{asp:c_lin_indep} by considering the common practice of incorporating positional encoding into token representations before inputting data into a Transformer. A widely used positional encoding method utilizes sinusoidal positional encodings  \citep{Vaswani2017AttentionIA}. Consequently, the positional encoding matrix is inherently full rank. In view of the positional encoding equations (see Sec.~\ref{sec:problem_setup}), this characteristic extends to the context matrix $\Eb^\ts$, establishing the validity of Asm. \ref{asp:c_lin_indep}. 

Second, we provide an explanation for why Assumption \ref{asp:q_gen_pos} generally becomes valid after just one Attention layer. Consider the case of image classification using ViT, where query vectors $\e^{\ts}$ are all composed of the \cls token combined with positional encoding at position zero. Here, Assumption \ref{asp:q_gen_pos} does not initially hold. However, we demonstrate below that it becomes valid after a single Self-Attention layer with a skip connection. The key insight lies in the behavior of the \cls token: as it traverses the attention layer, it effectively mixes information from the surrounding context, which is distinct for each image. Therefore, after a single (even randomly initialized and fixed) Self-Attention layer, we establish that Assumptions \ref{asp:c_lin_indep} and \ref{asp:q_gen_pos} are satisfied. This allows us to directly apply Theorem \ref{thm:memory-cap} to the second layer of Attention to achieve memorization. This line of argument is conceptually similar to the approach employed by \citet{vershynin2020memory}, who first designs layers with random weights to enrich the representation of data points, and then proves the existence of a learned subsequent perception layer for memorizing the labels in a FCN.

Now, we provide the details on how one Attention-layer with skip connection validates Assumption \ref{asp:q_gen_pos}.
To begin with, we recall how a Self-Attention layer with a skip connection works. Consider a single example with context matrix $\Eb$ and query token $\e$. Given a set of attention weights $\Wcalb$, the Self-Attention layer with skip connection transforms the tokens according to the following equations: 
\begin{align*}
    \e'   :=  \e + f_\Wcalb \big(\Eb, \e \big), \quad \e'_i := \e_i + f_\Wcalb \big(\Eb, \e_i \big) \quad \text{for all } i \in [n], \quad \Eb'  &:= [\e_1^{\prime^\top}; \e_2^{\prime^\top}; \dots; \e_n^{\prime^\top}],
\end{align*}
where $\e'$, and $\Eb'$ correspond to the output of query, and context vectors, respectively. Note that the query matrix is part of the context matrix due to Self-Attention. Let us assume the query vector is in the first row of $\Eb$, \ie $\e_1 = \e$, and consequently $\e' = \e'_1$.
\begin{remark}
Theorem \ref{thm:memory-cap} also holds true for Attention layers with skip connection (see Appendix \ref{apx:extend_thm_skip}).
\end{remark}
We are now ready to state a proposition on the mixing power of this type of Attention with skip connection. To do this, we introduce the following relaxed assumption. 

\begin{assumption} \label{asp:token_mixing}
For training set  $\Tcal = \left\{\left(\Eb^\ts, \e^\ts, \y^\ts \right)\right\}_{t=1}^{T}$, define 
$ \Tilde{\e}^\ts := \e^\ts + \frac{1}{n}\sum_{i\in[n]}\e_i^\ts
$ for all $t\in[T]$. 
Then the set $\Scal := \left\{ \Tilde{\e}^{(1)}, \Tilde{\e}^{(2)}, \dots, \Tilde{\e}^{(T)}\right\}$ has Kruskal Rank at least $n$.
\end{assumption}

\begin{proposition} \label{prop:token_mixing}
Define Self-Attention layer $\Acal_0$ with single head, skip connection, $\dv = d$, and weights $\WK = \WQ = \zeros, \WV=\WO=\Iden$. Suppose training set $\Tcal = \left\{\left(\Eb^\ts, \e^\ts, \y^\ts \right)\right\}_{t=1}^{T}$ satisfies  Assumptions \ref{asp:c_lin_indep} and \ref{asp:token_mixing}. Let $\Eb'^\ts, \e'^\ts$ be the output of $\Acal_0$ for each example $t \in [T]$ and define new dataset $\Tcal' := \left\{\left(\Eb'^\ts, \e'^\ts, \y^\ts \right)\right\}_{t=1}^{T}$.
Then, $\Tcal'$ satisfies both Assumptions \ref{asp:c_lin_indep} and \ref{asp:q_gen_pos}.
\end{proposition}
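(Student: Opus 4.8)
The plan is to trace what the specific Attention layer $\Acal_0$ does to the tokens and then verify the two assumptions on the output dataset $\Tcal'$ directly. Since $\WK=\WQ=\zeros$, every relevance score $\alphab^\ts = \Eb^\ts\cdot\zeros\cdot\e^\ts = \zeros$, so the softmax output $\thetab^\ts = \sft(\zeros) = \frac1n\ones_n$ is uniform for every example and every query token. With $\WV=\WO=\Iden$ and a skip connection, the layer therefore computes, for any query token $\vct{q}$ fed together with context $\Eb^\ts$, the map $\vct{q}\mapsto \vct{q} + \frac1n\sum_{i\in[n]}\e_i^\ts$. In particular the transformed query vector is $\e'^\ts = \e^\ts + \frac1n\sum_{i\in[n]}\e_i^\ts = \Tilde{\e}^\ts$ in the notation of Assumption \ref{asp:token_mixing}, and the transformed context rows are $\e_i'^\ts = \e_i^\ts + \frac1n\sum_{j\in[n]}\e_j^\ts$ for $i\in[n]$.

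First I would verify Assumption \ref{asp:q_gen_pos} for $\Tcal'$. By the computation above, the new query vectors are exactly $\{\e'^\ts\}_{t=1}^T = \{\Tilde{\e}^\ts\}_{t=1}^T = \Scal$, and Assumption \ref{asp:token_mixing} states precisely that $\Scal$ has Kruskal rank at least $n$. So Assumption \ref{asp:q_gen_pos} holds for $\Tcal'$ verbatim.

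Next I would verify Assumption \ref{asp:c_lin_indep} for $\Tcal'$, i.e. that each transformed context matrix $\Eb'^\ts$ has rank $n$. Write $\Eb'^\ts$ row-wise: its $i$-th row is $\e_i^\ts + \vct{m}^\ts$ where $\vct{m}^\ts := \frac1n\sum_{j\in[n]}\e_j^\ts = \frac1n (\Eb^\ts)^\top\ones_n$ is a fixed vector (the same for all rows of example $t$). Hence $\Eb'^\ts = \Eb^\ts + \ones_n (\vct{m}^\ts)^\top$, a rank-one additive perturbation of $\Eb^\ts$. The clean way to argue rank is preserved is: $\Eb'^\ts = (\Iden_n + \tfrac1n\ones_n\ones_n^\top)\Eb^\ts$, since $(\tfrac1n\ones_n\ones_n^\top)\Eb^\ts = \tfrac1n\ones_n (\ones_n^\top\Eb^\ts) = \ones_n(\vct{m}^\ts)^\top$. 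The matrix $\Iden_n + \tfrac1n\ones_n\ones_n^\top$ is symmetric with eigenvalues $1$ (multiplicity $n-1$, on $\ones_n^\perp$) and $1 + 1 = 2$ (on $\fspan\{\ones_n\}$), hence invertible. Left-multiplication by an invertible $n\times n$ matrix preserves rank, so $\rank(\Eb'^\ts) = \rank(\Eb^\ts) = n$ by Assumption \ref{asp:c_lin_indep} on $\Tcal$. This gives Assumption \ref{asp:c_lin_indep} for $\Tcal'$ and completes the proof.

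There is no serious obstacle here; the only thing to be careful about is the bookkeeping of which token is the query and whether the Self-Attention "mixes" the query's own embedding into the average — but because the stated layer uses the uniform attention $\frac1n\ones_n$ over the $n$ context rows (with the query among them, as set up before the proposition), the average $\frac1n\sum_{i\in[n]}\e_i^\ts$ is exactly the quantity appearing in Assumption \ref{asp:token_mixing}, so the definitions line up by design. One should also note explicitly that $n<d$ (an assumption of Theorem \ref{thm:memory-cap}'s setting) is what makes Kruskal rank $\geq n$ a meaningful, non-vacuous requirement on the $T$ vectors in $\Scal\subset\Rbb^d$.
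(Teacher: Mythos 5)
Your proof is correct and takes essentially the same route as the paper: compute that with $\WK=\WQ=\zeros$ the layer adds the context average to every token, so the new queries are exactly the $\Tilde{\e}^\ts$ of Assumption \ref{asp:token_mixing} (giving Assumption \ref{asp:q_gen_pos} immediately), and then argue that each transformed context matrix keeps rank $n$. The only difference is in that last step: the paper invokes the direct linear-independence argument of Lemma \ref{lemma:add_avg_same_rank}, whereas you factor $\Eb'^\ts=(\Iden_n+\tfrac{1}{n}\ones_n\ones_n^\top)\Eb^\ts$ and use invertibility of $\Iden_n+\tfrac{1}{n}\ones_n\ones_n^\top$ — an equivalent (arguably cleaner) justification of the same fact.
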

Assumption \ref{asp:token_mixing} assumes that the input data as a whole is sufficiently distinctive across different examples. In particular, this is a more relaxed assumption compared to Assumption \ref{asp:c_lin_indep}. For instance, in the case of images, it corresponds to the average of all image patches (plus a constant vector) being sufficiently distinctive across different images. 
Proposition \ref{prop:token_mixing} establishes that under the given assumptions (specifically, Asm. \ref{asp:token_mixing} in conjunction with Asm. \ref{asp:c_lin_indep}), the application of a trivial Self-Attention layer results in the fulfillment of Assumption \ref{asp:q_gen_pos} at the output of the Attention layer, all while preserving Assumption \ref{asp:c_lin_indep}. This implies that the \cls token is capable of performing token mixing even with a basic Attention mechanism. Consequently, Theorem \ref{thm:memory-cap} can be directly applied to a second layer of Attention, facilitating effective memorization of the examples.

\vvspace{-0.3cm}
\section{Experiments}
\vvspace{-0.2cm}

\subsection{Testing assumptions on Vision Transformer (ViT)} \label{sec:exp_test_asp}
\vvspace{-0.1cm}

As mentioned in Section \ref{sec:test-assump}, we show that our assumptions on the input tokens hold after applying a single Attention layer. To test this, we mainly focus on image classification and Vision Transformer (ViT)~\citep{vit, wolf2020huggingfaces}. We consider the following models: 1) \textbf{Embedding Layer:} A randomly initialized embedding layer. This layer contains only positional encoding and a linear embedding layer (\ie, the input tokens $\Eb^\ts, \e^\ts$). 2) \textbf{Random Attention:} A randomly initialized Self-Attention layer with similar architecture to a ViT layer, without the FCN of a Transformer. 3) \textbf{Random ViT:} A randomly initialized base ViT. We only look at the output of the first layer and discard the subsequent layers. 4) \textbf{Trained ViT:} Similar to the random ViT, with the difference that we take the weights from the ViT pre-trained on ImageNet~\citep{deng2009imagenet}. 

We evaluate the mentioned models on 2000 images sampled from ImageNet. 
To empirically test Assumption \ref{asp:c_lin_indep}, we verify whether the context vectors are all linearly independent for each example. 
On the other hand, testing Assumption \ref{asp:q_gen_pos} is computationally difficult since computing Kruskal Rank is NP-Hard. Therefore, it can only be approximately tested in polynomial time. 
To address this, we randomly sample $n$ query vectors from the training set $\Tcal$ and construct a matrix of size $n \times d$. 
We then test if the rank of the matrix is equal to $n$. After repeating this test 5000 times, we report that the assumption holds if the rank check holds for at least $99\%$ of the test instances. The results of our empirical tests are summarized in Table \ref{tab:vit_assumptions}. Specifically, we find that only Assumption \ref{asp:q_gen_pos} is violated in the Embedding layer, whereas both assumptions hold true for the remaining models. 
Notably, the commonly made assumption of General Position fails in all cases, and the Kruskal Rank is typically much less than $d$, as shown by Figure \ref{fig:rand_att_kruskal_rank}.
We also empirically test Assumption \ref{asp:token_mixing} and find that it already holds for the Embedding layer. Following up on Section \ref{sec:test-assump},
this may explain why Assumption \ref{asp:q_gen_pos} already holds after applying any type of Attention.
\footnote{Our code is available at \href{https://github.com/smahdavi4/attention-memorization}{https://github.com/smahdavi4/attention-memorization}}
\begin{figure}[t]
    \begin{minipage}{0.44\textwidth}
        \centering
        \includegraphics[width=0.9\textwidth, trim=0 0.3cm 0 0.3cm, clip]{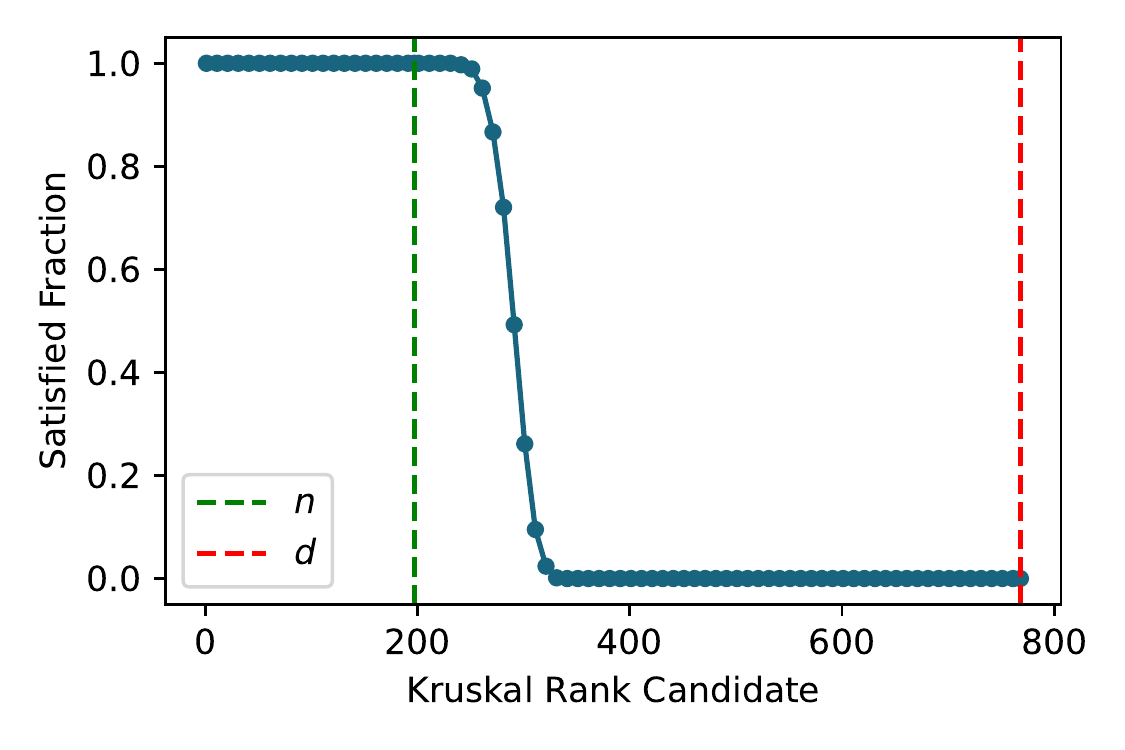}
        \vvspace{-0.1cm}
        \caption{Testing Kruskal Rank of query tokens on the output of one layer Random Attention on ImageNet. The Kruskal Rank is only slightly larger than $n$ (Assumption \ref{asp:q_gen_pos}), and much smaller than $d$ (General Position).} \label{fig:rand_att_kruskal_rank}
    \end{minipage}
    \hfill
    \begin{minipage}{0.51\textwidth}
        \centering
        \vvspace{0.5cm}
        \begin{tabular}{cccc}
        \toprule
        Model & Gen. Pos. & Asm. \ref{asp:q_gen_pos} & Asm. \ref{asp:c_lin_indep} \\
        \midrule
        Embedding & $\times$ & $\times$ & $\checkmark$ \\
        Rand Attention & $\times$ & $\checkmark$ & $\checkmark$ \\
        Rand ViT & $\times$ & $\checkmark$ & $\checkmark$ \\
        Trained ViT & $\times$ & $\checkmark$ & $\checkmark$ \\
        \bottomrule
        \end{tabular}
        \vvspace{0.4cm}
        \captionof{table}{Testing assumptions on ImageNet and ViT: while only one assumption is violated in the embedding layer, both hold true after a single layer of Attention or Transformer. The General Position assumption is violated in all cases.}        
        \label{tab:vit_assumptions}        
    \end{minipage}
    \vvspace{-0.3cm}
\end{figure}

\vvspace{-0.2cm}
\subsection{Empirical validation of memorization improvement factors} \label{sec:exp_lin_trend}
\vvspace{-0.1cm}

Theorem \ref{thm:memory-cap} provides three conclusions:
(1) When fixing dimension $d$, increasing the number of heads $H$ improves memorization.
(2) When further fixing the number of heads $H$, increasing the context size $n$ improves memorization. (3) When fixing $d, n$, increasing $d_h$ only helps up to $d_h < n$, and there is no memorization gain beyond that.
By conducting synthetic experiments, we verify the conclusions drawn by the theorem. Throughout, we fix dimension $d=64$. Moreover, to follow Assumptions \ref{asp:c_lin_indep} and \ref{asp:q_gen_pos}, we generate random query inputs, and random shared context inputs (see Appendix \ref{apx:non_shared_ctx_exps} for experiments with non-shared context) according to a uniform distribution, as follows:
\begin{align*}
\x_1, \x_2, \dots, \x_n \overset{\text{\iid}}{\sim} \Ucal(0, 1)^{64}, \quad
\x^\ts \overset{\text{\iid}}{\sim} \Ucal(0, 1)^{64}, \quad \x_i^\ts &:= \x_i \quad \text{for all } i \in [n],~t \in [T],
\end{align*}
where $\x_i^\ts, \x^\ts$ are the raw inputs. Each input is then given to the model consisting of an embedding layer, positional encoding, and an Attention layer. To test the first conclusion, we fix $n=32, d_h=d$ and increase $H$. To test the second conclusion, we fix $H=20, d_h=d$ and increase context size $n$. In addition to classification tasks, we also include regression tasks in the appendix. 

The results of our experiments are presented in Figure \ref{fig:share_ctx_clf_plots} for classification task with various dataset sizes $T$ and uniform labels $y \in \Ucal([100])$ from $100$ classes (see Figure \ref{fig:share_ctx_regression_plots} in the appendix for the regression task). We measure memorization by computing average accuracy across examples. Observe that memorization increases monotonically with $n$ and linearly with $H$. To quantify the linearity with $H$, we measure the $R^2$ score of the best linear fit of the non-saturated part in Figure \ref{fig:shared_clf_head} and observe  $R^2\geq0.98$ in all cases, confirming a high degree of linearity.
Also observe in Figure \ref{fig:shared_clf_dh} that there appears no benefit on memorization fraction beyond $d_h > n$.

\begin{figure}[t]
    \centering
    \begin{subfigure}[ht]{0.32\textwidth}
        \centering
        \includegraphics[width=\textwidth, trim=0 0.6cm 0 0.5cm, clip]{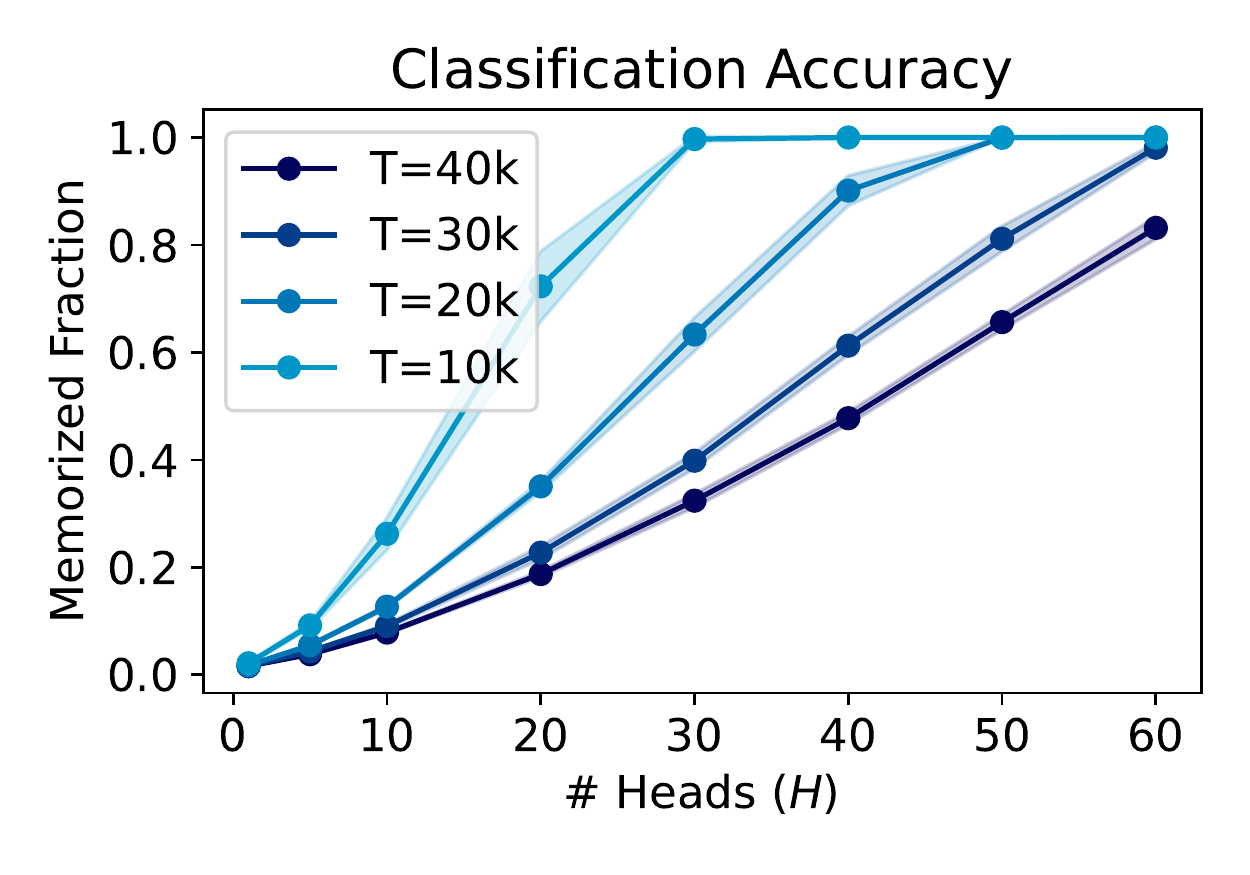}
        \caption{}
        \label{fig:shared_clf_head}
    \end{subfigure}
    \begin{subfigure}[ht]{0.32\textwidth}
        \centering
        \includegraphics[width=\textwidth, trim=0 0.6cm 0 0.5cm, clip]{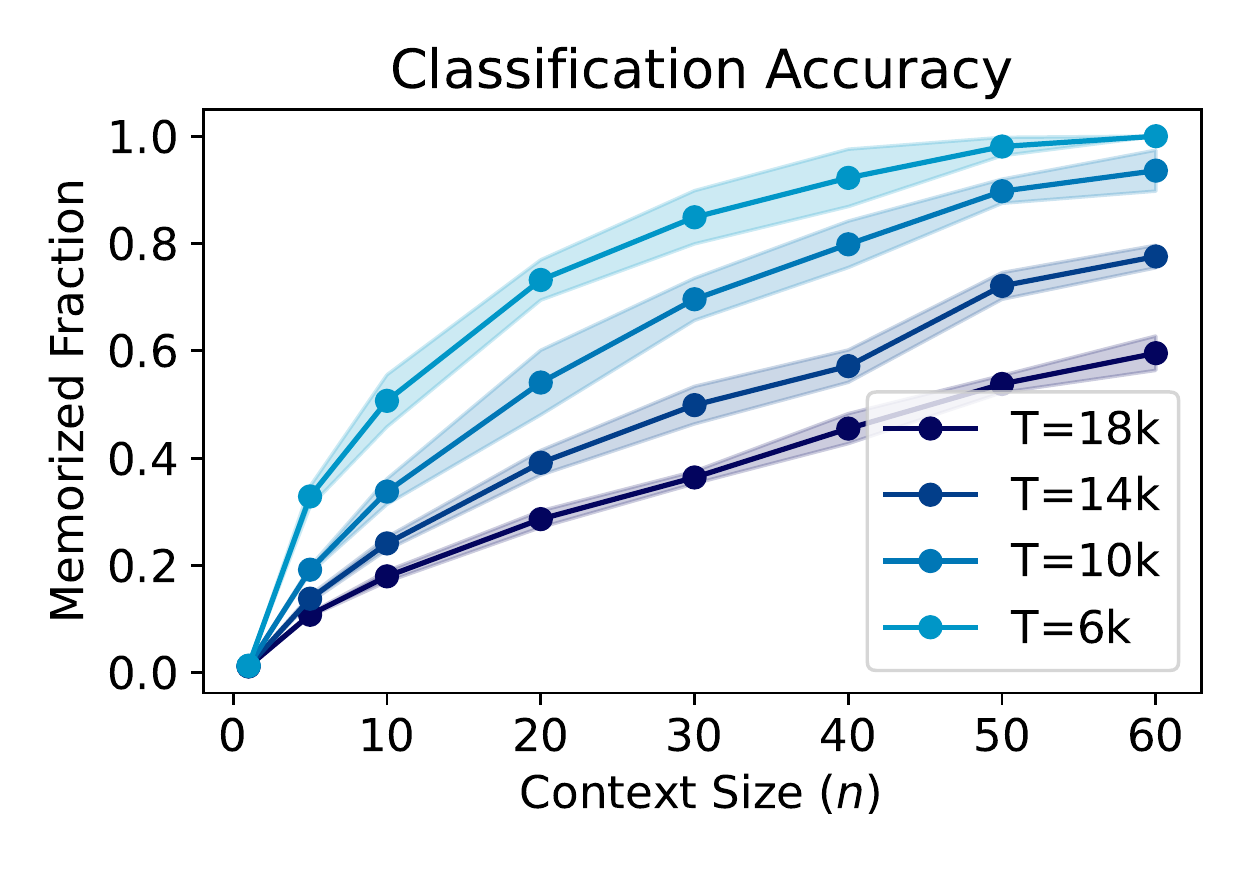}
        \caption{}
        \label{fig:shared_clf_ctx}
    \end{subfigure}
    \begin{subfigure}[ht]{0.32\textwidth}
        \centering
        \includegraphics[width=\textwidth, trim=0 0.6cm 0 0.5cm, clip]{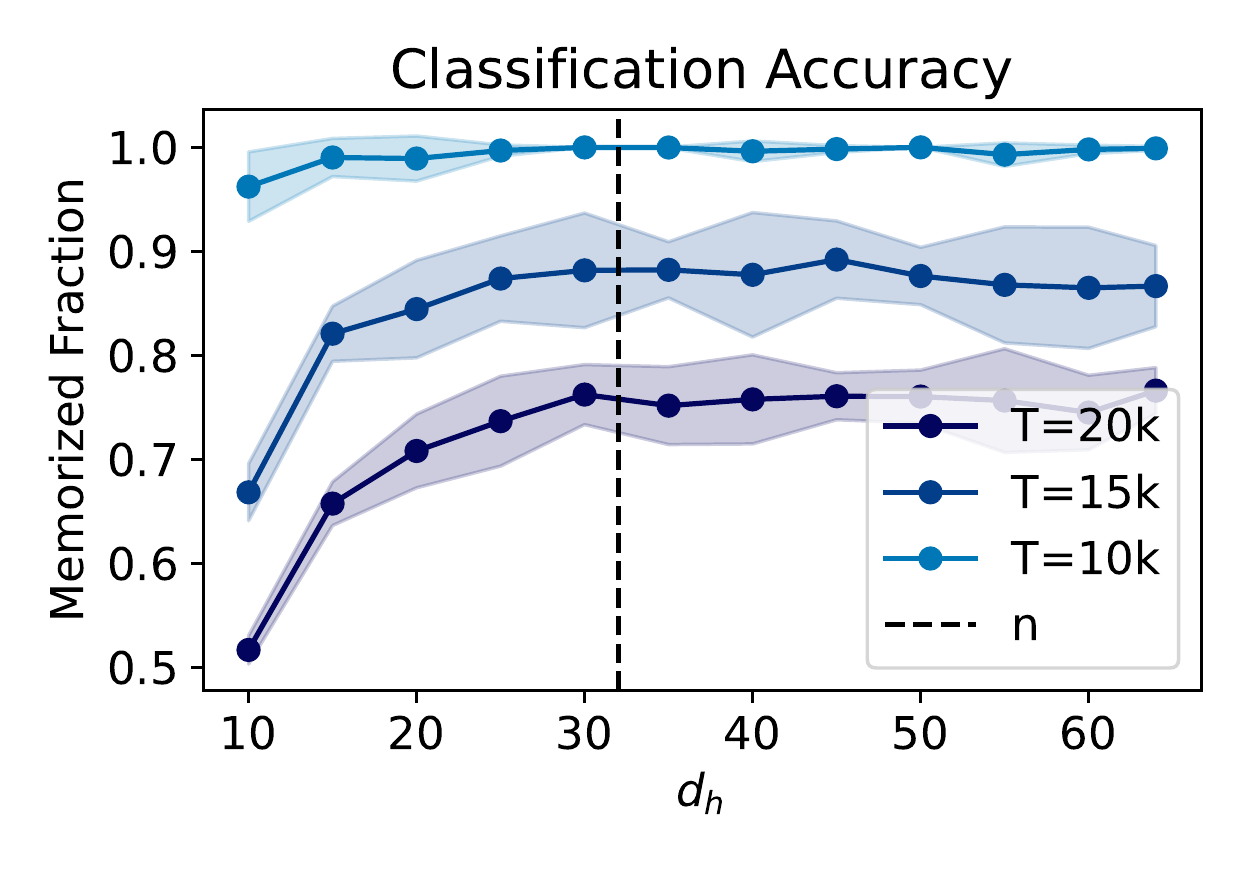}
        \caption{}
        \label{fig:shared_clf_dh}
    \end{subfigure}
    \vvspace{-0.2cm}
    \caption{Testing memorization as a function of number of heads (a), context size (b), and head size (c) for classification under Assumptions \ref{asp:c_lin_indep} and \ref{asp:q_gen_pos}. Examples generated synthetically with $d=64$ and shared context (see Proposition \ref{prop:rank_upperbound_same_ctx}). Memorization increases linearly with $H$, monotonically with $n$, and monotonically with $d_h$ as long as $d_h \leq n$. }
    \label{fig:share_ctx_clf_plots}
    \vvspace{-0.4cm}
\end{figure}

\vvspace{-0.2cm}
\section{Conclusion and Discussion}
\vvspace{-0.2cm}
In this \paper, we proved a lower bound on the memorization capacity of attention layers under a rather mild linear independence assumption. Our proof technique relies on establishing a tight bound on the rank of a matrix of intermediate representations in the attention layer.
We discussed implications of our lower bound which we also confirmed by additional synthetic experiments.
Our work opens up several promising avenues for future research. First, it would be valuable to extend our theoretical results beyond one layer of attention networks and to sequence-to-sequence learning scenarios. Second, while we studied some specific cases of interest, establishing general upper bounds on the memorization capacity of MHA remains an important open question. Finally, refining our data assumptions to narrow the gap between practical memorization abilities and theoretical guarantees could shed more light on the memorization capacity of Attention networks (see Appendix \ref{apx:upper_bound_gen_pos}). Ultimately, we hope that gaining a better understanding of the memorization abilities of Transformers could facilitate more efficient implementations and better use of data for training more effective, generalizable, and privacy-preserving models.

\subsubsection*{Acknowledgments}
This work was funded, in part, by NSERC DG Grants (No. RGPIN-2022-04636 and No. RGPIN-2021-03677), the Vector Institute for AI, Canada CIFAR AI Chair, and Oracle Cloud credits. 
Resources used in preparing this research were provided, in part, by the Province of Ontario, the Government of Canada through CIFAR, and companies sponsoring the Vector Institute (\url{www.vectorinstitute.ai}), Advanced Research Computing at the University of British Columbia, the Digital Research Alliance of Canada (\url{alliancecan.ca}), and the Oracle for Research program. 

\bibliography{refs}
\bibliographystyle{iclr2024_conference}
\newpage
\appendix
\begin{table}[h]
\centering
\begin{tabular}{c|l}
\toprule
Symbol & Description \\
\midrule
$\Acal$ & A single-layer multi-head Attention (MHA) mechanism \\
$\Tcal$ & Training set \\
$H$ & Number of heads in the MHA layer \\
$T$ & Training set size \\
$n$ & Context length \\
$\WK_h, \WQ_h \in \Rbb^{\de \times \dhead}, \WV_h \in \Rbb^{\de \times \dv}$ & Weight matrices for each head $h \in [H]$ \\
$\WO \in \Rbb^{H\dv \times \de}$ & Weight matrix to combine the outputs of different heads \\
$\WD \in \Rbb^{d}$ & Final read-out weight of the model \\
$\Wcalb$ & The entire set of parameters of the MHA layer \\
$\Eb \in \Rbb^{n \times \de}$ & Key/Context matrix containing $n$ key/context vectors \\
$\e \in \Rbb^{\de}$ & Query vector \\
$\y \in \Rbb$ & Ground truth label \\
$\alphab_h, \thetab_h, \z_h, \p_h$ & Intermediate variables in the MHA layer \\
$\vct{o}$ & Output of the MHA layer \\
$\hat{\y} \in \Rbb$ & Final output of the model \\
\bottomrule
\end{tabular}
\caption{Important notations used in the paper.}
\label{tab:notation}
\end{table}

\section{Experiment details}
In the synthetic experiment, for regression tasks, we generate uniform labels $y^\ts \sim \Ucal(0, 1)$, and for classification tasks, we draw uniform labels from $100$ classes $y^\ts \in \Ucal([100])$. For classification, we measure memorization by computing the average accuracy across examples in a dataset. For regression, we use the mean squared error (MSE) loss to compare memorization abilities. 
We train each task for at least $500$ epochs and at least $50,000$ optimization steps (whichever is larger) with a batch size of $256$, using an Adam optimizer with a learning rate of $0.001$, and a scheduler with linear warmup and Cosine decay. For the experiments reported in Figure \ref{fig:shared_clf_dh} and Figure \ref{fig:softmax_saturation} since the optimization is more challenging, we train for a total of $300,000$ optimization steps.

While our experimental results in Section \ref{sec:exp_lin_trend} conclusively show that increasing the number of heads $H$ or context size $n$ improves the model's memorization capability, we caution the reader that the experiments are done purely through gradient descent optimization, which has limitations when it comes to studying memorization. 
First, it is not guaranteed that if a memorizing solution exists, GD would be able to find it. 
Second, optimization requires hyperparameter tuning and a large number of gradient updates to achieve memorization, especially in under-parameterized regimes and large-context sizes~\citep{icl-linear}. We mitigate these issues by running small-scale experiments with a sufficiently high number of optimization steps and adequate learning rate scheduling. Running the experiments reported in our paper takes approximately 20 GPU days on an NVIDIA V100 GPU.
However, this procedure becomes highly resource-exhaustive and prohibitive in larger-scale experiments. For instance, we find out that having large $d_h$ (beyond $n$) can generally improve optimization speed, and lead to better memorization. However, with sufficient large steps, this gap closes, and gains in memorization vanish in the $d_h > n$ regime (as observed in Figure \ref{fig:shared_clf_dh}).

\section{Proof of Proposition \ref{prop:rank_z}} \label{apx:proof_rank_z}

For convenience let us define $\ndh = \min(n, d_h)$. WLOG let us assume that $T \geq H(\ndh-1)+1$. This is because for the case of $T < H(\ndh-1)+1$ we can add extra dummy data points that satisfy Assumptions \ref{asp:q_gen_pos} and \ref{asp:c_lin_indep} and make $T = H(\ndh-1)+1$, then prove $\Zb$ has full row-rank, and finally remove the extra rows of $\Zb$ (\ie dummy data points) to prove the statement. 

We prove the lemma by induction on $H$. We will demonstrate that by adding each head, we can increase the rank of $\Zb$ by $\ndh-1$. The key to proving the induction is the following two claims:

\begin{claim} [Fit $\ndh$ examples in one head] \label{clm:first_n_points}
    Assume that $\ndh$ examples $\left\{\left(\Eb^\ts, \e^\ts\right)\right\}_{t=1}^{\ndh}$ satisfy Assumptions \ref{asp:q_gen_pos} and \ref{asp:c_lin_indep}. Fix any head index $h \in [H]$. For any arbitrary set of convex combination vectors $\thetab^\ts_{h,*}~\forall t \in [\ndh]$, there exists an Attention head with key/query weights $\WK_h, \WQ_h$ which achieves such coefficients for the set of $\ndh$ data points.
\end{claim}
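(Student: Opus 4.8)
The plan is to directly construct the weight matrix $\W_h := \WK_h\WQ_h^\top \in \R^{d\times d}$ that realizes the prescribed softmax coefficients, and then factor $\W_h$ into $\WK_h,\WQ_h$ of the right dimensions. Since $\sft$ depends only on logit \emph{differences}, for each example $t\in[\ndh]$ we need a vector of logits $\alphab_h^{(t)} = \Eb^{(t)}\W_h\e^{(t)} \in \R^n$ that matches $\log\thetab_{h,*}^{(t)}$ up to an additive constant; equivalently, writing $\bm{\ell}^{(t)} := \log\thetab_{h,*}^{(t)}$ (interpreted coordinatewise, with the convention that zero coefficients are handled by a limiting/large-constant argument as in Sub-step 1.B of the main proof), it suffices to find $\W_h$ with $\Eb^{(t)}\W_h\e^{(t)} = \bm{\ell}^{(t)} + c_t\ones$ for some scalars $c_t$.

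First I would set up this as a linear system in the entries of $\W_h$. The map $\W_h \mapsto \big(\Eb^{(t)}\W_h\e^{(t)}\big)_{t\in[\ndh]}$ is linear, with target space of dimension $n\cdot\ndh$ (minus the $\ndh$ free constants $c_t$, so effectively $(n-1)\ndh + \ndh = n\cdot\ndh$ constraints if we pin the $c_t$, or $(n-1)\ndh$ genuine constraints). The key step is to show this linear map is surjective onto the relevant target, which is exactly where the two assumptions enter. By Assumption \ref{asp:q_gen_pos}, the query vectors $\{\e^{(t)}\}_{t=1}^{\ndh}$ are linearly independent (a set of $\ndh \le n$ vectors with Kruskal rank $\ge n$), so I can pick dual vectors $\{\bm{u}_t\}_{t=1}^{\ndh}$ with $\bm{u}_t^\top\e^{(s)} = \delta_{ts}$; this lets me treat each example's contribution independently by writing $\W_h = \sum_t \M_t\bm{u}_t^\top$ for matrices $\M_t \in \R^{d\times n}$ to be chosen, giving $\Eb^{(t)}\W_h\e^{(t)} = \Eb^{(t)}\M_t$. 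Then Assumption \ref{asp:c_lin_indep} says $\Eb^{(t)} \in \R^{n\times d}$ has rank $n$, hence has a right inverse, so I can solve $\Eb^{(t)}\M_t = \bm{\ell}^{(t)}\bm{v}^\top$ (or any matrix with the desired column, e.g. $\M_t := \Eb^{(t)\dagger}\bm{\ell}^{(t)}\bm{v}^\top$ for a fixed nonzero $\bm{v}$, then apply the appropriate column vector to $\e^{(t)}$) to realize exactly the logit vector $\bm{\ell}^{(t)}$ for example $t$. This pins down $\W_h$ up to the dimension bound.

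The remaining step is the rank constraint: when $d_h < n$ (so $\ndh = d_h$), I need $\W_h = \WK_h\WQ_h^\top$ with inner dimension $d_h$, i.e. $\rank(\W_h)\le d_h$. Here I would only demand that $\ndh = d_h$ examples be fitted, and observe that $\W_h = \sum_{t=1}^{d_h}\M_t\bm{u}_t^\top$ is a sum of $d_h$ matrices each of which I can take to be rank one (choosing $\M_t$ above to be rank one, which is possible since its column space need only contain the single vector giving the right logits), so $\rank(\W_h)\le d_h$ and the factorization into $\WK_h,\WQ_h\in\R^{d\times d_h}$ exists. I expect the main obstacle to be the bookkeeping around zero entries of $\thetab_{h,*}^{(t)}$ — softmax never outputs exact zeros, so a coefficient vector on the boundary of the simplex can only be approached in the limit; the clean fix is to state the claim for interior $\thetab_{h,*}^{(t)}$ and invoke the saturation property (Lemma \ref{lemma:softmax_saturation}) with a large multiplicative constant exactly as done in the main-text argument, or alternatively note that the induction only ever needs coefficient vectors that are achievable, so it suffices to certify a sufficiently rich achievable set rather than literally all of the simplex.
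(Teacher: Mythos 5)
Your construction is correct and essentially the same as the paper's: the paper likewise passes from the target coefficients to logits, uses a right inverse of each $\Eb^{(t)}$ (Assumption \ref{asp:c_lin_indep}) to reduce the problem to $\Wb\,\e^{(t)} = \Eb^{(t)\dagger}\alphab^{(t)}_*$, solves this linear system via the linear independence of the $\ndh \le n$ query vectors (Assumption \ref{asp:q_gen_pos}) --- your dual-basis decomposition is exactly this pseudo-inverse solve --- and observes the solution has rank at most $\ndh \le d_h$, so it factors as $\WK_h\WQ_h^\top$ with $\WK_h,\WQ_h \in \Rbb^{d\times d_h}$. Your caveat about coefficient vectors on the boundary of the simplex is a fair point that the paper handles only implicitly (a footnote noting that only the existence of some producing logits is required, and the induction only ever needs achievable interior coefficients), and the small dimensional slip in your decomposition (the $\M_t$ should just be vectors $\vct{m}_t \in \Rbb^{d}$ with $\Eb^{(t)}\vct{m}_t = \bm{\ell}^{(t)}$) does not affect the argument.
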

\begin{claim} [Fit $\ndh-1$ examples in one head, leave the rest intact] \label{clm:fit_into_one_head}
    Assume that $T'\geq \ndh$ examples $\left\{\left(\Eb^\ts, \e^\ts\right)\right\}_{t=1}^{T'}$ satisfy Assumptions \ref{asp:q_gen_pos} and \ref{asp:c_lin_indep}, and define $\Rcal=\{T' - \ndh+2, \dots, T'-1, T'\}$ as the set of last $\ndh-1$ examples' indices and $\Rcal^c=[T]-\Rcal$ as its complement for convenience. Fix any head index $h \in [H]$. Then, there exist $\thetab_+^\ts, t\in\Rcal^c$ such that for any arbitrary set of desired convex combination vectors $\thetab_*^\ts, t\in\Rcal$ and any $\delta>0$, there exists key/query weights $\WK_h, \WQ_h$ such that both  $\thetab^\ts_h=\thetab_*^\ts, \forall t\in\Rcal$ and $\norm{\thetab^\ts_h - \thetab_+^\ts}_1 \leq \delta, t\in\Rcal^c$.
\end{claim}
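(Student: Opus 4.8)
The plan is to construct the head's key/query interaction matrix $\W:=\WK_h\WQ_h^\top$ in the form $\W=\W^*+c\,\W^+$, a sum of two matrices with complementary roles, and then let the scalar $c>0$ tend to infinity. I would choose $\W^+$ (this is the content of Lemma~\ref{lemma:lin_eq_null}) to satisfy two properties: (i) it is invisible to the examples in $\Rcal$, i.e. $\Eb^{(t)}\W^+\eb^{(t)}=\zeros$ for every $t\in\Rcal$; and (ii) for every $t\in\Rcal^c$ the $n$ entries of the logit vector $\Eb^{(t)}\W^+\eb^{(t)}\in\Rbb^{n}$ are pairwise distinct. Property (ii) lets me commit \emph{up front} to the vectors $\thetab_+^{(t)}$, $t\in\Rcal^c$: take $\thetab_+^{(t)}$ to be the one-hot indicator of the unique coordinate maximizing $\Eb^{(t)}\W^+\eb^{(t)}$. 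These depend only on $\W^+$, not on the still-unrevealed $\thetab_*^{(t)}$ or $\delta$, matching the order of quantifiers in the statement. Given then any targets $\thetab_*^{(t)}$, $t\in\Rcal$, I build $\W^*$ realizing them exactly on $\Rcal$ and set $\W=\W^*+c\W^+$. For $t\in\Rcal$, property (i) gives $\Eb^{(t)}\W\eb^{(t)}=\Eb^{(t)}\W^*\eb^{(t)}$ for \emph{every} $c$, hence $\thetab_h^{(t)}=\sft(\Eb^{(t)}\W^*\eb^{(t)})=\thetab_*^{(t)}$ exactly. For $t\in\Rcal^c$, the logit vector equals $\Eb^{(t)}\W^*\eb^{(t)}+c\,\Eb^{(t)}\W^+\eb^{(t)}$: the first term is bounded, and multiplying the second by $c>0$ preserves its unique argmax, so for $c$ large the whole vector attains its maximum at that same coordinate; the softmax saturation property (Lemma~\ref{lemma:softmax_saturation}) then gives $\thetab_h^{(t)}\to\thetab_+^{(t)}$ as $c\to\infty$. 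Since $\Rcal^c$ is finite, a single finite $c$ secures $\norm{\thetab_h^{(t)}-\thetab_+^{(t)}}_1\le\delta$ simultaneously for all $t\in\Rcal^c$.

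\emph{Constructing $\W^*$.} I would mirror the argument behind Claim~\ref{clm:first_n_points}, restricted to the $\ndh-1$ examples of $\Rcal$, as an induction with rank-one updates. Suppose $\W^*_k$ already matches the logits of the first $k$ examples of $\Rcal$ up to the additive $\ones$-direction that softmax ignores. Add an update $\vct{u}\vct{w}^\top$ with $\vct{w}$ orthogonal to the query vectors of those $k$ examples and $\vct{w}^\top\eb^{(t_{k+1})}\ne0$ for the $(k+1)$-st — possible because any at most $\ndh-1\le n-1$ query vectors are linearly independent by Assumption~\ref{asp:q_gen_pos}. This update leaves the first $k$ logit vectors untouched (their contribution is $(\vct{w}^\top\eb^{(t)})\Eb^{(t)}\vct{u}=\zeros$), while it shifts the $(k+1)$-st by $(\vct{w}^\top\eb^{(t_{k+1})})\,\Eb^{(t_{k+1})}\vct{u}$; since $\Eb^{(t_{k+1})}$ has rank $n$ by Assumption~\ref{asp:c_lin_indep}, it maps $\Rbb^{d}$ onto $\Rbb^{n}$, so $\vct{u}$ can be chosen to bring the $(k+1)$-st logit vector into the desired coset modulo $\ones$. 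After $\ndh-1$ steps this yields $\W^*$ of rank at most $\ndh-1$. (Targets $\thetab_*^{(t)}$ with zero entries, which would require infinite logits, are accommodated by a limiting/density argument, since downstream $\W^*$ is used only to make certain $\zb$-vectors linearly independent, a generic property.)

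\emph{Constructing $\W^+$ — the main obstacle.} The crux is securing (i) and (ii) together while respecting the rank budget $d_h$. Condition (i) is a homogeneous linear system: in the Frobenius pairing the $s$-th logit of example $t$ is $\langle\W^+,\eb_s^{(t)}\eb^{(t)\top}\rangle$, so (i) asks $\W^+\in V^\perp$ with $V:=\fspan\{\eb_s^{(t)}\eb^{(t)\top}:t\in\Rcal,\ s\in[n]\}$; condition (ii) is the open, generic requirement $\langle\W^+,(\eb_i^{(t)}-\eb_j^{(t)})\eb^{(t)\top}\rangle\ne0$ for all $t\in\Rcal^c$ and $i\ne j$. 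It thus suffices that none of the finitely many matrices $(\eb_i^{(t)}-\eb_j^{(t)})\eb^{(t)\top}$ lies in $V$, for then each corresponding functional is not identically zero on $V^\perp$, and $V^\perp$ cannot be covered by the finite union of the proper subspaces on which these functionals vanish. The non-membership is exactly where the two assumptions enter: every matrix in $V$ has all of its rows in $\fspan\{\eb^{(t)}:t\in\Rcal\}$, whereas $(\eb_i^{(t)}-\eb_j^{(t)})\eb^{(t)\top}$ has row space $\fspan\{\eb^{(t)}\}$, nonzero because $\eb_i^{(t)}\ne\eb_j^{(t)}$ by Assumption~\ref{asp:c_lin_indep}, and for $t\in\Rcal^c$ the query $\eb^{(t)}$ lies outside $\fspan\{\eb^{(t')}:t'\in\Rcal\}$ since those at most $\ndh\le n$ queries are linearly independent by Assumption~\ref{asp:q_gen_pos}. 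To keep the rank small, I would run the search inside the rank-one family $\W^+=\vct{a}\vct{b}^\top$ with $\vct{b}$ in the at-least-two-dimensional space $\{\eb^{(t)}:t\in\Rcal\}^\perp$, which automatically gives (i); then (ii) reduces to $\vct{b}^\top\eb^{(t)}\ne0$ together with $(\eb_i^{(t)}-\eb_j^{(t)})^\top\vct{a}\ne0$ for $t\in\Rcal^c$, $i\ne j$, again avoidable proper-subspace conditions on $\vct{b}$ and then on $\vct{a}$. This forces $\rank(\W)\le(\ndh-1)+1=\ndh\le d_h$, so $\W=\WK_h\WQ_h^\top$ with $\WK_h,\WQ_h\in\Rbb^{d\times d_h}$ is realizable. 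What remains is routine: checking that a \emph{finite} $c$ (not merely the limit) works, which follows from uniformity of the saturation estimate over the finitely many examples of $\Rcal^c$.
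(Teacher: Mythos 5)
Your proposal is correct and follows essentially the same route as the paper: the decomposition $\W=\W^*+c\,\W^+$ with $\W^+$ vanishing on the $\Rcal$-logits and producing distinct logits on $\Rcal^c$ (the content of Lemma~\ref{lemma:lin_eq_null}), the one-hot $\thetab_+^\ts$ fixed by the argmax of the $\W^+$-logits, softmax saturation (Lemma~\ref{lemma:softmax_saturation}) with a finite $c$, and the rank bound $\ndh\le d_h$ to realize $\W=\WK_h\WQ_h^\top$. Your only deviations are cosmetic: you build $\W^*$ by greedy rank-one updates instead of the paper's pseudo-inverse solution of the linear system, and you obtain the rank-one $\W^+$ by a generic subspace-avoidance argument rather than the paper's random-Gaussian construction.
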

Claim \ref{clm:first_n_points} states that we can design an attention head that produces any arbitrary $\thetab_h$ coefficients for $r$ examples.
Furthermore, Claim \ref{clm:fit_into_one_head} reinforces Claim \ref{clm:first_n_points} by demonstrating the possibility of tuning coefficients $\thetab_h$ of $\ndh-1$ examples while keeping the rest of the examples approximately fixed with any precision $\delta$. However, this comes at the cost of fitting one less example into a single head. The proofs of both claims will be deferred until the end of the proposition. We are now ready to prove the proposition by induction on $H$.

\emph{Step 1. The base of induction $H=1$.} First, we prove the base of induction for $H=1$. Let us only consider the first $\ndh$ examples (\ie, $T' = \ndh$). Notice that the matrix $\Zb$ has $\ndh$ rows and $d$ columns in this case. Moreover, the $t$-th row of $\Zb$ has the form 
\[
\Zb_t = \Eb^{\ts^\top} \thetab_1^\ts = \begin{bmatrix}
    \e_1^\ts, \e_2^\ts, \dots, \e_n^\ts
\end{bmatrix} \thetab_1^{(t)},
\]
and is a convex combination of the context tokens for the example $t$.
According to Assumption \ref{asp:c_lin_indep}, the span of context vectors $\e_{[n]}^\ts$ has dimension $n$. 
Therefore, we can inductively find $\thetab_1^{(1)}, \thetab_1^{(2)}, \dots, \thetab_1^{(\ndh)}$ such that each row $\Zb_t$ is not in the span of the previous $t-1$ rows. We can continue this process for all $t \leq \ndh$ since the span of the previous rows has at most $t-1$ dimensions, but the convex hull of $\e_{[n]}^\ts$ has at least $n$ linearly independent elements ($\ndh \leq n$), so it is not covered by the previous $t-1$ rows. Therefore, there exists $\{\thetab_1^\ts\}_{t=1}^{\ndh}$ such that the resulting $\Zb$ matrix has rank at least $r$. Then, using the result of Claim \ref{clm:first_n_points}, there exists Attention weights $\WK_1, \WQ_1$ that produce the desired set of convex combinations $\{\thetab_1^\ts\}_{t=1}^{r}$, and subsequently the first $r$ rows of the desired feature matrix $\Zb$.

\emph{Step 2. Induction Step.}
Next, assume that for $H=H'$, there exists heads with Attention weights $\{(\WK_h, \WQ_h)\}_{h=1}^{H'}$ such that matrix $\Zb$ has rank $H'(\ndh-1) + 1$. We prove for $H = H'+1$ there exists Attention weights $\{(\WK_h, \WQ_h)\}_{h=1}^{H'+1}$ such that the resulting matrix $\Zb$ attains rank $(H'+1) (\ndh-1) + 1$. Let us refer to the matrices derived from the induction hypothesis by a prime superscript (\ie, $\Zb'$, and $\{(\WK_h', \WQ_h')\}_{h=1}^{H'}$). Now, for the case of $H=H'+1$, we set the weights of the first $H'$ Attention heads to be equal to that of $H'$. Namely,
\[
\WK_h := \WK'_h, \quad \WQ_h := \WQ'_h
\]
for all $h \in [H']$. Then, the feature matrix $\Zb$ can be written as the following block form:
\[
\Zb = \left[ \begin{array}{ccc|c}
& & & \\
& \Zb' & & \Zb'' \\
\undermat{H'd}{& & &} \undermat{d}{~~~~~~} \\
\end{array}
\right],
\]
\newline
\newline
where $\Zb' \in \Rbb^{T \times dH'}$ is the matrix derived from the hypothesis and $\Zb'' \in \Rbb^{T \times d}$ is the block corresponding to the new head $H'+1$, whose parameters we have not yet determined. Now, we know $\rank(\Zb') \geq H'(\ndh-1)+1$. Define $k := \rank(\Zb')$. WLOG, assume that the first $k$ rows of the matrix $\Zb'$ are linearly independent (otherwise, we can rearrange the rows of $\Zb$, \ie the examples, to satisfy this property). Now, take the last $\ndh-1$ rows that are linearly dependent on the first $k$ rows. We defer the case for when less than $\ndh-1$ such rows exist (\ie, $k+\ndh-1 > T$) to the later stages of the proof. 

Let us partition the matrix $\Zb$ into two blocks: the first $T-\ndh+1$ rows and the last $\ndh-1$ rows. Namely,
\begin{align} \label{eq:z_blocks}
\Zb = 
\left[ \begin{array}{c}
\Zb_a \\
\hline
\Zb_b \\
\end{array}
\right] =
\left[ \begin{array}{ccc|c}
& & & \\
& \Zb_a' & & \Zb_a'' \\
& & & \\
\hline 
& \Zb_b' & & \Zb_b'' \\
\end{array}
\right].
\end{align}
Observe that the rows of $\Zb_b'$ are all linear combinations of rows of $\Zb_a'$. So, we can zero out the $\Zb_b'$ block by adding a linear combination of rows of $\Zb_a'$. Such operations add a combination of rows of $\Zb_a''$ to the $\Zb_b''$ as well. Notice that the rank of $\Zb$ stays the same since such operations do not change rank. So let us call $\Zb_{op}$ the resulting matrix:
\[
\Zb_{op} = 
\left[ \begin{array}{ccc|c}
& & & \\
& \Zb_a' & & \Zb_a'' \\
& & & \\
\hline 
& \zeros & & \Zb_b'' + \Ab \\
\end{array}
\right],
\]
where $$\Ab\ := \Ab(\Zb'_a, \Zb'_b, \Zb''_a) := \Cb(\Zb'_a, \Zb'_b) \Zb''_{a} \in \Rbb^{(\ndh-1) \times d}$$ contains a linear combination of rows of $\Zb_a''$ (whose coefficients are in $\Cb(\Zb'_a, \Zb'_b)$ and are a function of $\Zb'_a, \Zb'_b$). Since
\[
\rank(\Zb) = \rank(\Zb_{op}) \stackrel{\text{Lemma \ref{lemma:rank_block}}}{\geq} \rank(\Zb_b'' + \Ab) + \rank(\Zb_a') \geq \rank(\Zb_b'' + \Ab) + H'(\ndh-1) + 1,
\]
we have to find suitable Attention weights such that $\rank(\Zb_b'' + \Ab) \geq \ndh-1$ to achieve a $(H'+1)(\ndh-1) + 1$ rank for $\Zb$ and complete the proof. 

Now, consider Claim \ref{clm:fit_into_one_head} for some fixed convex combination vectors $\{\thetab_+^\ts\}_{t \in \Rcal^c}$, and any desired $\{\thetab_{H'+1}^\ts\}_{t \in \Rcal}$. For any $\delta>0$, the claim statement gives key/query weights, and subsequently a resulting intermediate matrix $\Zb_\delta$. 
Let us define $\Zb_+$ as the \emph{limiting matrix} $\lim_{\delta \rightarrow 0^+} \Zb_\delta$, and similarly define its sub-blocks corresponding to this head by the same sign, \ie, $\Zb''_{a,+}, \Zb''_{b, +}, \Ab_+$, and similarly $\Zb''_{a,\delta}, \Zb''_{b, \delta}, \Ab_\delta$. Notice that $\Ab_+$ is only a function of $\Zb'_a, \Zb'_b, \Zb''_{a,+}$. As a result, different choices of $\{\thetab_{H'+1}^\ts\}_{t \in \Rcal}$ produce the same $\Ab_+$ since $\{\thetab_+^\ts\}_{t \in \Rcal^c}$ remains fixed under Claim \ref{clm:fit_into_one_head}.
Therefore, we first find $\thetab^\ts_{H'+1}$ such that $\rank(\Zb''_{b, +} + \Ab_+) \geq \ndh-1$, then we find small enough $\delta$ that properly approximates the limiting matrices.
Similar to the base of induction (Step 1), we inductively build the individual rows of $\Zb''_{b, +}$, but the difference is that the original matrix $\Zb''_{b, +}$ is added to a fixed matrix $\Ab_+$. Hence, at each inductive step $t$, the following modified constraint must be satisfied:
\[
\left[ \Zb''_{b, +} + \Ab_+ \right]_t \notin \fspan \left\{ \left[\Zb''_{b, +} + \Ab_+ \right]_1, \left[\Zb''_{b, +} + \Ab_+ \right]_2, \dots, \left[\Zb''_{b, +} + \Ab_+ \right]_{t-1} \right\},
\]
which can be satisfied by a stronger constraint:
\[
\left[ \Zb''_{b, +}\right]_t \notin \fspan \left\{ \left[\Ab_+\right]_t, \left[\Zb''_{b, +} + \Ab_+ \right]_1, \left[\Zb''_{b, +} + \Ab_+ \right]_2, \dots, \left[\Zb''_{b, +} + \Ab_+ \right]_{t-1} \right\}.
\]
Hence, since the input context tokens are in an $n$ dimensional space, the set of feasible valid rows $[\Zb''_{b, +}]_t$ lies in an $n$ dimensional space as well, which allows us to continue for $n-1, (n \geq \ndh)$ rows (and not $n$, due to presence of $\left[\Ab_+\right]_t$), find suitable $\{\thetab^{\ts}_{H'+1} \}_{t \in \Rcal}$, and build a full-rank matrix $\Zb''_{b, +}$ with $\rank(\Zb''_{b, +} + \Ab_+) \geq \ndh-1$.
Notice that no well-defined key/weights generate $\Zb_+$ as the weights need to be infinite (see proof of Claim \ref{clm:fit_into_one_head}). However, we only need a small enough $\delta$ such that 
\begin{align} \label{eq:z_rank_preserve_approx}
\rank(\Zb''_{b, \delta} + \Ab_\delta) \geq \rank(\Zb''_{b, +} + \Ab_+) \geq \ndh-1
\end{align}

for the weights to be well-defined. In Appendix \ref{apx:z_rank_preserve_approx} we prove the existence of such $\delta$. Therefore, there exists $\WK_{H'+1}, \WQ_{H'+1}$ that produces $\rank(\Zb_b'' + \Ab) \geq \ndh-1$, and the proof is complete. 

\emph{The case of less than $\ndh-1$ linearly dependent rows of $\Zb'$.} In this case, we have $k+\ndh-1>T$. Let us define $m := T - k < \ndh-1$. Since we have assumed $T \geq (H'+1)(\ndh-1) + 1$, this means that that $\Zb'$ already has a rank at least $(H'+1)(\ndh-1) + 1 - m$. So, the head index $H'+1$ needs to increase the rank of $\Zb'$ by $m$. Since $m \leq \ndh-1$, all the above steps also apply to $m$ and the rest of the proof procedure is the same.

\subsection{Proof of Claims \ref{clm:first_n_points} and \ref{clm:fit_into_one_head}}
First, note each $\thetab^\ts_{h,*}$ is derived from applying Softmax nonlinearity on the logits $\alphab^\ts_{h,*}$. So, we need to solve a system of equations for $\alphab^\ts_{h,*}$ for each $t \in [T]$\footnote{The set of logits that produce a convex combination is not necessarily unique. However, uniqueness is not a requirement in this case, as we only require the existence of some set of logits that produces a convex combination.}.
Since our focus is on a \emph{single} head, we omit the subscript $h$ for simplicity. Furthermore, we define $\Wb := \WK \WQ^\top$. Now, for data indices $t_1< t_2< \dots< t_m \in [T]$ where $m \leq n$, we are interested in solving the following system of equations:
\begin{align*}
    \Eb^{(t_i)} \Wb \e^{(t_i)} = \alphab^{(t_i)}_* \quad \forall i \in [m]. \label{eq:system_eq_alpha}
\end{align*}
Since each $\Eb^{(t_i)} \in \Rbb^{n \times d}$ is full-rank with rank $n$, it has a pseudo right inverse $\Eb^{{(t_i)}^\dagger} \in \Rbb^{d \times n}$ such that $\Eb^{{(t_i)}} \Eb^{(t_i)^\dagger} = \Iden$ . So, we solve the following system of equations 
\begin{align*}
    &  \Wb \e^{(t_i)} = \Eb^{{(t_i)}^\dagger} \alphab^{(t_i)}_* \quad \forall i \in [m], \\
    \Rightarrow & \Eb^{(t_i)} \Wb \e^{(t_i)} = \alphab^{(t_i)}_* \quad \forall i \in [m].
\end{align*}
Now, let us define $\betab^{(t_i)}_* := \Eb^{{(t_i)}^\dagger} \alphab^{(t_i)}_*$. Then, we get the following linear system of equations to solve:
\[
\Wb \begin{bmatrix}
\e^{(t_1)}, & \e^{(t_2)}, & \dots, & \e^{(t_m)}\\
\end{bmatrix} = \begin{bmatrix}
\betab^{(t_1)}_*, & \betab^{(t_2)}_*, & \dots, & \betab^{(t_m)}_*\\
\end{bmatrix}.
\]
Due to Assumption \ref{asp:q_gen_pos} the query tokens have Kruskal Rank at least $n$, and $m \leq \ndh \leq n$, so the query token matrix in the left-hand-side above has a pseudo left inverse. Thus, $\Wb$ has at least one solution with rank at most $m$. Assuming the solution is $\Wb^* \in \Rbb^{d\times d}$, there exists $\WK_h, \WQ_h \in \Rbb^{d \times d_h}$ such that $\WK_h \WQ_h^\top = \Wb^*$ for the weights of the Attention head. So, we can prove Claim \ref{clm:first_n_points} by setting $m=r$ and $(t_1, t_2, \dots t_r) = (1, 2, \dots, r)$.

Next, to prove Claim \ref{clm:fit_into_one_head}, we prove that if $m < \ndh \leq n$, $(t_1, t_2, \dots, t_m) = (T-m+1, T-m+2, \dots, T)$, then we can not only achieve any desired logits for the examples $(T-m+1, T-m+2, \dots, T)$, but also keep the $\{\thetab^{\ts}\}_{t=1}^{T-m}$ approximately unchanged. So far, we have found a $\Wb^*$ that achieves the set of desired logits. Now, there exists a rank-one linear transformation $\Wb^+ \in \Rbb^{d \times d}$ such that 
\begin{align}
\Eb^\ts \Wb^+ \e^\ts &= \zeros \quad \text{for all } T-m < t \leq T,\\
\left[ \Eb^\ts \Wb^+ \e^\ts \right]_i &\neq \left[ \Eb^\ts \Wb^+ \e^\ts \right]_j \quad \text{for all } 1 \leq t \leq T-m,~i,j \in [n], i \neq j.
\end{align}
This is due to Lemma \ref{lemma:lin_eq_null}.

So, we set $\Wb := \Wb^* + c \Wb^+$, and choose a large enough $c$. This way, the desired logits for $T-m < t \leq T$ stay the same since $c \Wb^+$ has no effect on them, while for the rest of logits, $c \Wb^+$ dominates $\Wb^*$ and saturates the Softmax operator. Lemma \ref{lemma:softmax_saturation} proves the existence of such $c$, so, we can prove Claim \ref{clm:fit_into_one_head} by finding $\WK_h, \WQ_h \in \Rbb^{d \times d_h}$ such that $\WK_h \WQ_h^\top = \Wb^* + c \Wb^+$. Note that this construction is feasible because $\rank(\Wb^* + c \Wb^+) \leq m + 1 \leq \ndh \leq d_h$.

\subsection{Proof of existence of $\delta$ for Eq. \eqref{eq:z_rank_preserve_approx}} \label{apx:z_rank_preserve_approx}

Let us call the $\thetab^\ts_h$ derived from Claim \ref{clm:fit_into_one_head} as $\thetab_\delta^\ts$. So, for any $\delta > 0$ we have:
\begin{align*}
\norm{\thetab_\delta^\ts - \thetab_+^\ts}_1 &< \delta, \text{ for all } t \in \Rcal^c \\
\thetab_\delta^\ts = \thetab_+^\ts &= \thetab_*^\ts, \text{ for all } t \in \Rcal \\
\Rightarrow \Zb''_{b, \delta} &= \Zb''_{b, +}.
\end{align*}
To prove the rank inequality argument, we use a corollary of the Weyl's singular value theorem (Lemma \ref{lemma:weyl_singular}) and prove:
\begin{align} \label{eq:desired_singular_val}
\Delta_\delta := \left( \Zb''_{b, \delta} + \Ab_\delta \left) - \right( \Zb''_{b, +} + \Ab_+ \right), \quad \sigma_{\max} \left( \Delta_\delta \right) < \sigma_{\min}(\Zb''_{b, +} + \Ab_+),
\end{align}
which in turn leads to $\rank(\Zb''_{b, \delta} + \Ab_\delta) \geq \rank(\Zb''_{b, +} + \Ab_+)$. First, let us assume $\Zb''_{b, +} + \Ab_+$ is not a zero matrix (otherwise we must have $r-1=0$, in which case any $\delta > 0$ would satisfy the inequality). We now bound the spectral norm of the perturbation matrix $\Delta_\delta$, \ie, $\sigma_{\max}(\Delta_\delta)$.
\begin{align*}
    \sigma_{\max}(\Delta_\delta) &= \norm{\Delta_\delta}_2 \\
    &= \norm{\Ab_\delta - \Ab_+}_2 \tag{$\Zb''_{b, +} = \Zb''_{b, \delta}$} \\
    &= \norm{\Cb(\Zb'_a, \Zb'_b) \Zb''_{a, \delta} - \Cb(\Zb'_a, \Zb'_b) \Zb''_{a, +}}_2 \\
    &= \norm{\Cb(\Zb'_a, \Zb'_b) \left( \Zb''_{a, \delta} - \Zb''_{a, +} \right)}_2 \\
    &\leq \norm{\Cb(\Zb'_a, \Zb'_b)}_2 \norm{\Zb''_{a, \delta} - \Zb''_{a, +} }_2 \\
    &\leq \norm{\Cb(\Zb'_a, \Zb'_b)}_2 \norm{\Zb''_{a, \delta} - \Zb''_{a, +} }_F,
\end{align*}
where $\Cb(\Zb'_a, \Zb'_b)$ is a constant matrix containing coefficients derived from zeroing out $\Zb'_b$ using the rows of $\Zb'_a$. To bound the Frobenius norm of $\Zb''_{a, \delta} - \Zb''_{a, +}$, notice that $t$-th row of this matrix is:
\begin{align*}
\norm{\left[ \Zb''_{a, \delta} - \Zb''_{a, +} \right]_t}_2 &= \norm{{\Eb^{(t)}}^\top \thetab_\delta^{(t + T')} - {\Eb^{(t)}}^\top \thetab_+^{(t)}}_2 \\
&= \norm{{\Eb^{(t)}}^\top \left( \thetab_\delta^{(t)} -\thetab_+^{(t)} \right) }_2 \\
&\leq \norm{{\Eb^{(t)}}^\top}_{1,2} \norm{\left( \thetab_\delta^{(t)} -\thetab_+^{(t)} \right) }_1 \\
&\leq e_{\max} \delta,
\end{align*}
where $e_{\max}$ is the maximum token Euclidian norm of all tokens in the dataset. Therefore:
\[
\norm{\Zb''_{a, \delta} - \Zb''_{a, +} }_F \leq \sqrt{T'}e_{\max}\delta.
\]
Hence, choosing
\[
\delta \leq \frac{\sigma_{\min}(\Zb''_{b, +} + \Ab_+)}{\sqrt{T'}e_{\max}\norm{\Cb(\Zb'_a, \Zb'_b)}_2}
\]
gives the desired inequality in Eq. \eqref{eq:desired_singular_val}.

\section{Extension of Theorem \ref{thm:memory-cap} to Attention with skip connection} \label{apx:extend_thm_skip}
Theorem \ref{thm:memory-cap} holds true for when we have Attention with skip connection. 
Recall that skip connection keeps all the computational mechanisms of MHA the same except for Eq. \eqref{eq:mha_output}, where the updated equation is:
\begin{align*}
\vct{o}    &:= \WO^\top\begin{bmatrix}\p_1; \p_2; \dots; \p_H \end{bmatrix} + \e   &(\vct{o} \in \Rbb^{\de})
\end{align*}

The only modification that we have to make, is to modify the labels. Let us consider the setting of Theorem \ref{thm:memory-cap}, with dataset $\Tcal = \left\{\left(\Eb^\ts, \e^\ts, \y^\ts \right)\right\}_{t=1}^{T}$ and Attention weights $\Wcalb$. Now, if we add the skip connection to the Attention layer with the same set of weights, the new $\hat{\y}^\ts$ will be:
\[
\hat{\y}^\ts= \WD^\top (\e^\ts + \ob^\ts) = \Iden_{\dout \times d}(\e^\ts + \ob^\ts)= \y^\ts + \Iden_{\dout \times d}\e^{\ts}.
\]
Therefore, we get labels $\y^\ts + \Iden_{\dout \times d}\e^{\ts} $, which is the true label shifted by the first $\dout$ dimensions of the query vector.
So, we need to define a new dataset $$\Tcal' = \left\{\left(\Eb^\ts, \e^\ts, \y^\ts - \Iden_{\dout \times d}\e^{\ts} \right)\right\}_{t=1}^{T},$$ and find Attention weights $\Wcalb'$ for $\Tcal'$ according to Theorem \ref{thm:memory-cap}. Then, we can use the set of Attention weights $\Wcalb'$ to get the correct label $\y^\ts + \Iden_{\dout \times d}\e^{\ts} - \Iden_{\dout \times d}\e^{\ts}$ for an Attention layer with skip connection and dataset $\Tcal$.

\section{Additional Lemmas}

\begin{lemma} \label{lemma:rank_block}
For any real-valued block matrix $\X$ of the form
\[
\X = \left[
\begin{array}{cc}
    \Ab & \Cb \\
    \zeros   &  \Bb
\end{array}
\right]
\]
we have $\rank(\X) \geq \rank(\Ab) + \rank(\Bb)$.
\end{lemma}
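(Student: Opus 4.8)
The plan is to directly exhibit $\rank(\Ab)+\rank(\Bb)$ linearly independent columns of $\X$; since the rank equals the dimension of the column space, this immediately yields the claim. Write $r_A:=\rank(\Ab)$ and $r_B:=\rank(\Bb)$. First I would fix column indices $j_1,\dots,j_{r_A}$ lying in the block-column range occupied by $\Ab$ so that the columns $\Ab_{:,j_1},\dots,\Ab_{:,j_{r_A}}$ of $\Ab$ are linearly independent, and column indices $k_1,\dots,k_{r_B}$ lying in the block-column range occupied by $\Bb$ so that $\Bb_{:,k_1},\dots,\Bb_{:,k_{r_B}}$ are linearly independent. Since these two index sets live in disjoint block-column ranges of $\X$, the corresponding $r_A+r_B$ columns of $\X$ are distinct.

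Next I would show these columns of $\X$ are linearly independent. By the block structure, and in particular the zero lower-left block, the columns taken from the $\Ab$-range have the form $\begin{bmatrix}\Ab_{:,j_i}\\ \zeros\end{bmatrix}$ and those taken from the $\Bb$-range have the form $\begin{bmatrix}\Cb_{:,k_l}\\ \Bb_{:,k_l}\end{bmatrix}$. Suppose
\[
\sum_{i=1}^{r_A}\alpha_i\begin{bmatrix}\Ab_{:,j_i}\\ \zeros\end{bmatrix}+\sum_{l=1}^{r_B}\beta_l\begin{bmatrix}\Cb_{:,k_l}\\ \Bb_{:,k_l}\end{bmatrix}=\zeros .
\]
Reading off the bottom block gives $\sum_l\beta_l\Bb_{:,k_l}=\zeros$, hence $\beta_l=0$ for all $l$ by the choice of the $k_l$. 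Substituting back and reading off the top block gives $\sum_i\alpha_i\Ab_{:,j_i}=\zeros$, hence $\alpha_i=0$ for all $i$. Thus the $r_A+r_B$ chosen columns are linearly independent, so $\rank(\X)\ge r_A+r_B$.

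I expect no genuine obstacle here: this is the standard fact that a block upper-triangular matrix has rank at least the sum of the ranks of its diagonal blocks, and the argument above is short. The only things to be careful about are the bookkeeping that (i) the chosen columns of $\X$ are genuinely distinct — automatic, since the $\Ab$- and $\Bb$-column ranges are disjoint — and (ii) that the off-diagonal block $\Cb$ is irrelevant, since it only enters the top block, which is annihilated once the $\beta_l$ are forced to vanish. An equally short alternative would be a rank–nullity argument: projection onto the bottom rows maps the column space of $\X$ onto the column space of $\Bb$ (dimension $r_B$), while the kernel of this projection, restricted to the column space of $\X$, contains all vectors $\begin{bmatrix}\Ab\vct{u}\\ \zeros\end{bmatrix}$ (a subspace of dimension $r_A$), so $\rank(\X)\ge r_A+r_B$. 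I would present the column-selection proof, as it is the most self-contained.
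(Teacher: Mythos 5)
Your proof is correct: selecting $\rank(\Ab)$ independent columns from the left block-column range and $\rank(\Bb)$ from the right, and using the zero lower-left block to first kill the $\beta_l$'s via the bottom rows and then the $\alpha_i$'s via the top rows, is a complete and self-contained argument (the degenerate cases $\rank(\Ab)=0$ or $\rank(\Bb)=0$ are handled automatically by empty selections). Note that the paper states Lemma~\ref{lemma:rank_block} without proof, treating it as a standard fact about block upper-triangular matrices, so there is no argument of the paper to compare against; your write-up (or the equally short rank--nullity variant you mention) fills that gap correctly.
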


\begin{lemma} \label{lemma:add_avg_same_rank}
Assume vectors $\e_1, \e_2, \dots, \e_n \in \Rbb^d$ are linearly independent. Then,
\[
\e'_i := \e_i + \frac{1}{n} (\e_1 + \e_2 + \dots + \e_n),
\]
are also linearly independent for all $i \in [n]$.
\end{lemma}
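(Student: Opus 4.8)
The plan is to show directly that the only linear combination of the $\e'_i$ equal to zero is the trivial one, using the linear independence of the $\e_i$. Suppose $\sum_{i\in[n]} c_i \e'_i = \zeros$ for scalars $c_1,\dots,c_n$. Expanding the definition $\e'_i = \e_i + \frac{1}{n}\sum_{j\in[n]}\e_j$ and collecting coefficients of each $\e_j$, the combination becomes $\sum_{j\in[n]}\bigl(c_j + \frac{1}{n}\sum_{i\in[n]} c_i\bigr)\e_j = \zeros$. Since the $\e_j$ are linearly independent, every coefficient must vanish: writing $s := \sum_{i\in[n]} c_i$, we get $c_j + \frac{s}{n} = 0$ for all $j\in[n]$, i.e. $c_j = -s/n$ for every $j$.

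Summing this identity over $j\in[n]$ gives $s = \sum_{j} c_j = -s$, hence $2s = 0$, so $s = 0$, and therefore $c_j = -s/n = 0$ for all $j$. This proves the $\e'_i$ are linearly independent.

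Alternatively — and perhaps cleaner to phrase — one can argue via the change-of-basis matrix: $[\e'_1,\dots,\e'_n] = [\e_1,\dots,\e_n] M$ where $M = \Iden + \frac{1}{n}\ones\ones^\top$. The matrix $M$ has eigenvalue $1 + \frac{1}{n}\cdot n = 2$ on the span of $\ones$ and eigenvalue $1$ on its orthogonal complement, so $\det(M) = 2 \neq 0$; thus $M$ is invertible and the map sending $\e_i$ to $\e'_i$ preserves linear independence. I expect no real obstacle here: the statement is elementary, and the only mild subtlety is being careful that "$n$" genuinely cancels so that the perturbation matrix is invertible (it would fail only in characteristic $2$, which is irrelevant over $\Rbb$). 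I would present the eigenvalue/determinant argument as the main proof since it also makes transparent why the averaging construction in Assumption \ref{asp:token_mixing} and Proposition \ref{prop:token_mixing} preserves rank.
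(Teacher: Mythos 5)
Your first argument is correct and is essentially the paper's own proof: expand $\sum_i c_i \e'_i=\zeros$ in terms of the $\e_i$, invoke their linear independence to get $c_j+\tfrac{s}{n}=0$ with $s=\sum_i c_i$, and sum over $j$ to force $s=0$ and hence $c_j=0$. Your alternative via $M=\Iden+\tfrac{1}{n}\ones\ones^\top$ (eigenvalues $2$ and $1$, so $\det(M)=2\neq 0$, and right-multiplication by an invertible matrix preserves column rank) is also correct; it is a mild repackaging rather than a genuinely different idea, though it does make the ``rank-preserving change of basis'' view explicit, which is a nice way to see why the averaging in Proposition \ref{prop:token_mixing} cannot drop the rank of $\Eb^\ts$. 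Either version is complete; no gaps.
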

\begin{proof}
Due to the linear independent assumption, the only solution to
\[
a_1 \e_1 + a_2 \e_2 + \dots + a_n \e_n = \zeros \quad \text{\st } \quad a_1, a_2, \dots, a_n \in \Rbb
\]
is $a_1 = a_2 = \dots = a_n = 0$. Now, let us examine the same system of equations for $\e'_i$. Namely,
\[
b_1 \e'_1 + b_2 \e'_2 + \dots + b_n \e'_n = \zeros \quad \text{\st } \quad b_1, b_2, \dots, b_n \in \Rbb.
\]
We prove zero is the only solution to this system to conclude the proof.
\begin{align*}
      & b_1 \e'_1 + b_2 \e'_2 + \dots + b_n \e'_n \\
    = & b_1 \e_1 + b_2 \e_2 + \dots + b_n \e_n + \underbrace{\frac{1}{n}(b_1 + b_2 + \dots + b_n)}_{\bar{b}}(\e_1 + \e_2 + \dots + \e_n) \\
    = & (b_1 + \bar{b}) \e_1 + (b_2 + \bar{b}) \e_2 + \dots + (b_n + \bar{b}) \e_n = \zeros.
\end{align*}
According to the linear independence assumption, we must have
\[
b_i + \bar{b} = 0 \quad \text{for all } i \in [n].
\]
By summing all the terms up, we conclude that $b_i=0$ for all $i \in [n]$ is the only solution and this concludes the proof.
\end{proof}

\begin{lemma} \label{lemma:lin_eq_null}
Consider a set of full-rank matrices $\Eb^{(1)}, \Eb^{(2)}, \dots, \Eb^{(T)} 
 \in \Rbb^{n \times d}$ where $n < d$. Moreover, consider a set of vectors $\e^{(1)}, \e^{(2)}, \dots, \e^{(T)} \in \Rbb^d$ with a Kruskal Rank at least $n$. Then, for any $m \leq n-1$ there exists a rank-one matrix $\Wb \in \Rbb^{d \times d}$ such that 
\begin{align}
\Eb^\ts \Wb \e^\ts &= \zeros \quad \text{for all } 1 \leq t \leq m, \label{eq:null_zero} \\
\left[ \Eb^\ts \Wb \e^\ts \right]_i &\neq \left[ \Eb^\ts \Wb \e^\ts \right]_j \quad \text{for all }  m < t \leq T,~i,j \in [n], i \neq j. \label{eq:null_distinct}
\end{align}\end{lemma}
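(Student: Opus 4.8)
\textbf{Proof plan for Lemma \ref{lemma:lin_eq_null}.}

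The plan is to look for $\Wb$ in the rank-one form $\Wb = \wb \vct{u}^\top$ for a suitable pair of vectors $\wb, \vct{u} \in \Rbb^d$, so that $\Eb^\ts \Wb \e^\ts = (\vct{u}^\top \e^\ts)\,\Eb^\ts\wb$. The scalar factor $\vct{u}^\top\e^\ts$ and the vector factor $\Eb^\ts\wb$ can then be controlled essentially independently: I would first use $\vct{u}$ to kill the relevant examples, then use $\wb$ to separate the coordinates on the remaining ones.

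First I would handle Eq.~\eqref{eq:null_zero}. Since the query vectors $\e^{(1)},\dots,\e^{(T)}$ have Kruskal rank at least $n$, any $m \le n-1$ of them, say $\e^{(1)},\dots,\e^{(m)}$, are linearly independent, hence span an $m$-dimensional subspace of $\Rbb^d$; its orthogonal complement has dimension $d-m \ge d-n+1 \ge 2 > 0$. So I can pick $\vct{u}$ in this complement, i.e.\ $\vct{u}^\top\e^\ts = 0$ for all $t \le m$, which forces $\Eb^\ts\Wb\e^\ts = \zeros$ for those $t$ regardless of $\wb$. It remains to ensure $\vct{u}^\top\e^\ts \neq 0$ for all $m < t \le T$: the set of $\vct{u}$ in the $(d-m)$-dimensional complement that are orthogonal to a fixed $\e^\ts$ is a proper subspace (it is proper because $\e^\ts$ is not in the span of $\e^{(1)},\dots,\e^{(m)}$, again by the Kruskal-rank hypothesis, so $\e^\ts$ restricted to the complement is nonzero), and a finite union of proper subspaces cannot cover $\Rbb^{d-m}$; hence a generic $\vct{u}$ in the complement works. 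Fix such a $\vct{u}$ and let $c_t := \vct{u}^\top\e^\ts \neq 0$ for $t > m$.

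Next I would handle Eq.~\eqref{eq:null_distinct}. With $\vct{u}$ fixed, for $t > m$ we have $[\Eb^\ts\Wb\e^\ts]_i = c_t\,(\eb_i^\ts)^\top\wb$, so the distinctness condition for example $t$ reads $c_t(\eb_i^\ts - \eb_j^\ts)^\top\wb \neq 0$, i.e.\ $(\eb_i^\ts - \eb_j^\ts)^\top\wb \neq 0$ for all $i\neq j$. For each fixed $t$ and each pair $i \neq j$, the difference $\eb_i^\ts - \eb_j^\ts$ is nonzero because $\Eb^\ts$ has rank $n$ (its rows are distinct, indeed linearly independent), so $\{\wb : (\eb_i^\ts-\eb_j^\ts)^\top\wb = 0\}$ is a hyperplane in $\Rbb^d$. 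There are finitely many such hyperplanes over all $t$ and pairs $(i,j)$, and their union is not all of $\Rbb^d$, so a generic $\wb$ avoids all of them. Setting $\Wb := \wb\vct{u}^\top$ with these choices gives a rank-one matrix (rank exactly one, or zero, but we may take $\wb,\vct{u}\neq\zeros$ so rank one) satisfying both requirements.

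The only mildly delicate point — the \emph{main obstacle} — is the bookkeeping of which Kruskal-rank consequences are being invoked and making sure the relevant vectors are genuinely nonzero: specifically, that $\e^\ts$ ($t>m$) is not in $\mathrm{span}\{\e^{(1)},\dots,\e^{(m)}\}$ so that the ``avoid a finite union of proper subspaces'' argument applies in Eq.~\eqref{eq:null_zero}, and that $\eb_i^\ts \neq \eb_j^\ts$ so the hyperplanes in Eq.~\eqref{eq:null_distinct} are genuine. Both follow immediately from the hypotheses (Kruskal rank $\ge n$ of the queries with $m \le n-1$, and $\rank(\Eb^\ts)=n$), so the argument goes through cleanly once these observations are in place.
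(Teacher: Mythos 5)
Your proposal is correct and in essence coincides with the paper's own construction: both take $\Wb$ rank one, with one factor a linear functional that annihilates $\e^{(1)},\dots,\e^{(m)}$ yet is nonzero on every $\e^{(t)}$, $t>m$ (exactly where Kruskal rank $\ge n > m$ is used), and the other factor a generic vector separating the rows of each $\Eb^{(t)}$, which have pairwise-distinct rows by Assumption-style full rank. The only difference is cosmetic: the paper instantiates genericity by drawing the coefficients and the vector $\cb$ from a Gaussian and arguing the conditions hold almost surely, whereas you argue deterministically that a finite union of proper subspaces/hyperplanes cannot cover the ambient space.
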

\begin{proof}
Let us define a basis for $\Rbb^{d}$ as $\bb_1, \bb_2, \bb_3, \dots, \bb_d \in \Rbb^{d}$ such that $\bb_i = \e^{(i)}$ for all $i \in [m]$, and $\bb_{m+1}, \dots \bb_{m+2}, \dots, \bb_{d}$ are an arbitrary extension of $\e_{[m]}$ to a basis. Then, a linear transformation $\Wb$ is uniquely determined by defining $\cb_1, \cb_2, \dots \cb_d \in \Rbb^{d}$ and setting
\[
\Wb \bb_i = \cb_i \quad \text{for all } i \in [d].
\]
First, to satisfy Eq. \eqref{eq:null_zero}, we set $\cb_i := \zeros$ for all $i \in [m]$, which gives $\Wb \bb_i = \Wb \e^{(i)} = \cb_i = \zeros$ for all $i \in [m]$. 
Let us decompose every vector $\e^\ts$ into the mentioned basis, \ie,
\[
\e^\ts = a_1^\ts \bb_1 + a_2^\ts \bb_2 + \dots + a_d^\ts \bb_d \quad \text{for all } m+1 \leq t \leq T.
\]
Since $\{\e^\ts \}_{t=1}^{T}$ have a Kruskal Rank at least $n$, then for any $t \in [T]$ at least one of $a_{m+1}^\ts, a_{m+2}^\ts, \dots, a_{d}^\ts$ must be nonzero, otherwise, $\e^\ts$ can be written as a linear combination of the first $m$ vectors. Moreover, there exists fixed $u_{m+1}, u_{m+2}, \dots, u_{d} \in \Rbb$ such that $$a_{m+1}^\ts u_{m+1} +a_{m+2}^\ts u_{m+2}  \dots + a_d^\ts u_d \neq 0 \quad \text{ for all $m+1 \leq t \leq d$},$$ for instance, we can choose $u_i$ \iid from a Standard Gaussian Distribution and satisfy this property almost surely. Now, let us define
\begin{align}
\cb &\sim \Ncal \left(\zeros, \Iden \right) \\
\cb_i &:= u_i \cb \quad \text{ for all $m+1 \leq i \leq d$.}
\end{align}
Therefore,
\begin{align}
    \Wb \e^\ts &= a_1^\ts \Wb\bb_1 + a_2^\ts \Wb\bb_2 + \dots + a_d^\ts \Wb\bb_d  \\
    &= a_1^\ts \cb_1 + a_2^\ts \cb_2 + \dots + a_d^\ts \cb_d  \\
    &= a_{m+1}^\ts \cb_{m+1} +a_{m+2}^\ts \cb_{m+2}  \dots + a_d^\ts \cb_d  \quad \\
    &= \left(a_{m+1}^\ts u_{m+1} + a_{m+2}^\ts u_{m+2}  \dots + a_d^\ts u_d \right) \cb  \quad \text{for all } m+1 \leq t \leq T,
\end{align}
with one nonzero coefficient. Now, since $\cb$ is standard gaussian, then 
\[
\Wb \e^\ts \sim \Ncal \left( \zeros, \left(a_{m+1}^\ts u_{m+1} + a_{m+2}^\ts u_{m+2}  \dots + a_d^\ts u_d \right)^2 \Iden \right).
\]
Moreover, $\Eb^\ts$ is full-rank, so none of its rows are equal to each other. As a result, 
\[
\left(\e_i^\ts - \e_j^\ts \right)^\top \Wb \e^\ts 
\]
follows a Gaussian distribution for all $i, j \in [n],~i\neq j$. Since $\{0\}$ has zero Lebesgue measure, the probability of having two equal elements is zero. So, Eq. \ref{eq:null_distinct} is satisfied almost surely. Note that this construction produces a rank-one $\Wb$ since the rank of $[\cb_1, \cb_2, \dots, \cb_d]$ is at most one. Therefore, there exists a rank-one matrix $\Wb$ that satisfies Eq. \eqref{eq:null_distinct} and this completes the proof.

\end{proof}

\begin{lemma} \label{lemma:softmax_saturation}
Consider an input matrix $\Eb \in \Rbb^{n \times d}$, an input vector $\e \in \Rbb^d$, and a weight matrix $\Wb^+ \in \Rbb^{d \times d}$ such that 
\begin{align}
\quad \left[ \Eb \Wb^+ \e \right]_i \neq \left[ \Eb \Wb^+ \e \right]_j\quad \text{for all } i,j \in [n], i \neq j.
\end{align}    
 For any $\Wb^* \in \Rbb^{d \times d}$ define $\Wb_c := \Wb^* + c\Wb^+$ where $c \in \Rbb$ is a scalar. Then, 
\[
\thetab_+ := \lim_{c \rightarrow \infty} \thetab_c := \lim_{c \rightarrow \infty} \sft \left( \left[ \Eb \Wb_c \e \right] \right),
\]
is only a function of $\Wb^+$. Consequently, $\thetab_c$ converges to $\thetab_+$ in $L^1$ sense, \ie, for any $\delta > 0$, there exists $c_\delta$ such that
\[
\norm{\thetab_+ - \thetab_{c_\delta}}_1 < \delta.
\]
\end{lemma}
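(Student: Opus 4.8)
The plan is to reduce the statement to the elementary behaviour of the softmax under an affine family of logits. Fix the data $\Eb$ and $\e$, and write $\vct{v} := \Eb\Wb^+\e \in \Rbb^{n}$ and $\vct{u} := \Eb\Wb^*\e \in \Rbb^{n}$, so that the logit vector attached to $\Wb_c = \Wb^* + c\Wb^+$ is exactly $\vct{u} + c\vct{v}$ and $\thetab_c = \sft(\vct{u}+c\vct{v})$. The hypothesis says the coordinates of $\vct{v}$ are pairwise distinct, hence there is a \emph{unique} maximizer $i^\star := \arg\max_{i\in[n]} v_i$; crucially, $i^\star$ depends only on $\Eb$, $\e$ and $\Wb^+$, and not on $\Wb^*$. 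This observation is what will eventually give ``$\thetab_+$ is only a function of $\Wb^+$''.

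Next I would evaluate the limit coordinate-wise. Dividing numerator and denominator of $\thetab_c[i] = e^{u_i + cv_i}/\sum_{j} e^{u_j + cv_j}$ by $e^{u_{i^\star} + cv_{i^\star}}$ gives
\[
\thetab_c[i] = \frac{e^{(u_i-u_{i^\star}) + c(v_i - v_{i^\star})}}{\sum_{j\in[n]} e^{(u_j-u_{i^\star}) + c(v_j - v_{i^\star})}}.
\]
For every $i\neq i^\star$ one has $v_i - v_{i^\star} < 0$, so the corresponding exponential tends to $0$ as $c\to\infty$, whereas for $i=i^\star$ the numerator's exponent is identically $0$. Therefore every numerator converges (to $1$ if $i=i^\star$, to $0$ otherwise) and the denominator converges to $1$, so $\thetab_c \to \mathbf{1}[i^\star]$, the indicator of the unique maximizing coordinate. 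Since $i^\star$ is determined by $\Wb^+$ alone, the limit $\thetab_+ = \mathbf{1}[i^\star]$ is independent of $\Wb^*$, establishing the first assertion.

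For the $L^1$ claim I would simply note that $\thetab_c$ and $\thetab_+$ all lie in the probability simplex of $\Rbb^n$, a finite-dimensional set, so coordinate-wise convergence forces convergence in any norm; concretely $\norm{\thetab_+-\thetab_c}_1 = \sum_{i\in[n]} \bigl|\thetab_c[i]-\mathbf{1}[i^\star]_i\bigr| \to 0$, and if one wants an explicit $c_\delta$ it suffices to bound each off-maximizer term by $e^{u_i-u_{i^\star}}\,e^{-c\Delta}$ with $\Delta := \min_{i\neq i^\star}(v_{i^\star}-v_i) > 0$, giving an exponentially small tail and hence $c_\delta = O(\log(1/\delta))$. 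I do not expect a genuine obstacle here: the whole argument is a one-line factorization plus a limit. The only point that needs care is invoking the strict-inequality hypothesis to guarantee a \emph{unique} $\arg\max$ (so that the limit is a single indicator vector rather than the uniform law on a tie set), together with tracking that $i^\star$, and therefore $\thetab_+$, never depends on $\Wb^*$.
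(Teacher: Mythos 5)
Your proposal is correct and follows essentially the same route as the paper's proof: both factor the softmax logits into $\Eb\Wb^*\e + c\,\Eb\Wb^+\e$, use the pairwise-distinctness hypothesis to identify a unique argmax (determined by $\Wb^+$ alone), take the coordinate-wise limit to get the one-hot vector, and then pass from entrywise to $L^1$ convergence by finite dimensionality. The explicit rate $c_\delta = O(\log(1/\delta))$ you sketch is a minor addition beyond what the paper records, but the argument is the same.
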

\begin{proof}
Let us define
\[
\alphab^+ := \Eb \Wb^+ \e , \quad \text{and} \quad \alphab^* := \Eb \Wb^* \e.
\]
Now, $i$th element of $\thetab$ is given by:
\begin{align*}
[\thetab_+]_i &= \lim_{c \rightarrow \infty} [\thetab_c]_i = \lim_{c \rightarrow \infty} \frac{\exp\left(\alphab^*_i + c\alphab^+_i\right)}{\sum_{j\in [n]} \exp\left(\alphab^*_j + c\alphab^+_j \right)} \\
&= \lim_{c \rightarrow \infty} \frac{\exp \left(\alphab^*_i \right)}{\exp \left(\alphab^*_i \right) + \sum_{j \neq i} \exp \left(\alphab^*_j + c(\alphab^+_j - \alphab^+_i) \right)} \\
&= \ones \left[ i = \arg\max \{\alphab^+_1, \alphab^+_2, \dots, \alphab^+_n\} \right],
\end{align*}
where the last equality is due to $\alphab^+_i$ being distinct. Therefore, $[\thetab_+]_i$ is a one-hot vector determined by $\Wb^+$. Consequently, since each individual entry of $\thetab_c$ converges to $\thetab_+$, we get convergence in $L_1$ and this completes the proof.
\end{proof}

\begin{lemma}\label{lemma:weyl_singular}
Let $\Ab, \Bb \in \Rbb^{n \times m}$ be two matrices satisfying
\[
\sigma_{\max}(\Bb - \Ab) < \sigma_{\min}(\Ab),
\]
where $\sigma_{\min}, \sigma_{\max}$ are the smallest and largest non-zero singular values. Then, we have $$\rank(\Bb) \geq \rank(\Ab).$$
\end{lemma}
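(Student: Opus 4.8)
The plan is to reduce the statement to the standard Weyl perturbation inequality for singular values. Write $r := \rank(\Ab)$ and let $\sigma_1(\cdot) \geq \sigma_2(\cdot) \geq \cdots$ denote the ordered singular values of a matrix, padded with zeros. Since $\rank(\Ab) = r$, the $r$-th largest singular value of $\Ab$ is exactly its smallest nonzero one, i.e. $\sigma_r(\Ab) = \sigma_{\min}(\Ab) > 0$, while $\sigma_{r+1}(\Ab) = 0$. This bookkeeping identification is the only point that needs a moment's care; everything else is a one-line computation.

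First I would recall the one-sided Weyl bound: for matrices $\Ab, \Bb$ of the same size and any index $i$, one has $|\sigma_i(\Ab) - \sigma_i(\Bb)| \leq \norm{\Ab - \Bb}_2 = \sigma_{\max}(\Ab - \Bb)$. This follows either from the Courant–Fischer min-max characterization of singular values, or from the submultiplicative Weyl inequality $\sigma_{i+j-1}(X+Y) \leq \sigma_i(X) + \sigma_j(Y)$ applied with $X = \Bb$, $Y = \Ab - \Bb$, $j = 1$. Then I would instantiate it at $i = r$: $\sigma_r(\Bb) \geq \sigma_r(\Ab) - \sigma_{\max}(\Bb - \Ab) = \sigma_{\min}(\Ab) - \sigma_{\max}(\Bb - \Ab)$, and by the hypothesis $\sigma_{\max}(\Bb - \Ab) < \sigma_{\min}(\Ab)$ the right-hand side is strictly positive. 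Hence $\sigma_r(\Bb) > 0$, so $\Bb$ has at least $r$ strictly positive singular values, giving $\rank(\Bb) \geq r = \rank(\Ab)$.

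There is no real obstacle here: the argument is a direct application of a classical inequality, and the proof in the appendix presumably just spells out the Weyl bound and the index identification above. If one wants a fully self-contained write-up, the cleanest route is to prove the needed inequality $\sigma_r(\Bb) \geq \sigma_r(\Ab) - \norm{\Ab-\Bb}_2$ from Courant–Fischer directly, rather than citing Weyl as a black box.
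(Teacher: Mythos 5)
Your proposal is correct and follows essentially the same route as the paper's proof: the appendix likewise invokes Weyl's singular value perturbation inequality $|\sigma_k(\Ab+\Delta)-\sigma_k(\Ab)|\leq\sigma_{\max}(\Delta)$ with $\Delta=\Bb-\Ab$ and concludes that every index $k$ with $\sigma_k(\Ab)>0$ yields $\sigma_k(\Bb)>0$, hence $\rank(\Bb)\geq\rank(\Ab)$. Your instantiation at the single index $k=r=\rank(\Ab)$ is just a slightly more economical way of stating the same argument.
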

\begin{proof}
From Weyl's singular value theorem \citep{horn_johnson}, we know
\[
\left| \sigma_k(\Ab + \Delta) - \sigma_k(\Ab) \right| \leq \sigma_{\max}(\Delta),
\]
for any perturbation matrix $\Delta$, and singular value index $k \in [\min(n, m)]$. Setting $\Delta = \Bb - \Ab$ and using the lemma's assumption we have:
\[
\left| \sigma_k(\Bb) - \sigma_k(\Ab) \right| \leq \sigma_{\max}(\Bb - \Ab) < \sigma_{\min}(\Ab) \leq \sigma_k(\Ab)
\]
for all $k$ such that $\sigma_k(\Ab) > 0$. Hence, for any non-zero singular value $k$, $\Bb$ has a corresponding non-zero singular value and $\rank(\Bb) \geq \rank(\Ab)$.
\end{proof}

\section{Proof of Proposition \ref{prop:rank_upperbound_same_ctx}} \label{apx:proof_rank_upperbound_same_ctx}

First, note that when the contexts are shared, the matrix $\Zb$ has the following decomposition:
\[
\Zb^\top = 
\left(\Iden_{H} \otimes \Eb^T \right) \Thetab^\top,
\]
where $\otimes$ is the Kronecker product and $\Thetab \in \Rbb^{T \times Hn}$ is defined similar to $\Zb$ (\ie, the block in $t$-th row and $h$-th column is $\thetab^{\ts^\top}_h$). Notice that $\Thetab \in \Rbb^{T \times Hn}$, therefore, the rank of $\Zb$ can be at most $Hn$. However, we prove here that the column rank of $\Thetab$ is at most $H(n-1)+1$ due to the structure of $\thetab^\ts_h$ being convex combination vectors. With a slight abuse of notation, let us write $\Thetab$ as:
\[
\Thetab = \begin{bmatrix}
    \Thetab_1, & \Thetab_2, & \Thetab_3 & \dots, & \Thetab_H
\end{bmatrix},
\]
where each $\Thetab_h \in \Rbb^{T \times n}$ corresponds block belonging to head $h \in [H]$. Observe that $\Thetab_h$ is a right stochastic matrix (\ie, rows sum up to one) for all $h \in [H]$. Thus, each row of $\Thetab_h - \Thetab_1$ sums up to zero and the matrix has at least one zero eigenvalue. Therefore,
\begin{align*}
\rank(\Zb) &\leq \rank(\Thetab) = \rank \left( \begin{bmatrix} \Thetab_1, & \Thetab_2, & \Thetab_3, & \dots, & \Thetab_H \end{bmatrix} \right)\\
&= \rank \left( \begin{bmatrix}
    \Thetab_1, & \Thetab_2 - \Thetab_1, & \Thetab_3 - \Thetab_1, & \dots, & \Thetab_H - \Thetab_1
\end{bmatrix} \right) \\
& \leq
\rank(\Thetab_1) + \rank(\Thetab_2 - \Thetab_1) + \dots + \rank(\Thetab_H - \Thetab_1) \\
& \leq n + (n-1) + (n-1) + \dots + (n-1) = H(n-1) +1\,.
\end{align*}
Hence $\rank(\Zb)$ is upper bounded by $H(n-1)+1$ and this concludes the proof.

\section{Proof of Proposition \ref{prop:mem_relu_comparison}} \label{apx:proof_mem_relu_comparison}
We give a proof by contradiction, so let us assume $T > (n+1)m/\dout + m + 1$.
Consider ReLU network $f$ as
\[
f_{d_1, d_2, \dout}(\x; \Wb_1, \bb_1, \Wb_2, \bb_2) := \Wb_2^\top \relu\left( \Wb_1^\top\x + \bb_1 \right) + \bb_2.
\]
where $\Wb_1 \in \Rbb^{d_2 \times d_1}, \bb_1 \in \Rbb^{d_2}, \Wb_2 \in \Rbb^{\dout \times d_2}$, and $\bb_2 \in \Rbb^{\dout}$ are the network parameters. First, we consider $d_1 = n, d_2 = m$. This network possesses a total of $(n+1)m + (m+1)\dout$ parameters. Since a trivial upper bound on the number of real-valued labels a network can memorize is at most as high as the number of its parameters, for any 
\[
T > \frac{(n+1)m + (m+1)\dout}{\dout} = \frac{(n+1)m}{\dout} + m + 1
\]
and inputs $\x^{(1)}, \x^{(2)}, \dots, \x^{(T)} \in \Rbb^{n}$, there exists labels $\y^{(1)}, \y^{(2)}, \dots, \y^{(T)} \in \Rbb^{\dout}$ such that no $f_{n, m, \dout}$ can memorize this dataset. In other words, for any $\Wb_1, \Wb_2, \bb_1, \bb_2$, there exists index $i \in [T]$ such that
\[
f_{n, m, \dout}(\x_i; \Wb_1, \bb_1, \Wb_2, \bb_2) \neq \y_i.
\]
Now, let us consider any arbitrary $\x^{(1)}, \x^{(2)}, \dots, \x^{(T)} \in \Rbb^{n}$ in General Position (\ie, maximal Kruskal rank of $n$), and their corresponding set of labels $\y^{(1)}, \y^{(2)}, \dots, \y^{(T)} \in \Rbb^{\dout}$ for which no $f_{n, m, \dout}$ can memorize the labels. Consider the following extended input dataset:
\begin{align}
    \cb_\fcn := [\s_1; \s_2; \dots; \s_n; \zeros_{(d-n)}] \in \Rbb^{(n+1)d - n}, ~~ \x_\fcn^\ts := [\cb_\fcn; \x^\ts] \in \Rbb^{(n+1)d} \quad \text{ for all } t \in [T],
\end{align}
where $\s_i$ are standard basis vectors in $\Rbb^d$. Let us call $\Tcal_\fcn := \{(\x_\fcn^\ts, \y^\ts)\}_{t \in [T]}$. First, we claim that no network $f_{(n+1)d, m, \dout}$ can memorize $\Tcal_\fcn$. To show this, we create a mapping between any network of $f_{(n+1)d, m, \dout}$ to networks of $f_{n, m, \dout}$. For any $\Wb'_1 \in \Rbb^{m \times (n+1)d}, \bb'_1 \in \Rbb^{m}, \Wb'_2 \in \Rbb^{\dout \times m}$, and $\bb'_2 \in \Rbb^{\dout}$ define:
\begin{align*}
    \Wb_1 &:= \Wb'_1[:, (n+1)d-n+1:(n+1)d] & (\Wb_1\in \Rbb^{m \times n}) \\
    \bb_1 &:= \bb'_1 - \Wb'_1[:, 1:(n+1)d-n]^\top \cb_\fcn & (\bb_1 \in \Rbb^{m}) \\
    \Wb_2 &:= \Wb'_2  & (\Wb_2 \in \Rbb^{\dout \times m})\\
    \bb_2 &:= \bb'_2,  & (\bb_2 \in \Rbb^{\dout})
\end{align*}
where $\Wb'_1[:, a:b]$ is the sub-block of matrix $\Wb'_1$ selected by columns $a$ to $b$. With the above assignment, we get:
\[
f_{(n+1)d, m, \dout}(\x_\fcn^\ts; \Wb'_1, \bb'_1, \Wb'_2, \bb'_2) = f_{n, m, \dout}(\x^\ts; \Wb_1, \bb_1, \Wb_2, \bb_2),
\]
for all $t \in [T]$. Therefore, since no ReLU network $f_{n, m, \dout}$ is able to memorize $\{(\x^\ts, \y^\ts)\}_{t \in [T]}$, no ReLU network $f_{(n+1)d, m, \dout}$ is able to memorize $\Tcal_\fcn$ either.

So far, we have found a dataset $\Tcal_\fcn$ that a two-layer ReLU network with $m$ hidden neurons can not memorize. Next, we prove that $\Tcal_\fcn$ can be derived from a valid dataset $\Tcal = \left\{\left(\Eb^\ts, \e^\ts, \y^\ts \right)\right\}_{t=1}^{T}$, which satisfies both Assumptions \ref{asp:q_gen_pos} and \ref{asp:c_lin_indep}. 
To observe this, consider the following inputs:
\begin{align*}
    \Eb^\ts &:= [\s_1^\top; \s_2^\top; \dots, \s_n^\top] \\
    \e^\ts  &:= [\zeros_{(d-n)}; \x^\ts],
\end{align*}
and labels $\y^\ts$ the same as $\Tcal_\fcn$. Then, we get 
\[
\x_\fcn^\ts = [\e_1^\ts; \dots; \e_n^\ts; \e^\ts].
\]
So, what is left is to prove that $\Tcal$ satisfies Assumptions \ref{asp:q_gen_pos} and \ref{asp:c_lin_indep}. First, $\{\e^\ts\}_{t \in [T]}$ has the same Kruskal rank as $\{\x^\ts\}_{t \in [T]}$, which is $n$, so Assumption \ref{asp:q_gen_pos} is satisfied. Second, each $\Eb^\ts$ is full-rank since each row $i \in [n]$ is the $i$-th standard basis for all $t \in [T]$.

Therefore, we have found a dataset $\Tcal$ with size $T > (n+1)m/\dout + m + 1$ that no ReLU network with $m$ hidden dimensions can memorize, and this completes the proof.

\section{Proof of Proposition \ref{prop:token_mixing}}
First, we derive the output tokens $\Eb^{\prime^\ts}, \e^{\prime^\ts}$ as a function of input tokens. We omit the head subscript since we have only one head. Since $\WK = \WQ = \zeros$, we have $\alphab^\ts = \zeros$, and consequently $\thetab^\ts = \frac{1}{n} \ones$. Therefore, the attention head only takes the average of each token, \ie,
\[
\bar{\e} := \frac{1}{n} \left( \e_1^\ts + \e_2^\ts + \dots + \e_n^\ts \right). 
\]
Now, since the Attention head also has a skip connection, the output of each token becomes:
\begin{align}
\e^{\prime^\ts} &= \e^\ts + \vct{o}^\ts = \e^\ts + \bar{\e},\\
\e_i^{\prime^\ts} &:= \e_i^\ts + \vct{o}_i^\ts = \e_i^\ts + \bar{\e} \quad \text{for all } i \in [n]
\end{align}
First, since we have $\e^{\prime^\ts} = \Tilde\e^\ts$ in Assumption \ref{asp:token_mixing}, this means that the set $\{\e^{\prime^\ts}\}_{t=1}^{T}$ has a Kruskal Rank at least $n$, and Assumption \ref{asp:q_gen_pos} holds for $\Tcal'$. Next, since the $\Eb^\ts$ are full-rank, adding the average of tokens to $\Eb^\ts$ preserves the rank (Lemma \ref{lemma:add_avg_same_rank}), so $\Eb'^\ts$ are also full-rank and Assumption \ref{asp:c_lin_indep} is satisfied for $\Tcal'$.

\section{Softmax Saturation and Optimization-Based Memorization} \label{apx:softmax_saturation_in_practice}
In the proof of Theorem \ref{thm:memory-cap}, we took advantage of the saturation properties of Softmax. We generate a dataset of size $T$ with a shared context (as in Section \ref{sec:exp_lin_trend})
with $d=64$, $n=32$, and discrete labels $y^\ts \in \Ucal([100])$ drawn \iid from a uniform distribution. We test with two different head sizes $H=4$ and $H=8$. For each head size, we first empirically find the maximal number of data points the model can perfectly memorize using gradient-based optimization. We find for $H=4$, this number is around $T=550$, and for $H=8$ around $T=1300$. We train both models for $300,000$ optimization steps. We call a head \emph{saturated} for an example if the query token has one softmax coefficient $>0.99$ to some context token in that head, \ie, head $h \in [H]$ is saturated for example $t \in [T]$ if $\thetab^\ts_h$ has at least one entry $>0.99$.

\begin{figure}[t]
    \centering
    \begin{subfigure}[ht]{0.45\textwidth}
        \centering
        \includegraphics[width=\textwidth]{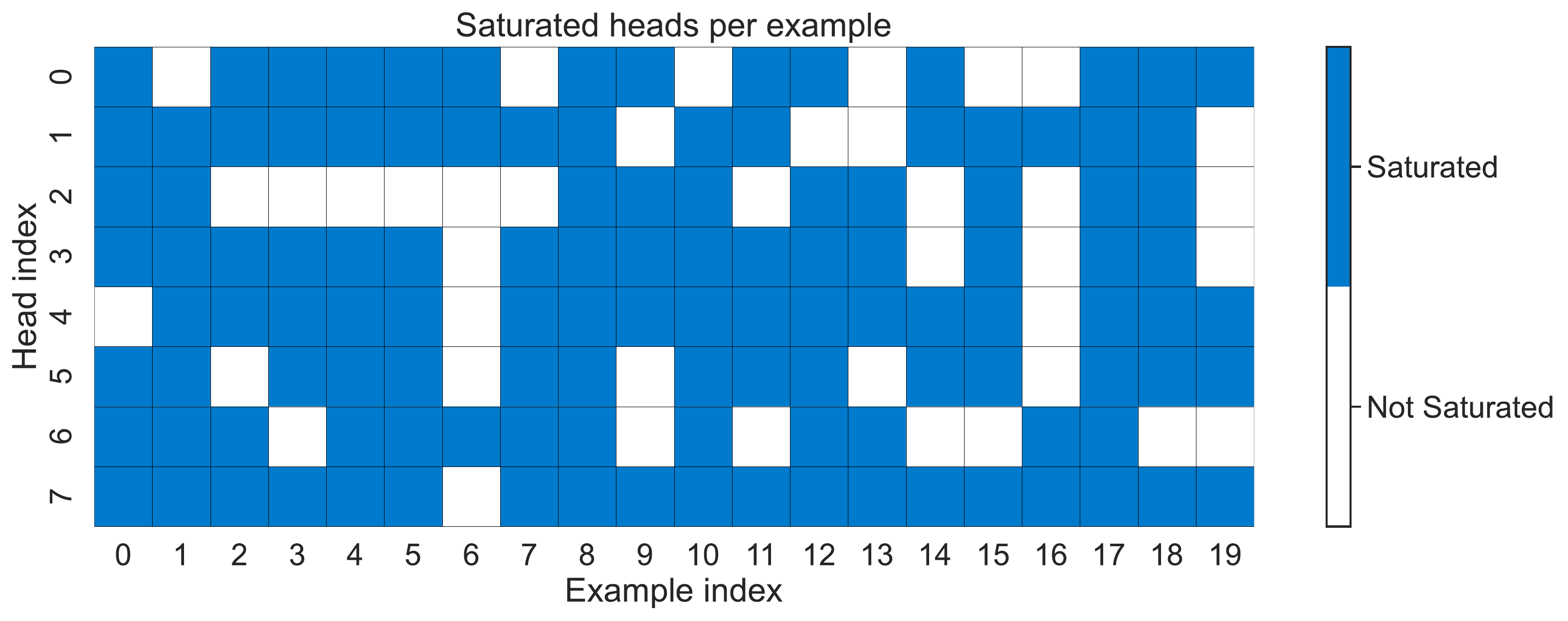}
        \caption{}
        \label{fig:sub_sat_heads_8}
    \end{subfigure}
    \begin{subfigure}[ht]{0.18\textwidth}
        \centering
        \includegraphics[width=\textwidth]{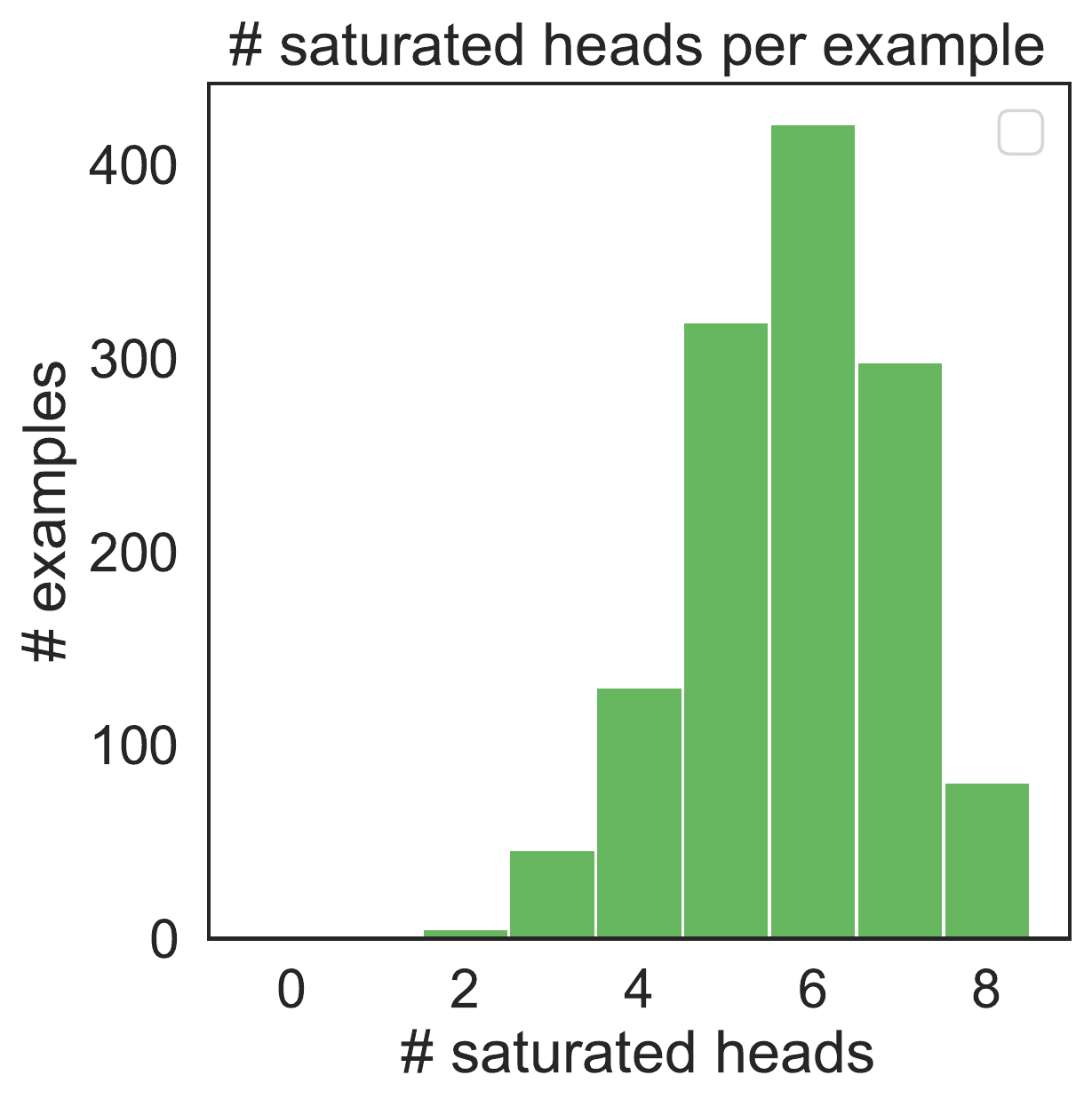}
        \caption{}
        \label{fig:sub_sat_heads_hist_8}
    \end{subfigure}
    \begin{subfigure}[ht]{0.27\textwidth}
        \centering
        \includegraphics[width=\textwidth]{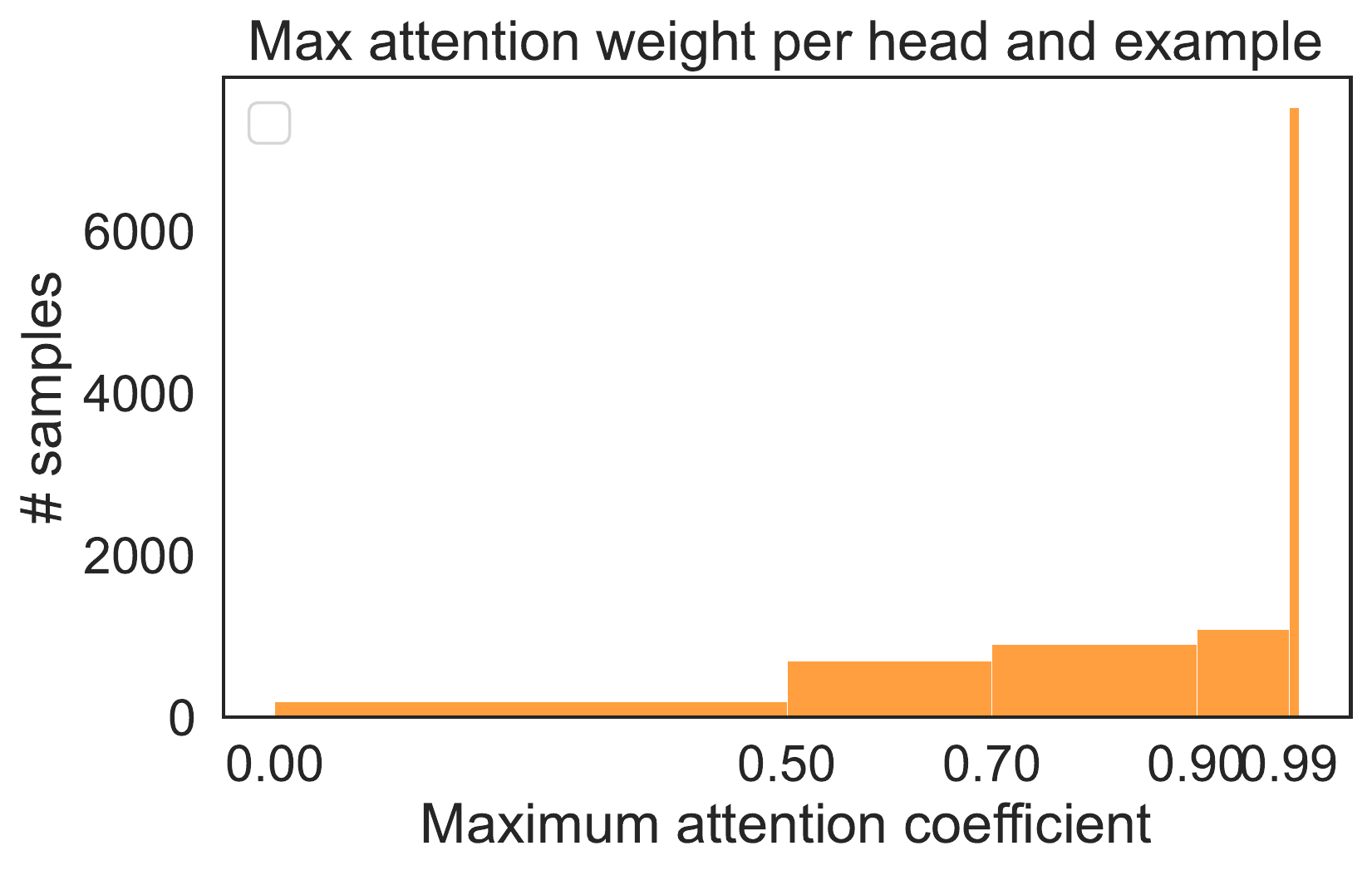}
        \caption{}
        \label{fig:sub_weight_hist_8}
    \end{subfigure}
    \\
     \begin{subfigure}[ht]{0.45\textwidth}
        \centering
        \includegraphics[width=\textwidth]{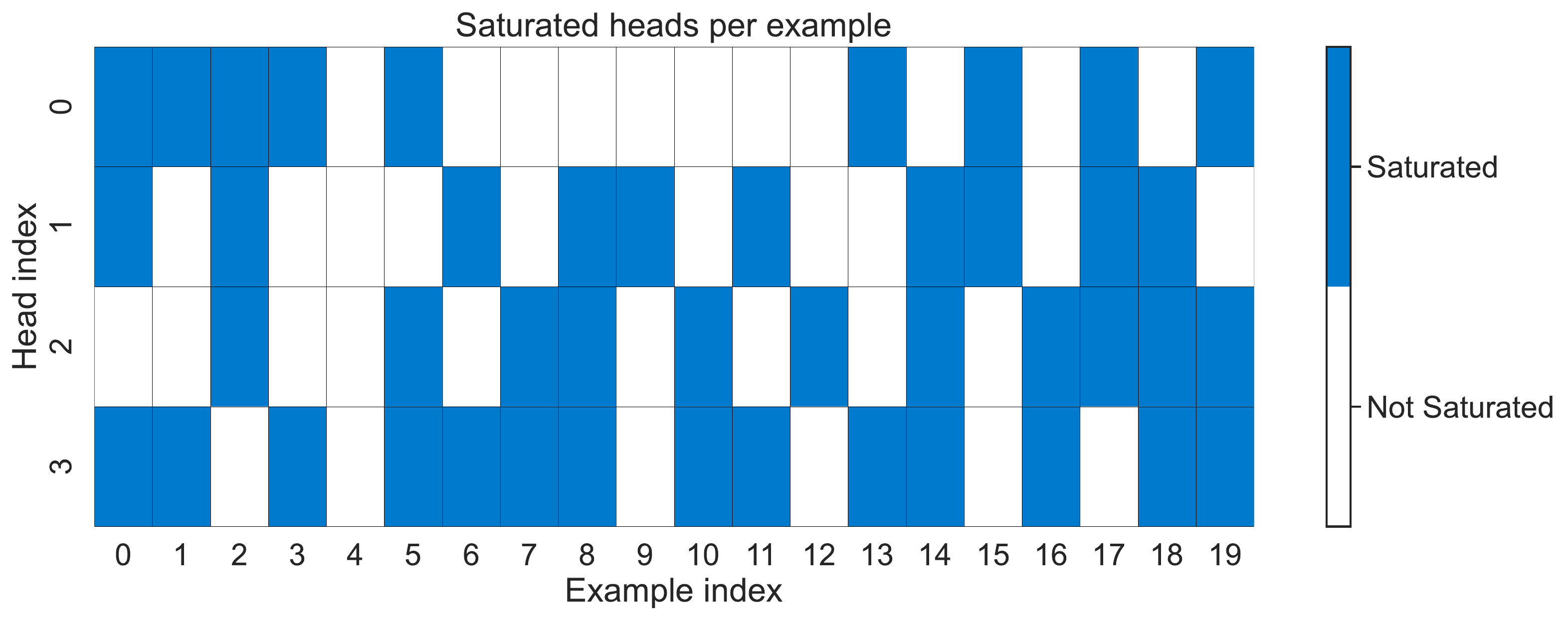}
        \caption{}
        \label{fig:sub_sat_heads_4}
    \end{subfigure}
    \begin{subfigure}[ht]{0.18\textwidth}
        \centering
        \includegraphics[width=\textwidth]{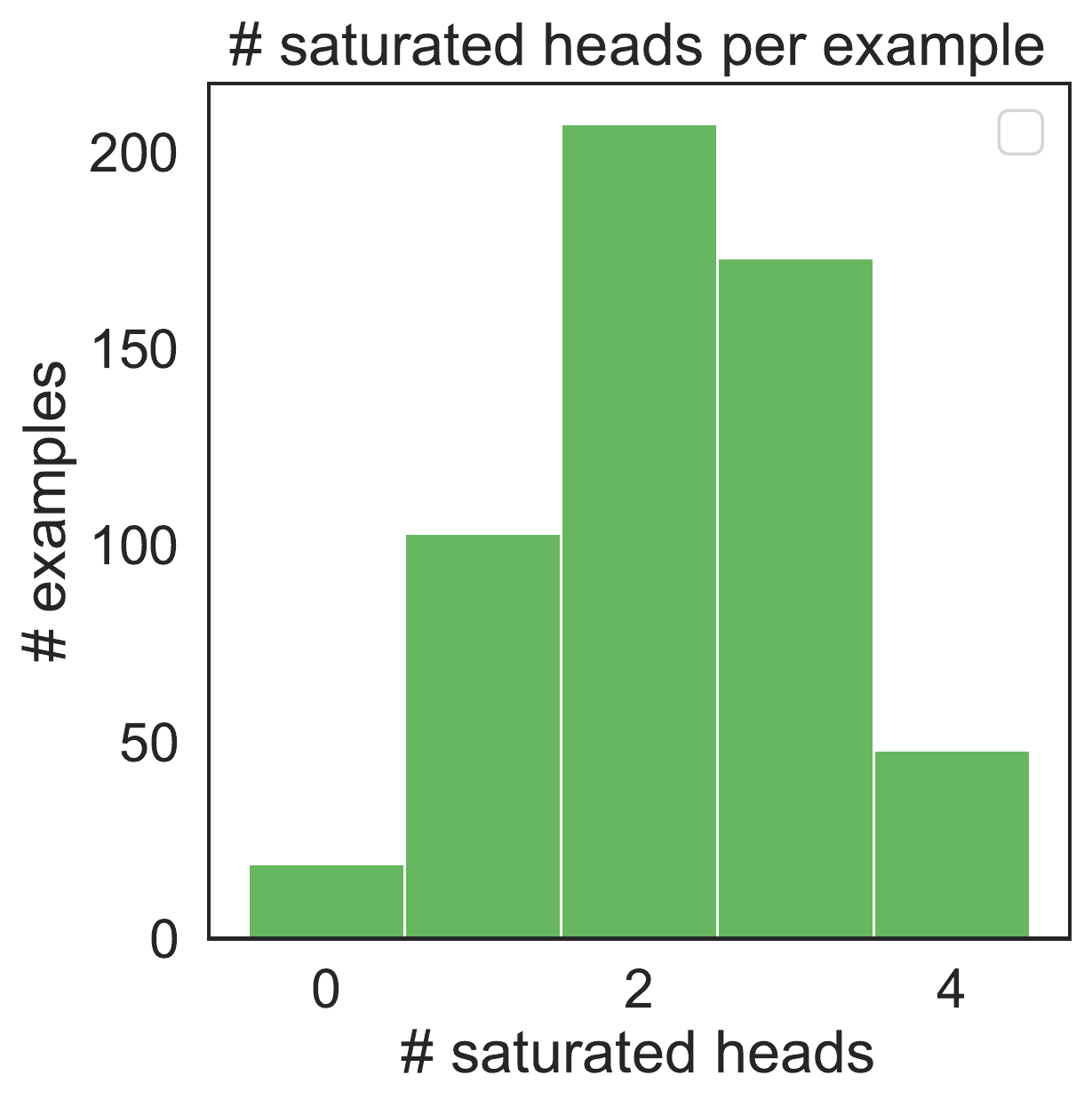}
        \caption{}
        \label{fig:sub_sat_heads_hist_4}
    \end{subfigure}
    \begin{subfigure}[ht]{0.27\textwidth}
        \centering
        \includegraphics[width=\textwidth]{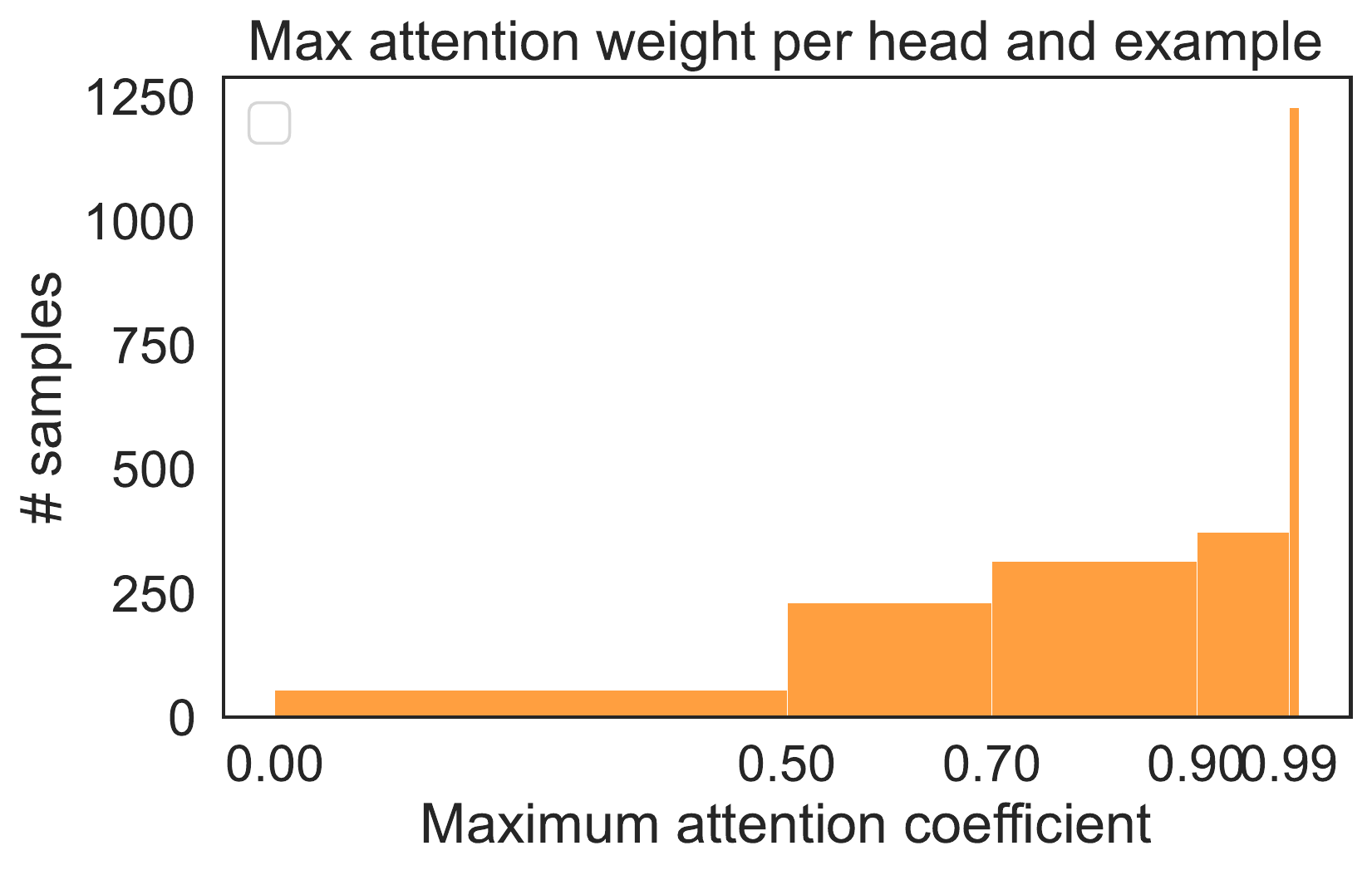}
        \caption{}
        \label{fig:sub_weight_hist_4}
    \end{subfigure}   
    \caption{Testing the saturation property of Softmax on synthetic data. On the top row, we have $H=8$, and on the bottom row $H=4$. (a) and (d) show whether softmax in each head is saturated for the first $20$ examples of the dataset, results are qualitatively similar for the rest of the examples. (b) and (e) show the histogram of the number of saturated heads for each example across the dataset ($H \times T$ in total). (c) and (f) show the histogram of the maximum softmax coefficient for each head and example across all the datasets ($H \times T$ in total). All the figures suggest that with perfect memorization, the majority of heads become saturated across the examples, with almost all examples having at least one non-saturated head.}
    \label{fig:softmax_saturation}
\end{figure}

The results are depicted in Figure \ref{fig:softmax_saturation}. Interestingly, a significant number of heads are saturated for the majority of examples, while almost all examples have at least one corresponding non-saturated head. This is in line with the intuition we provided in the proof of Theorem \ref{thm:memory-cap}, where our proof relied on designating a set of examples to one head and saturating that head for the rest of the examples.

\section{Upper bound on the rank of matrix Z with stronger assumptions} \label{apx:upper_bound_gen_pos}
In Proposition \ref{prop:rank_upperbound_same_ctx} we provided an upper bound on the rank of $\Zb$, which matches the lower bound of Proposition \ref{prop:rank_z}. One might wonder whether under stronger assumptions (and perhaps far from reality), a full-rank $\Zb$ with rank $Hd$ is achievable or not. 
Here, we prove that even the "General Position" on \emph{all} tokens assumption is not enough, and there exists a very simple assumption that limits the rank of $\Zb$ to $H(d-1) + 1$.

\begin{proposition} \label{prop:rank_upperbound_ball}
    Assume that we have a multi-head attention layer $\mathcal{A}$ with $H$ heads, embedding dimensions $\de = \dhead = d$, and  $\dv=1$. Let $\Tcal = \left\{\left(\Eb^\ts, \e^\ts, \y^\ts \right)\right\}_{t=1}^{T}$ be a training set with context size $n < d$. If all context vectors satisfy $\ones^\top\e_i^{(t)} = C$ for all $t \in [T],~i \in [n]$ and some universal scalar $C \in \Rbb$, then for any set of weights $\{(\WK_h, \WQ_h)\}_{h=1}^{H}$, we have
    \[
    \rank(\Zb) \leq H(d-1) + 1.
    \]
\end{proposition}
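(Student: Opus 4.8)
The plan is to mirror, essentially verbatim, the argument used for Proposition \ref{prop:rank_upperbound_same_ctx}, but with the affine constraint $\ones^\top\e_i^{(t)}=C$ playing the role there played by the rows-sum-to-one property of the softmax coefficients. The key observation is that the hyperplane constraint on the context tokens propagates through the attention mechanism: because each $\z_h^{(t)}=\Eb^{(t)^\top}\thetab_h^{(t)}$ is a \emph{convex} combination of the context vectors of example $t$ (since $\thetab_h^{(t)}=\sft(\alphab_h^{(t)})$ lies in the probability simplex), we immediately get $\ones^\top\z_h^{(t)}=\sum_{i\in[n]}[\thetab_h^{(t)}]_i\,\ones^\top\e_i^{(t)}=C$ for every head $h$ and example $t$. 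So all the intermediate representations also lie on a common affine hyperplane, and a single rank-deficiency is shared across all heads.

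First I would fix notation by writing $\Zb=[\Zb_1,\Zb_2,\dots,\Zb_H]$ where $\Zb_h\in\Rbb^{T\times d}$ is the block of columns belonging to head $h$ (its $t$-th row being $\z_h^{(t)^\top}$). The observation above reads $\Zb_h\ones_d=C\ones_T$ for every $h\in[H]$. Next, as in Proposition \ref{prop:rank_upperbound_same_ctx}, I would apply column operations (which preserve rank) to subtract the first block from the others: from $(\Zb_h-\Zb_1)\ones_d=\zeros$ we see $\ones_d$ lies in the kernel of $\Zb_h-\Zb_1$, hence $\rank(\Zb_h-\Zb_1)\le d-1$ for each $h\ge 2$. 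Subadditivity of rank under column concatenation then gives
\[
\rank(\Zb)=\rank\!\left(\big[\Zb_1,\ \Zb_2-\Zb_1,\ \dots,\ \Zb_H-\Zb_1\big]\right)\le \rank(\Zb_1)+\sum_{h=2}^{H}\rank(\Zb_h-\Zb_1)\le d+(H-1)(d-1)=H(d-1)+1,
\]
which is exactly the claimed bound.

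The one subtlety worth flagging — and the reason the reference-block trick is needed rather than a one-line argument — is that one cannot simply bound $\rank(\Zb_h)\le d-1$ directly: when $C\neq 0$ the affine hyperplane $\{x:\ones^\top x=C\}$ spans all of $\Rbb^d$, so each individual block $\Zb_h$ may well have full rank $\min(T,d)$. It is only after shifting by a common block $\Zb_1$ that the shared lost dimension is exposed, costing one rank per extra head. Beyond this, there is no real obstacle; the proof is a short, direct adaptation of the earlier proposition. I would close by noting that since $n<d$ the context tokens can be in General Position (indeed affinely independent) while still lying on such a hyperplane, so this establishes that even the strongest linear-independence hypothesis does not allow $\rank(\Zb)=Hd$.
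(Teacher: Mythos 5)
Your proposal is correct and follows essentially the same route as the paper: observe that $\ones^\top\z_h^\ts=C$ for every head and example, write $\Zb=[\Zb_1,\dots,\Zb_H]$, subtract the first block so that each $\Zb_h-\Zb_1$ annihilates $\ones_d$ and hence has rank at most $d-1$, and conclude by rank subadditivity that $\rank(\Zb)\le d+(H-1)(d-1)=H(d-1)+1$. Your additional remark explaining why the shift by the reference block $\Zb_1$ is needed (each individual $\Zb_h$ can be full rank when $C\neq 0$) is a nice clarification but does not change the argument.
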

Proposition \ref{prop:rank_upperbound_ball} gives an upper bound on the rank of $\Zb$ when individual elements of each context token sum up to a constant $C$. Note that all the context vectors can be in general position and still satisfy this condition. Moreover, this upper bound puts \emph{no assumption} on the query vectors. Hence even with the strongest assumptions, obtaining a full-rank matrix $\Zb$ is out of reach. However, whether stronger assumptions can maintain dependence on $d$ instead of $n$ (\ie, give $H(d-1)+1$ memorization lower bound instead of $H(\ndh-1)+1$) is still an open question for future work.

\begin{proof}[Proof of Proposition \ref{prop:rank_upperbound_ball}]
First, note that since $\ones^\top \e_i^\ts = C$, and $\thetab_i^\ts$ are convex combinations we have:
\[
\ones^\top \z_h^\ts = \ones^\top \Eb{\ts^\top} \thetab_h^\ts = C \ones^\top\thetab_h^\ts = C \quad \text{for all } t \in [T].
\]
With some abuse of notation, let us write the feature matrix $\Zb \in \Rbb^{T \times Hd}$ as follows:
\[
\Zb = \begin{bmatrix}
    \Zb_1, & \Zb_2, & \dots, & \Zb_H \\
\end{bmatrix},
\]
where each $\Zb_h \in \Rbb^{T \times d}$ is a sub-block of $\Zb$ corresponding to the $h$-th head for all $h \in [H]$. Then, since each row of $\Zb_h$ (\ie, $\z_h^\ts$) sums up to $C$, with an argument similar to the proof of Proposition \ref{prop:rank_upperbound_same_ctx} we get:
\begin{align*}
\rank(\Zb) &= \rank \left( \begin{bmatrix}
    \Zb_1, & \Zb_2, & \Zb_3, & \dots, & \Zb_H
\end{bmatrix} \right) \\
&= \rank \left( \begin{bmatrix}
    \Zb_1, & \Zb_2 - \Zb_1, & \Zb_3 - \Zb_1, & \dots, & \Zb_H - \Zb_1
\end{bmatrix} \right) \\
& \leq
\rank(\Zb_1) + \rank(\Zb_2 - \Zb_1) + \dots + \rank(\Zb_H - \Zb_1) \\
& \leq d + (d-1) + (d-1) + \dots + (d-1) \\
&= H(d-1) +1.
\end{align*}
\end{proof}

\subsection{Synthetic experiments under general position assumption} \label{apx:non_shared_ctx_exps}
We also run synthetic experiments under the general position assumption on \emph{all} input tokens. We consider a similar setting to Section \ref{sec:exp_lin_trend}, but we also generate context vectors from a uniform distribution as well, \ie
\[
\x_1^\ts, \x_2^\ts, \dots, \x_n^\ts, \x^\ts \overset{\text{\iid}}{\sim} \Ucal(0, 1)^{64} \quad \text{for all } t \in [T].
\] This way, all tokens (\ie, context and query) inside, and across all examples would satisfy the general position assumption. Preliminary results in Figure \ref{fig:gen_pos} suggest that the linearity in $H$ still remains, while the linearity in $n$ is weakened. We conjecture that a bound with complexity $O(Hd)$ would hold, which could be interesting for future works. Yet, as we showed in Section \ref{sec:exp_test_asp}, from a practical point of view, the General Position assumption might be too strong to be realistic.

\begin{figure}[t]
    \centering
    \begin{subfigure}[ht]{0.49\textwidth}
        \centering
        \includegraphics[width=\textwidth]{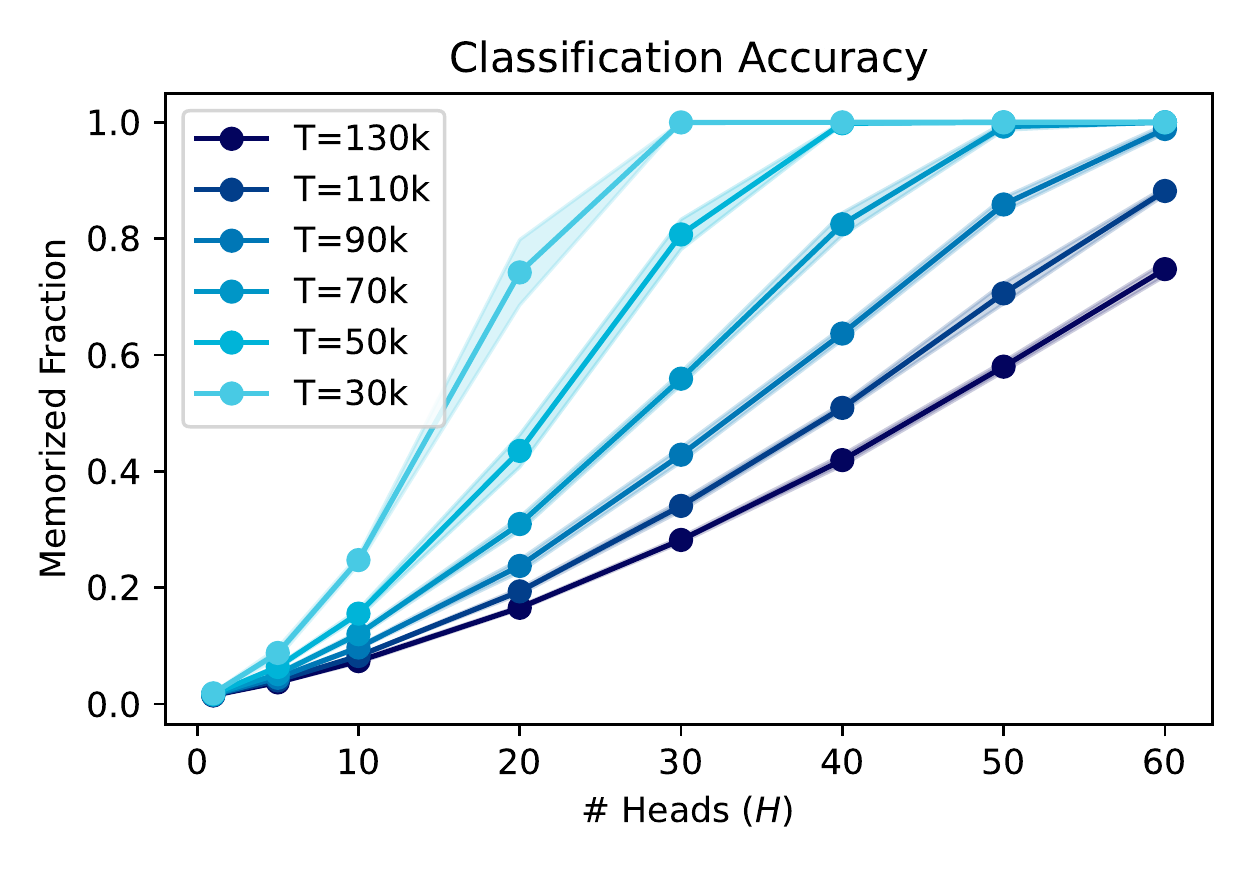}
        \caption{}
        \label{fig:sub_head_clf}
    \end{subfigure}
    \begin{subfigure}[ht]{0.49\textwidth}
        \centering
        \includegraphics[width=\textwidth]{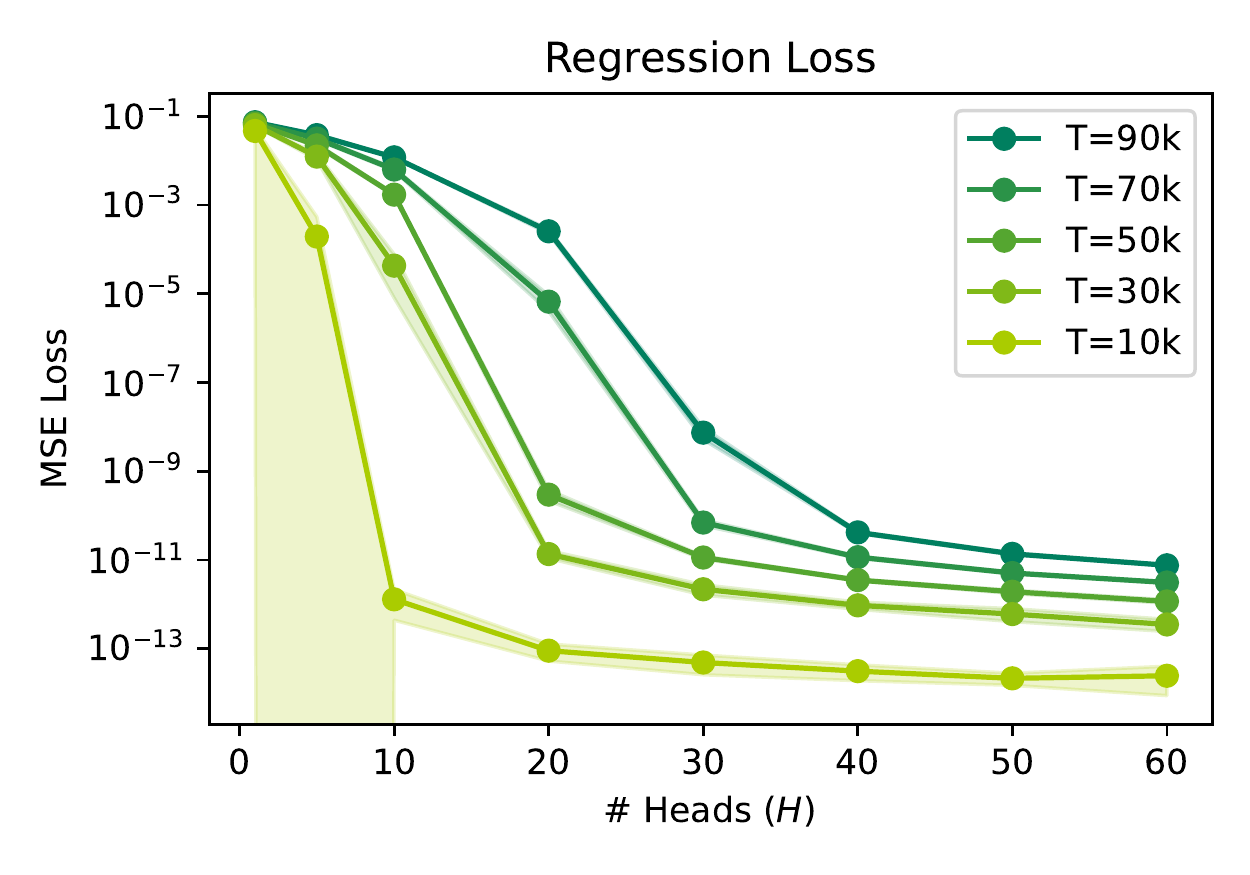}
        \caption{}
        \label{fig:sub_head_reg}
    \end{subfigure}
    \\
    \begin{subfigure}[ht]{0.49\textwidth}
        \centering
        \includegraphics[width=\textwidth]{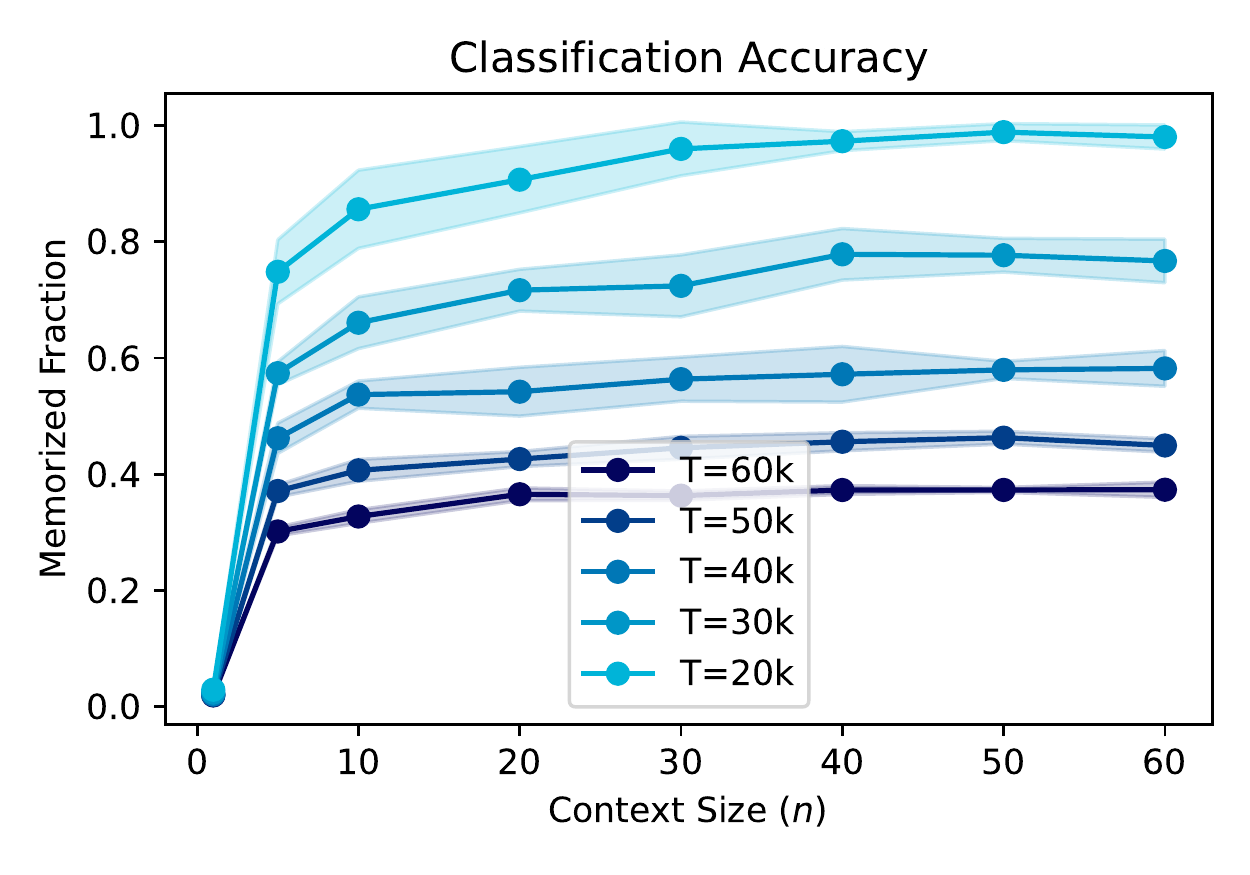}
        \caption{}
        \label{fig:sub_ctx_clf}
    \end{subfigure}
    \begin{subfigure}[ht]{0.49\textwidth}
        \centering
        \includegraphics[width=\textwidth]{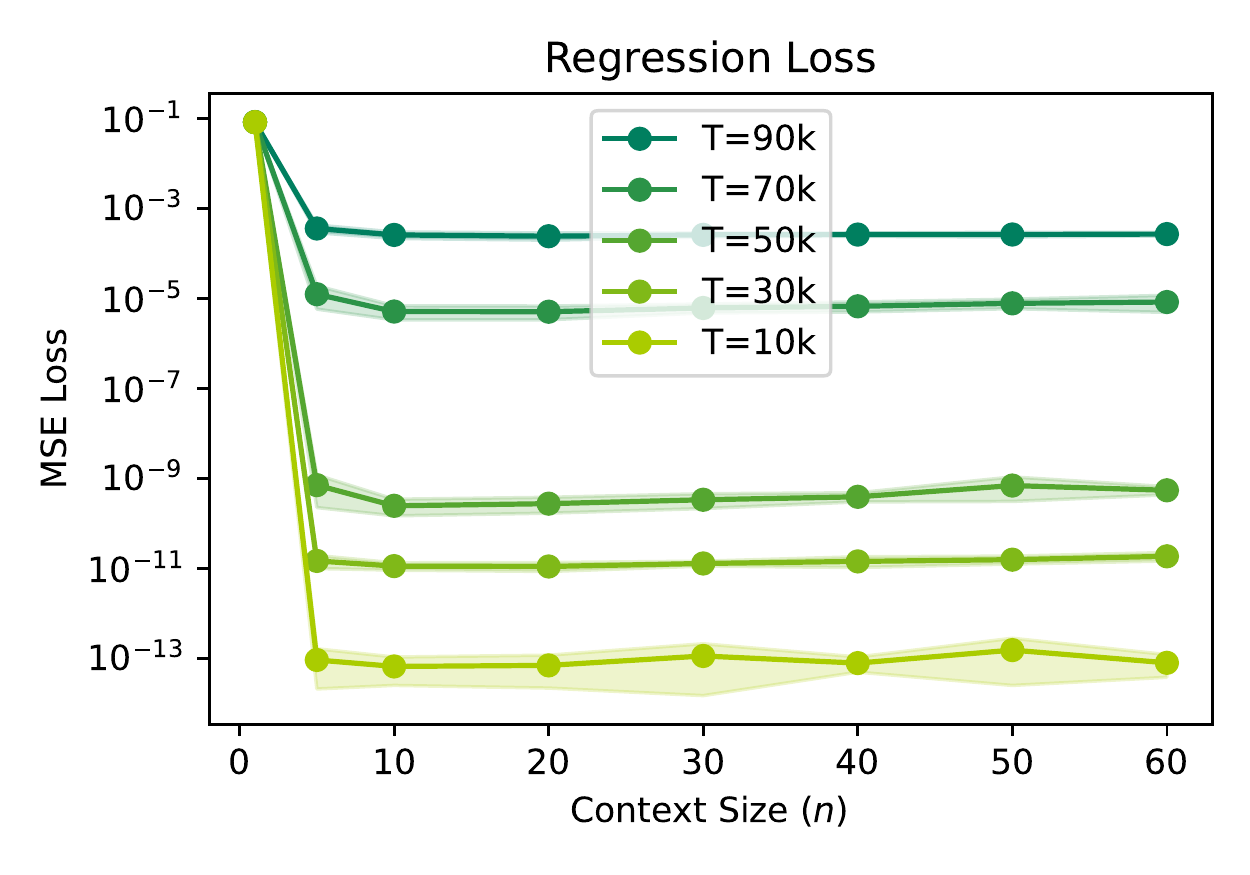}
        \caption{}
        \label{fig:sub_ctx_reg}
    \end{subfigure}
    \caption{Testing memorization under general position assumption for both context and query vectors. While the linearity in $H$ still remains, the linearity in $n$ no longer strictly holds, suggesting $H(d-1)+1$ to be a better alternative under general position assumption.}
    \label{fig:gen_pos}
\end{figure}

\section{Testing Assumptions on Natural Language Tasks}

We further test Assumptions \ref{asp:c_lin_indep} and \ref{asp:q_gen_pos} on BERT~\cite{Devlin2019BERTPO}, and GPT2~\cite{Radford2019LanguageMA} on the task of pre-training on Wikipedia~\cite{wikidump}. Following the setting introduced in Section \ref{sec:exp_test_asp}, we examine the embedding/first layer of trained/untrained GPT2/BERT.

{\setlength{\topsep}{0pt}
\begin{itemize}
\item
\textbf{BERT:}
We follow a setting similar to the pre-training of BERT. For each example, an input of the format \emph{``[CLS] Sentence1 [SEP] Sentence2''} is formed, where \emph{Sentence 1} and \emph{Sentence 2} are two pieces of text from the Wikipedia corpus. Here \emph{Sentence 2} is the sentence after \emph{Sentence 1} with probability $0.5$ and randomly chosen from the corpus with probability $0.5$. The \cls token's output predicts whether \emph{Sentence 2} follows the first sentence or not. We consider the \cls token as the query token $\e^\ts$, and the whole input as the context $\Eb^\ts$ for all $t \in [T]$. The positional encoding is sinusoidal positional encoding.

\item
\textbf{GPT2:} This model is trained autoregressively to predict the next token in each piece of text. We consider last-token prediction from the context. That is, we set the query $\e^\ts$ to be the last token of an example, and the context $\Eb^\ts$ to be the whole input example. In contrast to BERT, The positional encoding in this model is learned instead of fixed.
\end{itemize}}

\begin{table}[t]
\centering
\caption{Testing assumptions on Wikipedia for BERT and GPT2. L0 refers to the embedding layer, and L1 refers to the output of the first layer.}
\label{tab:nlp_assumptions}
\begin{tabular}{c|cccc|cccc}
\toprule
 & \multicolumn{4}{|c|}{GPT2} & \multicolumn{4}{c}{BERT} \\
Model & \multicolumn{2}{|c}{Random} & \multicolumn{2}{c|}{Trained} & \multicolumn{2}{c}{Random} & \multicolumn{2}{c}{Trained} \\
 & L0 & L1 & L0 & L1 & L0 & L1 & L0 & L1 \\
\midrule
General Position & $\times$ & $\times$ & $\times$ & $\times$ & $\times$ & $\times$ & $\times$ & $\times$\\
\midrule
Assumption \ref{asp:q_gen_pos} & $\times$ & $\checkmark$ & $\times$ & $\checkmark$ & $\times$ & $\times$ & $\times$ & $\times$\\
\midrule
Assumption \ref{asp:c_lin_indep} & $\checkmark$ & $\checkmark$ & $\checkmark$ & $\times$ & $\checkmark$ & $\checkmark$ & $\checkmark$ & $\checkmark$\\
\bottomrule
\end{tabular}
\end{table}

The results of testing Assumptions are shown in Table \ref{tab:nlp_assumptions}.
While linear independence assumptions are not well-suited for discrete data, we observe that most assumptions still hold. For Assumption \ref{asp:c_lin_indep}, the sinusoidal positional encodings in BERT preserve the linear independence for all cases. In GPT2, we observe that the learned positional encodings in pre-trained GPT2 have a lower rank and are no longer linearly independent. For Assumption \ref{asp:q_gen_pos}, the GPT2 model already satisfies it after one layer. In the case of BERT, the $n/d$ ratio is $512/1024$, and while the Kruskal Rank of query vectors is below 512 (violation of Assumption \ref{asp:q_gen_pos}), we observe that the Kruskal Rank is high enough (291 for the case of trained and 349 for the case of random BERT). Hence, the result of Remark \ref{remark:asp_voilated} still yields an acceptable lower bound.

\newpage

\begin{figure}[t]
    \centering
    \begin{subfigure}[ht]{0.49\textwidth}
        \centering
        \includegraphics[width=\textwidth]{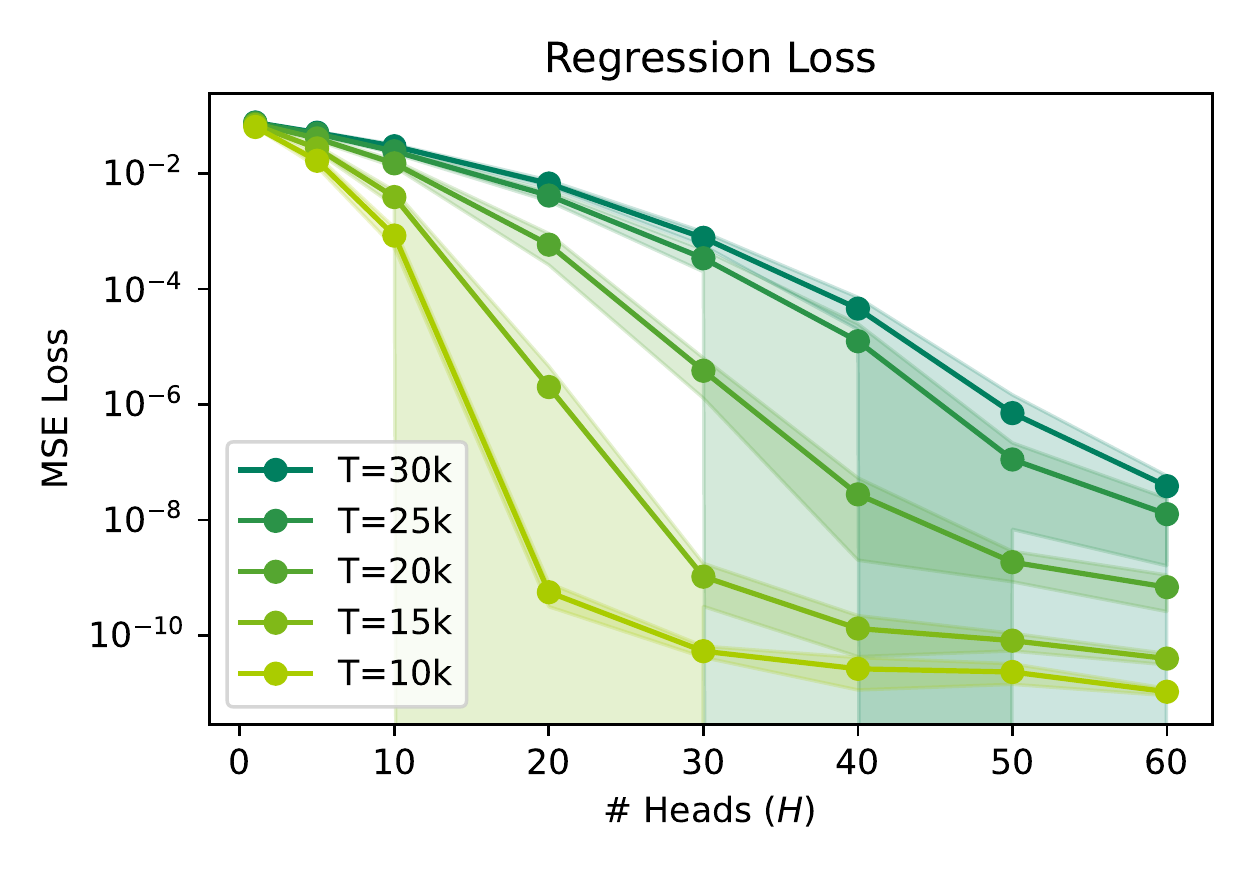}
        \caption{}
        \label{fig:shared_reg_head}
    \end{subfigure}
    \begin{subfigure}[ht]{0.49\textwidth}
        \centering
        \includegraphics[width=\textwidth]{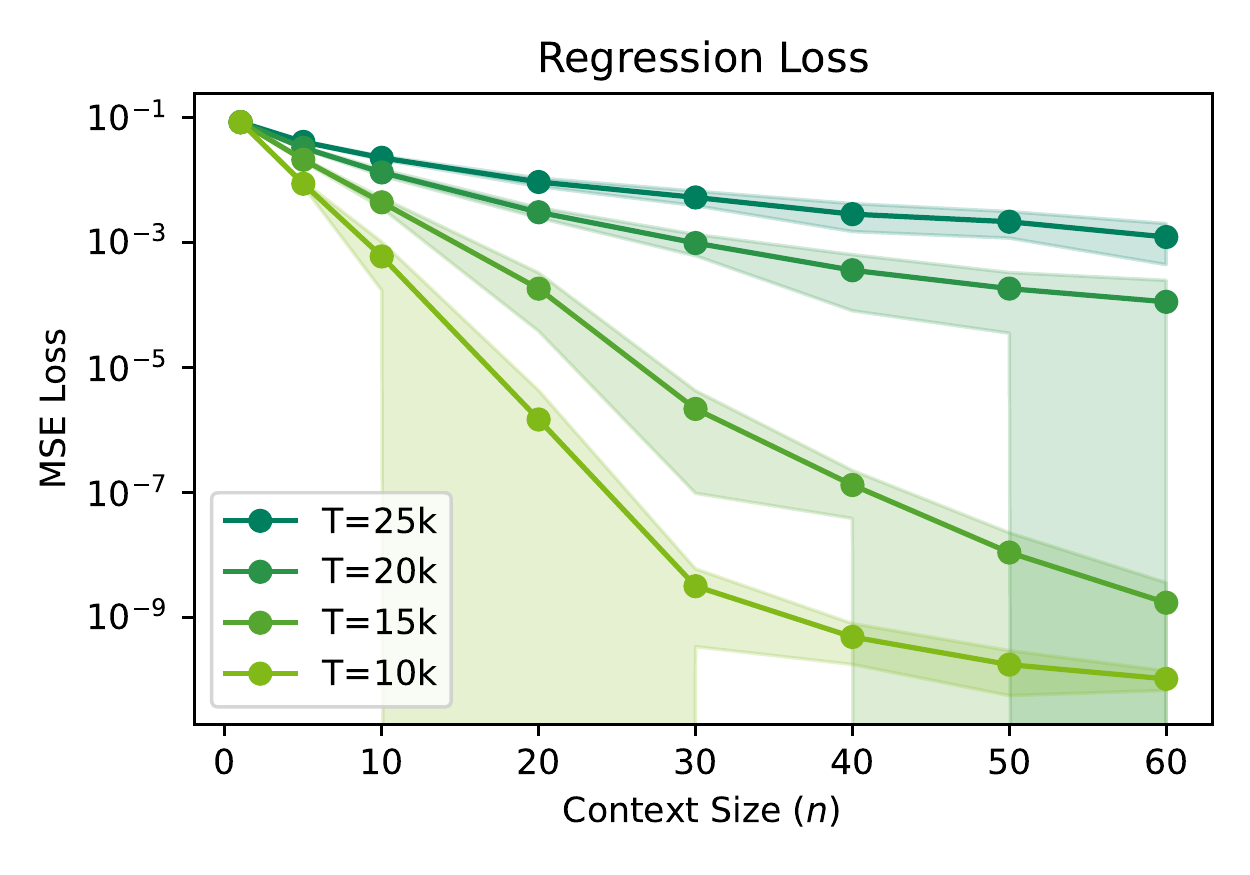}
        \caption{}
        \label{fig:shared_reg_ctx}
    \end{subfigure}
    \caption{Testing memorization on regression task with a similar setting as Figure \ref{fig:share_ctx_clf_plots}, except for real-valued labels. The same results and conclusions in terms of monotonicity with $H$ and $n$ hold true for regression. The variance in regression plots is magnified in some cases due to the log scale of the plot.}
    \label{fig:share_ctx_regression_plots}
\end{figure}

\end{document}